%%%%%%%% ICML 2024 EXAMPLE LATEX SUBMISSION FILE %%%%%%%%%%%%%%%%%

\documentclass{article}

% Recommended, but optional, packages for figures and better typesetting:
% \usepackage[colorlinks,citecolor=blue,urlcolor=blue,bookmarks=false,hypertexnames=true,linkcolor=blue]{hyperref}
\usepackage{microtype}
\usepackage{graphicx}
\usepackage{subcaption}
\usepackage{booktabs} % for professional tables
\usepackage{lipsum}  
\usepackage{enumitem}
\usepackage{subcaption}
% hyperref makes hyperlinks in the resulting PDF.
% If your build breaks (sometimes temporarily if a hyperlink spans a page)
% please comment out the following usepackage line and replace
% \usepackage{icml2024} with \usepackage[nohyperref]{icml2024} above.

% Attempt to make hyperref and algorithmic work together better:

% Use the following line for the initial blind version submitted for review:

% \usepackage{icml2024}

%% for ArXiv
 \usepackage[accepted, nohyperref]{icml-arXiv}

\usepackage{hyperref}

% If accepted, instead use the following line for the camera-ready submission:
% \usepackage[accepted]{icml2024}

% For theorems and such
\usepackage{amsmath}
\usepackage{amssymb}
\usepackage{mathtools}
\usepackage{amsthm}

% if you use cleveref..
\usepackage[capitalize,noabbrev]{cleveref}
\usepackage{mdframed}

\newcommand{\df}{\mathrm{df}}
\newcommand{\trace}{\operatorname{tr}}
% \newcommand{\qed}{$\Box$}

%%%%%%%%%%%%%%%%%%%%%%%%%%%%%%%%
% THEOREMS
%%%%%%%%%%%%%%%%%%%%%%%%%%%%%%%%
\theoremstyle{plain}
\newtheorem{theorem}{Theorem}[section]
\newtheorem{proposition}[theorem]{Proposition}
\newtheorem{lemma}[theorem]{Lemma}
\newtheorem{corollary}[theorem]{Corollary}

\newtheorem{remark}[theorem]{Remark}

% Todonotes is useful during development; simply uncomment the next line
%    and comment out the line below the next line to turn off comments
%\usepackage[disable,textsize=tiny]{todonotes}
\usepackage[textsize=tiny]{todonotes}

% The \icmltitle you define below is probably too long as a header.
% Therefore, a short form for the running title is supplied here:
\icmltitlerunning{Model Collapse Demystified: The Case of Regression}

\begin{document}

\twocolumn[
\icmltitle{Model Collapse Demystified: The Case of Regression}

% It is OKAY to include author information, even for blind
% submissions: the style file will automatically remove it for you
% unless you've provided the [accepted] option to the icml2024
% package.

% List of affiliations: The first argument should be a (short)
% identifier you will use later to specify author affiliations
% Academic affiliations should list Department, University, City, Region, Country
% Industry affiliations should list Company, City, Region, Country

% You can specify symbols, otherwise they are numbered in order.
% Ideally, you should not use this facility. Affiliations will be numbered
% in order of appearance and this is the preferred way.
%\icmlsetsymbol{equal}{*}

\begin{icmlauthorlist}
\icmlauthor{Elvis Dohmatob}{fair}
\icmlauthor{Yunzhen Feng}{cds}
\icmlauthor{Julia Kempe}{cds,courant,fair}
%\icmlauthor{Firstname6 Lastname6}{sch,yyy,comp}
%\icmlauthor{Firstname7 Lastname7}{comp}
%\icmlauthor{}{sch}
%\icmlauthor{Firstname8 Lastname8}{sch}
%\icmlauthor{Firstname8 Lastname8}{yyy,comp}
%\icmlauthor{}{sch}
%\icmlauthor{}{sch}
\end{icmlauthorlist}

\icmlaffiliation{cds}{Center for Data Science, New York University}
\icmlaffiliation{courant}{Courant Institute, New York University}
\icmlaffiliation{fair}{Meta FAIR}

%\icmlcorrespondingauthor{Elvis Dohmatob}{dohmatob@meta.com}
\icmlcorrespondingauthor{Elvis Dohmatob}{dohmatob@meta.com}

% You may provide any keywords that you
% find helpful for describing your paper; these are used to populate
% the "keywords" metadata in the PDF but will not be shown in the document
\icmlkeywords{Machine Learning, ICML}

\vskip 0.3in
]

\printAffiliationsAndNotice{} 

\setcounter{tocdepth}{0} %% uncomment line for non-verbose ToC

\begin{abstract}
In the era of proliferation of large language and image generation models, the phenomenon of "model collapse" refers to the situation whereby as a model is trained recursively on data generated from previous generations of itself over time, its performance degrades until the model eventually becomes completely useless, i.e the model collapses. In this work, we study this phenomenon in the setting of high-dimensional regression and obtain analytic formulae which quantitatively outline this phenomenon in a broad range of regimes. In the special case of polynomial decaying spectral and source conditions, we obtain modified scaling laws which exhibit new crossover phenomena from fast to slow rates. We also propose a simple strategy based on adaptive regularization to mitigate model collapse. Our theoretical results are validated with experiments.
\end{abstract}

\section{Introduction}
\label{sec:intro}
\textit{Model collapse} describes the situation where the performance of large language models (LLMs) or large image generators degrade as more and more AI-generated data becomes present in their training dataset  \citep{shumailov2023curse}. Indeed, in the early stages of the generative AI evolution (e.g the ChatGPT-xyz series of models), there is emerging evidence suggesting that retraining a generative AI model on its own outputs can lead to various anomalies in the model's later outputs. 

This phenomenon has been particularly observed in LLMs, where retraining on their generated content introduces irreparable defects, resulting in what is known as ``model collapse", the production of nonsensical or gibberish output \citep{shumailov2023curse,bohacek2023nepotistically}. Though  several recent works demonstrate facets of this phenomenon {\em empirically} in various settings \citep{Hataya_2023_ICCV, martínez2023combining,martínez2023understanding,bohacek2023nepotistically,briesch2023large,guo2023curious}, a theoretical understanding is still missing.

In this work, we initiate a theoretical study of model collapse in the setting of high-dimensional supervised-learning with kernel regression. Kernel methods are popular in machine learning because, despite their simplicity, they define a framework powerful enough to exhibit non-linear features, while remaining in the convex optimization domain. While popular in their own right, kernel methods have made a recent spectacular comeback as proxies for neural networks in different regimes \citep{Belkin18understand}, for instance in the infinite-width limit \citep{Neal1996PriorsFI,NIPS1996_InfiniteNN,NEURIPS2018_Jacot,ICLR18_LeeNTK}  or in the lazy regime of training \citep{chizat}. 
\citet{Caponnetto2007OptimalRF} characterize the power-law generalization error of regularized least-squares kernel algorithms, assuming a power-decay spectrum of the kernel (capacity) and of the coefficients of the target function (source). The role of optimization can also be taken into account in this setting \citep{nitanda2021optimal}. Source and capacity power decay capture properties of the data and the model that give rise to power law scaling of test error in terms of data set size and model capacity, as empirically observed e.g.~in \citet{kaplan2020scaling,hoffmann2022trainingChinchilla}.   More recently, scaling laws have been shown for kernel models under the Gaussian design, e.g. in \citet{Spigler_2020,cui2021generalization,Cui_2022} for regression and \citet{Cui_2023} for classification. 
\citet{RahimiRecht2008,Rudy2017RF, maloney2022solvable} study scaling laws for regression in the random feature (kernel) model. 
In the nonparametric literature, for example \citet{SchmidtHieber2017scaling} and \citet{suzuki2018adaptivity} derived the test error scaling of deep neural network in fitting certain target functions and \citet{BordelonCP20spectrum} analyzed spectral dependence. 
%\bbb{These works leverage a power-law behavior in feature distribution to theoretically characterize the neural scaling law effect, which describes how the decrease in test loss can be modeled as a linear function of both data and model size, incorporating logarithmic scaling.}

%\bbb{These works leverage a power-law behavior in feature distribution to theoretically characterize the neural scaling law effect, which describes how the decrease in test loss can be modeled as a linear function of both data and model size, incorporating logarithmic scaling.}

\begin{figure*}[!tb]
    \centering
    \includegraphics[width=0.8\textwidth]{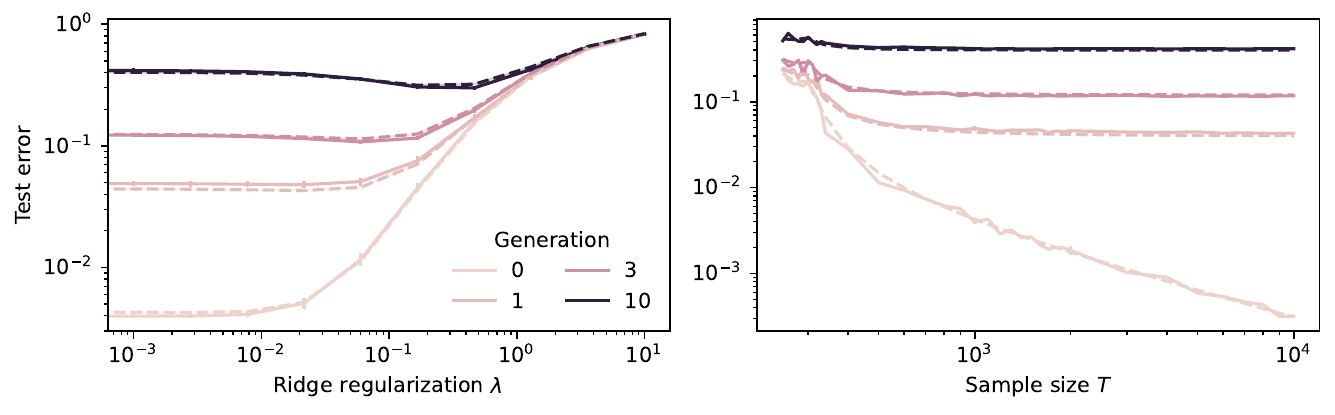}
    \vspace{-.25cm}
    \caption{\textbf{Demystifying model collapse in ridge regression (isotropic covariance spectrum).} We show the evolution of test error for different sample size ($T$), different levels of ridge-regularization ($\lambda$), and training data from different generations ($n$) of fake data. The setup is: input-dimension $d=300$, sample size for fake data generator $T_0=600$, noise levels $\sigma=0.1$ and $\sigma_0 = 0.2$. \textbf{Left plot} is for $T=1000$ and different values of $\lambda$. Notice the U-shape of the curves for large values of $n$, indicating the existence of a sweet spot (optimal regularization parameter).  \textbf{Right plot} is for $\lambda = 10^{-3}$ and different values of $T$. Error bars correspond to uncertainty induced by the data-generating process, over different runs. The broken lines correspond to the theoretical result established in Theorem \ref{thm:linreg}.}
    \label{fig:linreg}
    %\vspace{-0.4cm}
\end{figure*}

\begin{figure*}[!htb]
    \centering
    \includegraphics[width=0.8\textwidth]{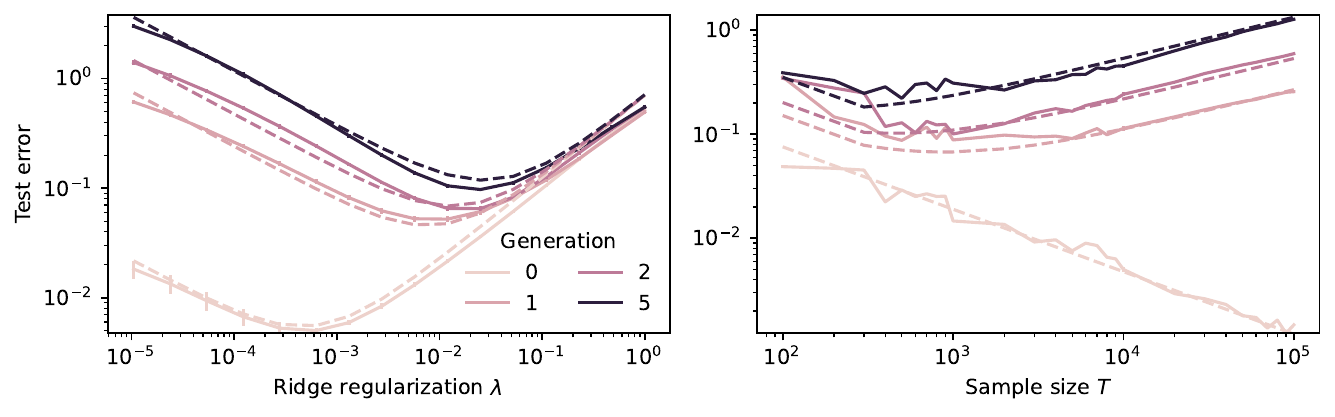}
    \vspace{-.25cm}
    \caption{\textbf{Demystifying model collapse in ridge regression (power-law covariance spectrum)}. The setup is: $d=300$, $T_0=600$, $\sigma=\sigma_0=1$, $\Sigma=\mathrm{diag}(\lambda_1,\ldots,\lambda_d)$, where $\lambda_k \propto k^{-2}$. \textbf{Left plot} corresponds to $T=10,000$ and \textbf{Right plot} corresponds to adaptive regularization $\lambda=T^{-\ell_{crit}}$, where $\lambda=\lambda(T)$ as proposed in \citet{Cui_2022}.  See Section \ref{sec:exp} for details. The broken curves are as predicted by our Theorem \ref{thm:darkseid}. Though $\ell=\ell_{crit}$ is optimal in classical case, it is not in the setup of model collapse. In fact here, the test error diverges with sample size $T$. Our theory proposes a corrected value of this exponent which gracefully adapts to synthesized data.}
    \label{fig:krreg}
    %\vspace{-0.25cm}
\end{figure*}

\begin{figure*}[!htb]
    \centering
    \begin{subfigure}{\textwidth}
        \centering
    \includegraphics[width=0.8\textwidth]{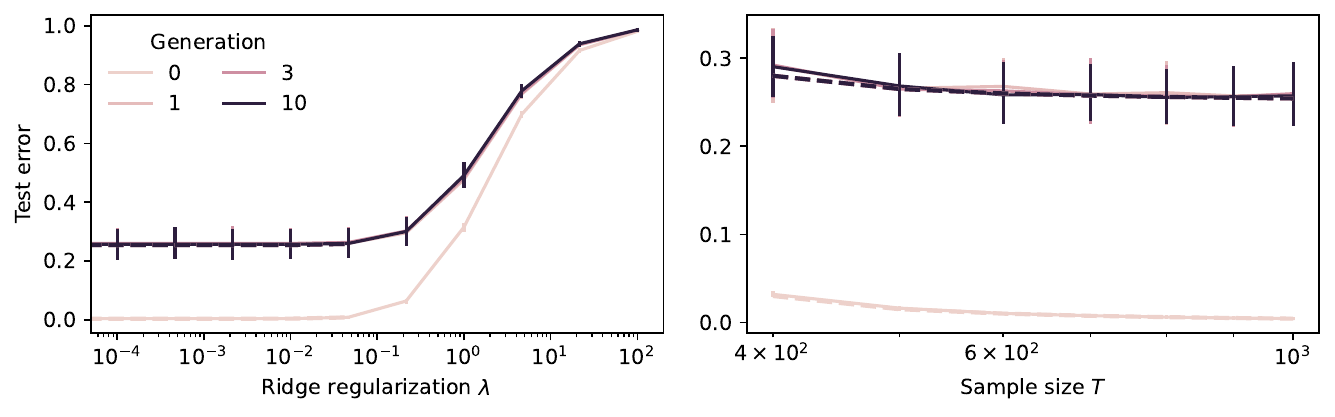}
    \caption{Identical intermediate design matrices $X_n=X_0$ for all $n \ge 1$.}
    \end{subfigure}
    \begin{subfigure}{\textwidth}
        \centering
    \includegraphics[width=0.8\textwidth]{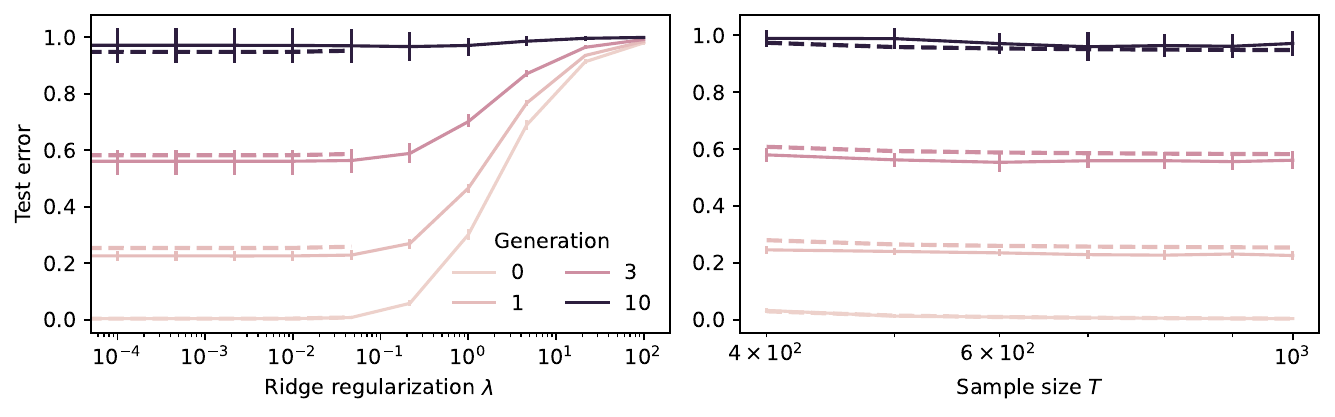}
    \caption{Independent intermediate design matrices $X_0,\ldots,X_n$.}
    \end{subfigure}
        \vspace{-.75cm}
    \caption{\textbf{Model collapse in the case of noiseless over-parametrized synthetic data generator.} Here $d=300$, the sample sizes for the different versions of the fake data generator are equal, i.e $T_n=T_0=d/2$ for all $n$, and noise levels are $\sigma_0=0$ and $\sigma=0.1$. Everything else is as in the setting of Figure \ref{fig:linreg}. Broken lines correspond to the theoretical estimates given in Theorem \ref{thm:rho}. As predicted by our theory, the test error of the model fitted on synthetic data ($n \ge 1$) increases (relative to the baseline $n=0$, corresponding to training on clean data). The model collapse here, even in the absence of noise ($\sigma_0=0$), is due to the fact that the synthetic data-generator does not have access to enough data to capture the true labelling function. \textbf{(a)} Importantly, and in accordance to our theory, the amount of model collapse in the case $X_n \equiv X_0$ is due to an increase in bias term of the test error of the model and does not depend on the number of generations $n$ as long as $n \ge 1$. \textbf{(b)} In contrast, for the case where the $X_n$'s are independent, the increase in bias term grows with $n$, leading to "catastrophic" model collapse (Theorem \ref{thm:cata}).}
    \label{fig:harmsway}
    \vspace{-0.5cm}
\end{figure*}

\paragraph{Summary of Main Contributions.} Following the rich tradition in prior works outlined above, we study the Gaussian design where the input $x$ is sampled from a multivariate zero-mean Gaussian and labels $y$ are determined by a linear ground truth function with independent label noise as $y=x^\top w_0+\epsilon$ (we present the linear regression setting for ease, the generalization to the kernel setting is straightforward). At each generation step, an approximation to $w_0$ is learned from the data, and used to generate new, ``fake"/synthetic labels for the next generation. 
Our main findings can be summarized as follows:

\textit{(1) Exact Characterization of Test Error under Iterative Retraining on Synthesized Data.} In Section \ref{sec:generalcov} (Theorem \ref{thm:generalcov}), we obtain analytic formulae for test error under the influence of training data with fake / synthesized labels. 
% \paragraph{Model Collapse over Multiple Generation.}
For $n$-fold iteration of data-generation, this formula writes
\begin{eqnarray}
\label{eq:cata}
E_{test} = E_{test}^{clean} + \textcolor{red}{\Delta Bias} + \textcolor{red}{n \times Scaling},
\end{eqnarray}
where $E_{test}^{clean}$ is the usual test error of the model trained on clean data (not AI-generated). The non-negative term \textcolor{red}{$Scaling$} precisely highlights the effects of all the relevant problems parameters like: the feature covariance matrix $\Sigma$, sample size $T$, label noise level in the clean data distribution $\sigma_0^2$, label noise level in the fake data distribution $\sigma^2$, etc. The nonnegative term \textcolor{red}{$\Delta Bias$}
is an increase in bias brought about by the iterative synthetic data generation process. This term disappears if each stage in the process was fitted on sufficiently many samples $T_0$ compared to the input dimension $d$ (i.e if $T_0 \ge d$). In the case where $T_0 < d$, this term is either a constant or an increasing function of $n$, depending on whether the design matrix stays the same or is resampled across different generations. Note that the machine learner has no control over the fake data generation process. It only sees data from a stage $n$ of this process, which is then used to fit a downstream predictor.

A direct consequence of \eqref{eq:cata} % of this successive degrations % multiplicative degradation
is that, as the number of generations $n$ becomes large, the effect of re-synthesizing will make learning impossible. We note that the multiplicative degradation in scaling with the number of generations $n$ is completely analogous to what has been shown in \cite{dohmatob2024tale} for infinite memory models and their variants and also empirically observed there. See Figures \ref{fig:linreg} and \ref{fig:harmsway} for an illustration.

%A direct consequence of this multiplicative degradation is that, over time (i.e as the number of generations becomes large), the effect of large language models (like ChatGPT) in the wild will be a pollution of the web to the extent that learning will be impossible. This will likely increase the value and cost of clean / non-AI-generated data. 

\textit{(2) Modified Scaling Laws.} 
Turning to the special case of power-law spectra of the covariance matrix, which allows to derive test-error scaling laws \citep{Caponnetto2007OptimalRF,Spigler_2020,Cui_2022,JustInterpolate}, we obtain in Section \ref{sec:power} (see Theorem \ref{thm:darkseid}) precise new scaling laws of the test error that quantitatively highlight the negative effect of training on synthetically generated data. 
%\bbb{To understand the phenomenon of model collapse within the scaling law regime, we study the power-law scaling of eigenspectra of the covariance matrix as in previous research \citep{Spigler_2020,Cui_2022,JustInterpolate}. In Section \ref{sec:power} (see Theorem \ref{thm:darkseid}), we introduce precise new scaling laws that quantitatively illuminate the negative impact of training with synthetic data.}
% \textit{(3) New Crossover Phenomena and Optimal Ridge.}

Further exploiting our analytic estimates, we obtain (Corollary \ref{cor:lopt}) the optimal ridge regularization parameter as a function of all the problem parameters (sample size, spectral exponents, strength of fake data-generator, etc.). This new regularization parameter corresponds to a correction of the the value proposed in the classical theory on clean data \citep{Cui_2022}, and highlights a novel crossover phenomenon where for an appropriate tuning of the regularization parameter, the effect of training on fake data is a degradation of the fast error rate in the noiseless regime \citep{Cui_2022,Caponnetto2007OptimalRF} 
to a much slower error rate which depends on the amount of true data on which the fake data-generator was trained in the first place. On the other hand, a choice of regularization which is optimal for the classical setting (training on real data), might lead to catastrophic failure: the test error diverges. See Figure \ref{fig:krreg} for an illustration.

% Taken together, our results suggest that the model collapse phenomenon \citet{shumailov2023curse} could simply be a change of a scaling law after all. \yunzhen{Why saying this?}

%\bbb{%Apart from the above contributions, we hope the arguments and techniques used to derive our results will find broader use in the community. 
% We have employed tools from Random Matrix Theory (RMT), pushing forward the envelope of prior work. We hope our technical contributions will find broader use in the community and lend themselves to generalization. }
%\ElvisIssue{Maybe reformulate.} \julia{Please do!}

%    \ElvisIssue{We should make it clear that we're not studying kernel methods for the sake of studying kernel methods, but as a proxy for study neural networks.} \julia{I don't know how much we want to melanger the power laws for kernels with just analysis of kernels. At the moment the reference does both.}

%\clearpage % XXX rm hack

\section{Review of Literature}
\label{sec:related}
\paragraph{Model Collapse.} Current LLMs \citep{devlin2018bert, liu2019roberta, NEURIPS2020_1457c0d6, touvron2023llama}, including GPT-4 \citep{achiam2023gpt}, were trained on predominantly human-generated text; similarly, diffusion models like DALL-E \citep{pmlr-v139-ramesh21a}, Stable
Diffusion \citep{Rombach_2022_CVPR}, Midjourney \citep{midjourney2023} are trained on web-scale image datasets. Their training corpora already potentially exhaust all the available clean data on the internet. A growing number of synthetic data generated with these increasingly popular models starts to populate the web, often indistinguishable from ''real" data. Recent works call attention to the potential dramatic deterioration in the resulting models, an effect referred to as {\em ''model collapse"} \citep{shumailov2023curse}. Several recent works demonstrate facets of this phenomenon {\em empirically} in various settings \citep{Hataya_2023_ICCV, martínez2023combining,martínez2023understanding,bohacek2023nepotistically,briesch2023large,guo2023curious}. Theoretically, a few works are emerging to analyze the effect of iterative training on self-generated (or mixed) data. \citet{Taoridatafeedback23} call attention to {\em bias} amplification of iterative ``data-feedback" loops, also observed in the recommendations systems literature. % where feedback loops create echo chambers, and can be viewed as one form of model collapse.
\citet{shumailov2023curse} attribute model collapse to two mechanisms: a finite sampling bias leading to more and more peaked distributions and function approximation errors, and analyze the (single) Gaussian case. In the context of vision models, \citet{alemohammad2023selfconsuming} analyze {\em ''self-consuming loops"} by introducing a sampling bias that narrows the variance of the data at each generation, and provide theoretical analysis for the case of a Gaussian.
\citet{bertrand2023stability} explore scenarios involving a mix of clean data, representative of the true distribution, and synthesized data from previous iterations of the generator. Their analysis reveals that if the data mix consists exclusively of synthesized data, the generative process is likely to degenerate over time, leading to what they describe as a 'clueless generator'. %Thus, the generator collapses: it progressively loses its ability to capture the essence of the data distribution it was intended to model. 
Conversely, they found that when the proportion of clean data in the mix is sufficiently high, the generator, under certain technical conditions, retains the capability to learn and accurately reflect the true data distribution. % This work sheds light on the critical role of data composition in the stability and effectiveness of generative models.
Note that such a compounding effect of synthesized data is already reminiscent of our decomposition \eqref{eq:cata}.
%We stress here that unlike \citet{bertrand2023stability}, we are interested here in supervised learning, where only the labels are synthesized.
% \yunzhen{Make it shorter? Are those gaussian paper that related? Add one sentences on, how our method is different and why those analysis are not enough.}

\paragraph{Self-Distillation.} Importantly, the fake data generation process which is responsible for model collapse should not be confused with self-distillation as formulated in \citet{bartlett2020neurips} for example. Unlike model collapse, the data generation process in self-distillation actually helps performance of the downstream model. The model has access to training labels from the true data distribution, but decides to fit a model on this data, and then use its outputs as the new labels, iterating this process possibly over severable steps. Thus, self-distillation has control over the data generating process, which is carefully optimized for the next stage training.  Specifically, \cite{bartlett2020neurips} study self-distillation in the same Gaussian regression model underlying our analysis, but in each distillation generation are able to tune the regularization parameter for downstream performance as a function of the original data labels (with the data being the same at each generation). In the setting of model collapse, there is no control over the data generation process, since it constitutes synthesized data which typically comes from the wide web.

\begin{figure*}[!th]
    \centering
    \includegraphics[width=1\linewidth]{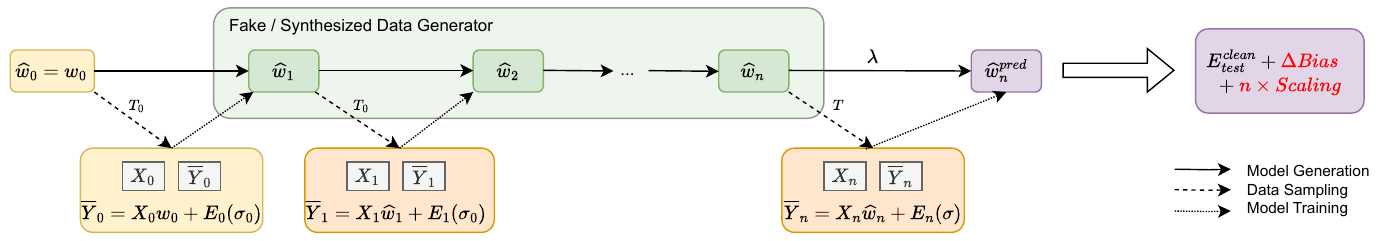}
%    \vspace{-10mm}
        \caption{Illustration of the theoretical framework. The process begins with the original model $\widehat{w}_0 (w_0)$ and the original dataset $(X_0, \overline{Y}_0)$. $n$ synthetic data generators $\widehat{w}_1$ to $\widehat{w}_n$ are iteratively fit on data labelled by the previous model with label noise $\sigma_0$, using $T_0$ samples each. We evaluate the test error of  $\widehat{w}_n^{pred}$ (with respect to the ground truth labels from $w_0$), which is trained on $(X,Y):=(X_n, \overline{Y}_n)$ using $T$ samples and a regularization coefficient $\lambda$.}
    \label{fig:teaser}
\end{figure*}
\paragraph{Kernel Ridge Regression with Gaussian Design.} This model has been studied by a vast body of works. For example, \citet{Richards2021,Hastie2019Surprises,bach2023highdimensional} analyze the classical bias-variance decomposition of the test error for ridge regression in the high dimensional setting where dataset size and dimension diverge proportionately, using tools from Random Matrix Theory (RMT). In Section \ref{sec:generalcov} we significantly extend this type of analysis to training on iteratively generated synthetic data. This model is also particularly attractive 
because it allows to analyze an important trade-off: the relative decay of the eigenvalues of the kernel ({\em capacity}) and the coefficients of the target function in feature space ({\em source}). Sizeable effort has been dedicated to characterize the influence on the decay rate of the test error as a function of these two relative decays (aka {\em power laws}) \citep{Caponnetto2007OptimalRF,PillaudVivien18,Berthier20,
Richards2021, Spigler_2020,Cui_2022,Cui_2023}. In Section \ref{sec:power} we extend these efforts, in particular based on works of \citet{cui2021generalization,Cui_2022} which has given a full characterization of all regimes and test error decay that can be observed at the interplay of noise and regularization, characterizing a crossover transition of rates in the noisy setting.   Our work uncovers fascinating new effects as a result of iterative training on synthetic data.

% \textcolor{blue}{\textit{\lipsum[12]}}

\section{Theoretical Setup}
\label{sec:setup}

We now present a setup which is simple enough to be analytically tractable, but rich enough to exhibit a wide range of regimes to illustrate a range of new phenomena that emerge with \textit{model collapse},  described in Section \ref{sec:intro} and Section \ref{sec:related}. Figure \ref{fig:teaser} provides a visual representation.

\textbf{Notations.}
This manuscript will make use of the following standard notations. The set of integers from $1$ through $d$ is denoted $[d]$. Given a variable $z$ (which can be the input dimension $d$ or the sample size $T$, etc.) the notation $f(z) \lesssim g(z)$ means that $f(z) \le C g(z)$ for sufficiently large $z$ and an absolute constant $C$, while $f(z) \asymp g(z)$ means $f(z) \lesssim g(z) \lesssim f(z)$. For example, $1+z^2 \asymp \max(1,z^2)$. Further, $f(z) \simeq g(z)$ means $f(z)=(1+o(1))g(z)$, where $o(1)$ stands for a quantity which tends to zero in the limit $z \to \infty$.
% $X^\dagger$ stands for the Moore-Penrose pseudo inverse of a matrix $X$.
We denote with $A^\dagger$ the Moore-Penrose pseudo-inverse any matrix $A$, and by $\|A\|_{op}$ is operator norm. Finally, we denote the trace of a square matrix $A$ by $\trace A$, while $\|u\|_A := \sqrt{u^\top A u}$ is the Mahalanobis norm induced by a positive-definite matrix $A$.

 %\bbb{% \paragraph{Parameters.} The following notation will be used regarding our problem setup. $d$ denotes the data dimension and $n$ the number of generations of synthetic data. $\Sigma$ and $\sigma$ pertain to the true data distribution given in (\ref{eq:regprob}. $T$ and $T_0$ denote dataset size, and, jointly with generation-dependent label noise $\sigma_m$, linear labellers $\hat{w}_m$ and regularization $\lambda_0$ are defined in (\ref{eq:gptchain}). }

\textbf{Data Distribution \& Fake Data-Generation Process.}
Consider the distribution $P_{\Sigma,w_0,\sigma^2}$ on $\mathbb R^d \times \mathbb R$ given by
\vspace{.2cm}
\begin{mdframed}
\bf{Data Distribution.}
\begin{eqnarray}
  \left.
    \begin{split}
    &\textbf{(Input) }x \sim N(0,\Sigma),\\
    &\textbf{(Noise) }\epsilon \sim N(0,\sigma^2),\text{ independent of }x\\
    &\textbf{(Output / Label) }y = x^\top  w_0 + \epsilon.
    \end{split}
\right\}
\label{eq:regprob}
\end{eqnarray}
\end{mdframed}
% \begin{mdframed}
%     \begin{eqnarray}
%     \begin{split}
%     &\textbf{(Input) }&x \sim N(0,\Sigma),\\
%     &\textbf{(Noise) }&\epsilon \sim N(0,\sigma^2),\text{ independent of }x\\
%     &\textbf{(Output / Label) }&y = x^\top  w_0 + \epsilon.
%     \end{split}
% \label{eq:regprob}
% \end{eqnarray}
% \end{mdframed}
% Thus, for any given input $x \in \mathbb R^d$, the target variable $y$ has Gaussian distribution with mean $x^\top w_0$ and variance $\sigma^2$.
The positive integer $d$ is the input-dimension, the vector $w_0 \in \mathbb R^d$ defines the ground-truth labelling function $x \mapsto x^\top w_0$, the matrix $\Sigma \in \mathbb R^{d \times d}$ is the covariance structure of the inputs. The scalar $\sigma^2$ is the level of label noise. To begin,  we consider the linear case for clarity. We shall discuss the kernel setting at the end of this section. Thus, in classical linear regression, we are given a sample $(X,Y) \equiv \{(x_1,y_1),\ldots,(x_T,y_T)\}$ of size $T$ from $P_{\Sigma,w_0,\sigma^2}$ and we seek a linear model $\widehat w \in \mathbb R^d$ with small test error % $E_{test}(\widehat w)$, defined by
\begin{eqnarray}
\label{eq:testerr}
\begin{aligned}
E_{test}(\widehat w) &:= \mathbb E_{\widehat w}\,\mathbb E_{x,y}[(x^\top \widehat w - y)^2] - \sigma^2\\
&= \mathbb E_{\widehat w}\,[\|\widehat w - w_0\|_\Sigma^2],
\end{aligned}
\end{eqnarray}
where $(x,y) \sim P_{\Sigma,w_0,\sigma^2}$ is a random clean test point.

% \textcolor{blue}{\textit{\lipsum[36]}}

In our setup for studying model collapse, the training data $(X,Y)$ is sampled from an iterative loop
% the design matrix $X \in \mathbb R^{T \times d}$ stays the same \julia{Is it? Or does it come from the *same* distribution?}, but the vector of labels $Y \in \mathbb R^T$ is generated by an iterative relabelling process,
where each generation of the model serves as the labeller for the data for the next generation.
This process is described below.

%\newpage

\begin{mdframed}
\textbf{Construction of Fake / Synthesized Data Generator.}
\begin{eqnarray}
\label{eq:gptchain}
    P_{\Sigma, w_0,\sigma_0^2} \to P_{\Sigma,\widehat w_1,\sigma_1^2} \to \ldots  \to  P_{\Sigma,\widehat w_n,\sigma_n^2 \to \ldots},
\end{eqnarray}
where the sequence of models $(\widehat w_n)_{n \in \mathbb N}$ is defined recursively by 
\begin{eqnarray}
\widehat w_n = \begin{cases}
w_0,&\mbox{ if }n = 0,\\
\operatorname{Fit}(X_{n-1}, \overline Y_{n-1}),&\mbox{ if }n \ge 1,
\end{cases}
\label{eq:fakew}
\end{eqnarray}
where $\overline Y_n := X_n\widehat w_n + E_{n}$ %\widehat Y_n(X_n)$, $\widehat Y_n(A) := A \widehat w_n + E_{n}$ 
and $\operatorname{Fit}(A,B) = \operatorname{OLS}(A,B) := A^\dagger B$ is  ordinary-least squares (OLS).
% In regression setting considered here,

The design matrices $(X_n)_{n \ge 0}$ are of shapes $T_n \times d$, each with iid rows from $N(0,\Sigma)$.

The sequence of noise vectors $(E_n)_{n \ge 0}$ forms an independent collection, which is independent of the $(X_n)_{n \ge 0}$ ; each $E_n=(\epsilon_{n,1},\ldots,\epsilon_{n,T_n})$ has length $T_n$ and iid components from $N(0,\sigma_n^2)$. %Each $X_n$ is of shape $\mathbb R^{T_n \times d}$ with iid rows from $N(0,\Sigma)$.*
This is summarized in Figure \ref{fig:teaser}.
\end{mdframed}
Thus, in summary, each $\widehat w_n$ results from fitting a model on a dataset of size $T_{n-1}$ from $P_{\Sigma,\widehat w_{n-1},\sigma_{n-1}^2}$, for every $n \ge 1$.
Importantly, the downstream model (introduced later) has no control over this process. It will only see training data from a given version $P_{\Sigma,\widehat w_n,\sigma_n^2}$, but will be tested on the true distribution $P_{\Sigma,w_0,\sigma_0^2}$.

\textbf{The Downstream Model: Ridge Regression.}
For a number of iterations $n \ge 0$, noise levels $\sigma_0$ and $\sigma$, dataset sizes $T_0$ and $T$, and regularization parameter $\lambda \ge 0$, let $\widehat w_n^{pred} = \widehat w^{pred}_{n,T_0,\sigma_0^2,T,\sigma,\lambda} \in \mathbb R^d$ be the ridge predictor constructed from an iid sample $\{(x_1,y_1),\ldots,(x_T,y_T)\}$ of size $T$ from the $n$-fold fake data distribution $P_{\Sigma,\widehat w_n,\sigma_n^2}$, where for ease of presentation of our results we will assume that
%\begin{mdframed}
    \begin{eqnarray}
    \left.
    \begin{split}
T_{n-1}&=\ldots=T_1=T_0, \quad T_n = T \\
\sigma_{n-1}&=\ldots=\sigma_1=\sigma_0,\quad \sigma_n = \sigma.
\end{split}
\right\}
\label{eq:identification}
\end{eqnarray}
%\end{mdframed}
 %$P_{\Sigma,\widehat w_n,\sigma^2}$.
%The aforementioned predictor is given by
 \begin{mdframed}
 {\bf Downstream Ridge Predictor.}
 
 For an $n$-fold fake data generator $P_{\Sigma,\widehat w_n,\sigma_n^2}$, we denote with $X:=X_n \in \mathbb R^{T \times d}$ the design matrix with iid rows from $N(0,\Sigma)$, with $E:=E_n \in \mathbb R^T$ the error vector with components in $N(0,\sigma_n^2)$, and $Y:=\overline Y_n =X\widehat w_n+E \in \mathbb R^T$ the labels generated by $P_{\Sigma,\widehat w_n,\sigma_n^2}$. Then
    \begin{eqnarray}
\label{eq:ridge}
    \widehat w_n^{pred} = \begin{cases}
    X^\dagger Y,&\mbox{ if }\lambda=0,\\
    R X^\top Y/T,
    &\mbox{ otherwise,}
    \end{cases}
\end{eqnarray}
where $$\widehat \Sigma := X^\top X /T \in \mathbb R^{d \times d}$$ is the sample covariance matrix, and $$R=R(\lambda) := (\widehat \Sigma + \lambda  I_d)^{-1}$$ denotes its resolvent.
\end{mdframed}

We are interested in the dynamics of the test error $E_{test}(\widehat w^{pred}_n)$ (according to formula \eqref{eq:testerr}) of this linear model. Importantly, the evaluation of the model is performed on the true data distribution $P_{\Sigma,w_0,\sigma_0^2}$, even though the model is trained on the fake data distribution $P_{\Sigma,\widehat w_n,\sigma^2}$. Note that for $n=0$, $E_{test}^{clean} := E_{test}(\widehat w_n^{pred})$ corresponds to the usual test error when the downstream model is trained on clean data.

%\yunzhen{I think here we have a problem on the subscript. Here, we are using $\widehat w_0^{pred}$ to denote the first downstream model trained on clean data. However, in the box of 'construction of Fake/synthesized data generator', we are using $\widehat w_1$ to denote the first distribution, it should be $\widehat w_0$. In Assumption 4.1, we are using $X_0$ to denote the original data? We could change Eqn(4) and use $X$ for the original data.} \julia{I need to think. You seem to say that instead of $\widehat w_0^{pred}$ we should call this $\widehat w_1^{pred}$ to have all indices aligned. But that gives other problems, I think. }

  The mental picture is as follows: each generation $\widehat w_n$ %($m \in \{1,2,\ldots,n\}$) 
  can be seen as a proxy for a specific version of ChatGPT, for example.
The sample size $T_0$ used to create the fake labelling functions $\widehat w_n$ is a proxy for the strength of the fake data-generator thus constructed. % Think of successive versions of ChatGPT. 
 Other works which have considered model collapse under such a self-looping training process include \citet{shumailov2023curse,alemohammad2023selfconsuming,bertrand2023stability,dohmatob2024tale}.

% To avoid pathological cases, we will always assume 
%\begin{eqnarray}
%T_0, T \ge d + 2.
%\end{eqnarray}

% This ensures that all the $\widehat w_n$'s and $\widehat w_n^{pred}$'s can be obtained via least squares.

\paragraph{A Note on Extension to Kernel Methods}
% \textcolor{red}{Reformulate as in \citet{SimonNTK2021}!}
Though we present our results in the case of linear regression in $\mathbb R^d$ for clarity, they can be rewritten in equivalent form in the kernel setting. Indeed, as in \citet{Caponnetto2007OptimalRF,SimonNTK2021,Cui_2022,JustInterpolate}, it suffices to replace $x$ with a feature map induced by a kernel $K$, namely $\psi(x) := K_x \in \mathcal H_K$.
% which is Gaussian (possibly infinite-dimensional) with covariance operator $\Sigma$.
Here, $\mathcal H_K$ is the reproducing kernel Hilbert space (RKHS) induced by $K$.
In the data distribution \eqref{eq:regprob}, we must now replace the Gaussian marginal distribution condition $x \sim N(0,\Sigma)$ with $\psi(x) \sim N(0,\Sigma)$. The ground-truth labeling linear function in \eqref{eq:regprob} is now just a general function $f_0 \in L^2$.
% , where $\mathcal H$ is a Hilbert space which now plays the role of $\mathbb R^d$, and into which the induced reproducing kernel Hilbert space (RKHS) $\mathcal H_K$ embeds continuously.
The ridge predictor \eqref{eq:ridge} is then given  by (\textit{Representer Theorem}) $\widehat f_n^{pred}(x) := K(X,x)^\top \widehat c_n$, with $\widehat c_n = (G + \lambda T I_d)^{-1}Y \in \mathbb R^n$,
where $K(X) := (K_{x_1},\ldots,K_{x_T})$, %$:\mathcal H_K \to \mathbb R^n
and $G = K(X)K(X)^\top = K(X,X) = (K(x_i,x_j))_{1 \le i,j\le T} \in \mathbb R^{n \times n}$ is the Gram matrix.
% Equivalently, we would replace $\widehat \Sigma$ with the empirical kernel integral-operator $K(X)^\top K(X)/T$ and $X^\top Y/T$ by $K(X)^\top Y/T$ in \eqref{eq:ridge}.

\section{Exact Test Error Characterization}
\label{sec:generalcov}
In this section we establish generic analytic formulae for the test error of the downstream model $\widehat w_n^{pred}$ \eqref{eq:ridge} trained on $n$-fold fake data-generation as outlined in Section \ref{sec:setup}. All proofs of this section are relegated to Appendix \ref{app:A}.

%\begin{assumption}
%In all our results, when $T_0 < d + 2$, we will assume for ease of presentation that in the fake data-generation process (\ref{eq:gptchain}), $X_n=X_0$ for all $n \ge 0$ \bbb{(but we impose no such equality on the final test distribution $X$ in (\ref{eq:ridge}))}. When $T_0 \ge d+2$, we do not make such an assumption.
%\end{assumption}

% We now consider general covariance matrix $\Sigma$ for the features.

\subsection{Warm-up: Unregularized Case}
For a start, let us first consider the case of ridgeless regression (corresponding to $\lambda=0$ in Equation \eqref{eq:ridge}), which amounts to OLS.
\begin{theorem}
For an $n$-fold fake data generation process with $T_0 \ge d+2$ samples, the test error for the linear predictor $\widehat w_n^{pred}$ given in \eqref{eq:ridge} learned on $T \ge d+2$ samples, with $\lambda=0$ (i.e unregularized), is given by
\begin{eqnarray}
E_{test}(\widehat w_n^{pred}) \simeq \frac{\sigma^2\phi}{1-\phi} + \textcolor{red}{\frac{n\sigma_0^2\phi_0}{1-\phi_0}},\label{eq:unreg}\\ \text{ with }\phi=\frac{d}{T}, \, \phi_0=\frac{d}{T_0}. \nonumber 
\end{eqnarray}
\vspace{-0.5cm}
% where $\phi := d/T$ and $\phi_0 := d/T_0$.
\label{thm:linreg}
\end{theorem}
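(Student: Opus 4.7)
The plan is to exploit the self-consistency of OLS: whenever $Y = Xw + E$ with $X$ of full column rank, the unregularized fit satisfies $X^\dagger Y = w + X^\dagger E$ exactly. The assumption $T, T_0 \ge d+2$ ensures that every design matrix in the chain \eqref{eq:gptchain} has full column rank almost surely, so I obtain the two clean identities
\begin{equation*}
\widehat w_n^{pred} - w_0 = (\widehat w_n - w_0) + X^\dagger E, \qquad \widehat w_k - w_0 = (\widehat w_{k-1} - w_0) + X_{k-1}^\dagger E_{k-1},
\end{equation*}
for $k=1,\ldots,n$, with $\widehat w_0 = w_0$ by convention, following directly from \eqref{eq:fakew}.

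\textbf{Decoupling and recursion.} Because $E$ is independent of the pair $(X,\widehat w_n)$ and is conditionally zero-mean given $X$, expanding the squared $\Sigma$-norm kills the cross term, leaving
\begin{equation*}
E_{test}(\widehat w_n^{pred}) = \mathbb E\,\|\widehat w_n - w_0\|_\Sigma^2 + \mathbb E\,\|X^\dagger E\|_\Sigma^2.
\end{equation*}
The same decoupling, applied inductively along the second identity above (each $(X_{k-1},E_{k-1})$ is independent of $\widehat w_{k-1}$, and $E_{k-1}$ is zero-mean), yields
\begin{equation*}
\mathbb E\,\|\widehat w_n - w_0\|_\Sigma^2 = \sum_{k=1}^n \mathbb E\,\|X_{k-1}^\dagger E_{k-1}\|_\Sigma^2.
\end{equation*}
Every remaining term has the same form: a pure OLS variance. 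Conditioning on the design and invoking the inverse-Wishart mean identity $\mathbb E[(X^\top X)^{-1}] = \Sigma^{-1}/(T-d-1)$ (valid exactly when $T \ge d+2$), I get $\mathbb E\|X^\dagger E\|_\Sigma^2 = \sigma^2\,\trace(\mathbb E[(X^\top X)^{-1}]\Sigma) = \sigma^2 d/(T-d-1) \simeq \sigma^2 \phi/(1-\phi)$. Repeating the same computation with $(T,\sigma)$ replaced by $(T_0,\sigma_0)$ gives $\sigma_0^2\phi_0/(1-\phi_0)$ per generation, and summing over $k=1,\ldots,n$ produces the second term of \eqref{eq:unreg}.

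\textbf{Main obstacle.} There is no substantive obstacle here: the whole argument rests on the observation that OLS absorbs its teacher exactly and adds an independent, mean-zero noise contribution, so that bias accumulates additively along the chain \eqref{eq:gptchain}. The hypothesis $T_0, T \ge d+2$ is used only to guarantee genuine left inverses (so that $X_{k-1}^\dagger X_{k-1} = I_d$ almost surely) and finiteness of the inverse-Wishart mean. The $\simeq$ in \eqref{eq:unreg} is the sole asymptotic approximation and comes from replacing $d/(T_*-d-1)$ by its leading-order value $\phi_*/(1-\phi_*)$ in the proportional regime $d/T_* \to \phi_* \in (0,1)$.
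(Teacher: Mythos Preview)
Your proof is correct and follows essentially the same route as the paper: both exploit that in the full-rank regime $X^\dagger(Xw+E)=w+X^\dagger E$, decouple the independent mean-zero increments $X_{k-1}^\dagger E_{k-1}$ and $X^\dagger E$, and finish with the inverse-Wishart identity $\mathbb E[(X^\top X)^{-1}]=\Sigma^{-1}/(T-d-1)$ (Lemma~\ref{lm:expinv}). The only cosmetic difference is that the paper frames the argument as an induction on $n$ whereas you unroll the recursion explicitly into a telescoping sum.
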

The first term  $E_{test}(\widehat w_0^{pred})\simeq \sigma^2\phi/(1-\phi)$ corresponds to the usual error when the downstream model is fitted on clean data (see \citet{Hastie2019Surprises}, for example). The additional term $n\sigma_0^2\phi_0/(1-\phi_0)$, proportional to the number of generations $n$, is responsible for model collapse.
% In the limit $\phi_0 \to 0$ (i.e the data-generator is not limited by finite samples per unit input dimension), the test error coincides with $\sigma^2 \phi/(1-\phi)$ which is the ordinary test error in the under-parametrized regime: model collapse disappears.

\begin{remark}
Note that the linear degeneration in test error highlighted by Equation \eqref{eq:unreg} is a direct consequence of using the same dataset size $T_0$ across the fake data generator.
Of course, if the underlying synthetic generating process has access to a larger data budget across generations,
this decay can be significantly alleviated. %\ElvisIssue{Saying "by allowing" gives the impression we control any of this. Instead say something like "if the underlying generating process is such that ..."} 
For instance, if fake data increases  gradually with the number of generations $m$ as $T_m=mT$ (and,  to simplify, $\sigma = \sigma_0$)  a trivial extension of Theorem \ref{thm:linreg} yields
$$
\begin{aligned}
    E_{test}(\widehat w_n^{pred}) & \simeq (1+\frac{1}{2} + \frac{1}{3} + \dots)E_{test}(\widehat w_0^{pred}) \\ & \simeq \log n \cdot E_{test}(\widehat w_0^{pred}),
\end{aligned}
$$
which will keep collapse at bay at the expense of largely increased training data. %At the extreme, we can allow for $n$-fold increase of fake data at the output; when $T_0=nT$,  \ElvisIssue{In fact the previous sentence "At the extreme..." is something between confusing and broken / incorrect. Maybe we should scrap it ? Maybe we want to say something like $T_n / T \to \infty$ (since this effectively kills the additional term). Anyways, I propose we scrap this ("At the extreme..."): combined with the other argument further below, I think the reader gets the point we are making.}
%we get 
%$
%E_{test}(\widehat w_n^{pred})  \simeq \frac{\sigma^2 d}{T} \simeq E_{test}(\widehat w_0^{pred})
%$. 
This does not avoid model collapse; rather, it trades additional data generation and training effort against deterioration from generations of fake data.
Thus, while for clean data increasing the dataset size n-fold leads to better scaling, with synthetic data, we not only forfeit this improvement but also experience a degeneration. Also, note that  we do not assume access to samples from any of the intermediate generation steps $\widehat w_0,\ldots,\widehat w_{n-1}$; we only train the downstream model $\widehat w_n^{pred}$ on data from the last step $\widehat w_n$. Using intermediate data could in principle, allow to effectively escape collapse, of course, again, at increased data budget.
\end{remark}

\paragraph{Low-Dimensional  Regime.} In the low-dimensional  regime (fixed $d$), Theorem \ref{thm:linreg} already predicts a change  of scaling  law   from  $E_{test}(\widehat w_0^{pred})  \asymp \sigma^2 T^{-1}$  to  $E_{test}(\widehat w_n^{pred}) \asymp \sigma^2 T^{-1} + n\sigma_0^2 T_0^{-1}$. Thus, as  the sample size $T$ is scaled up, the test error eventually plateaus at the value $n\sigma_0^2 T_0^{-1}$ and does not vanish. This phenomenon  is clearly visible  in Figure \ref{fig:linreg}. In Section \ref{sec:power}, we shall establish an analogous picture for high-dimensional regimes ($d \rightarrow \infty$).
% and both the spectrum of $\Sigma$ and coefficients of $w_0$ follow a power law.
%.\julia{Do we need this last sentence? Might be confusing.}

\paragraph{Mitigation via Regularization ?} Note that the test error of the null predictor
 $w_{null} = 0$ is $E_{test}(w_{null}) = \|w_0\|_\Sigma^2$, and so
\begin{eqnarray*}
    \frac{E_{test}(\widehat w_n^{pred})}{E_{test}(w_{null})} = \frac{1}{\text{SNR}}\frac{\phi}{1-\phi} + \frac{n}{\text{SNR}_0}\frac{\phi_0}{1-\phi_0},
\end{eqnarray*}
where $\text{SNR} := \|w_0\|_\Sigma^2/\sigma^2$ and $\text{SNR}_0 := \|w_0\|_\Sigma^2/\sigma_0^2$. We deduce that if $n \gg \mathrm{SNR}_0 / (1/\phi_0 - 1)$, then the learned model is already much worse than the null predictor! % \yunzhen{Maybe highlight it?}
This suggests that a good strategy for mitigating the negative effects on learning on AI-generated data is regularization at an appropriate level, as illustrated in Figure \ref{fig:lcrit-vs-lopt}. %\footnote{Because it shrinks the learned coefficients towards the null predictor.}. % This is indeed empirically confirmed in Figure \ref{fig:linreg}.
% \ElvisIssue{ (TODO) Theoretical analysis of the optimal level of ridge regularization. Standard Marchenko-Pastur theory should help here obtain a complete phase diagram.}

\subsection{Main Result I: A General Formula for Test Error}
We now consider the case of general ridge penalty $\lambda>0$,  and drop the requirements $T \ge d + 2$ and $T_0 \ge d + 2$.
% We still make the simplifying assumption that $T_0 \ge d + 2$, i.e each generation of the fake data-generator is based on enough examples from the previous generation.%, to allow for perfect memorization. 
% We have the following result.

Recall the definitions of $X, Y, E$ and the random matrices $R$ and $\widehat \Sigma$ appearing in \eqref{eq:ridge}. For later reference, define
\begin{align}
    Bias &:= \mathbb E\,\|\widehat \Sigma R w_0-w_0\|_\Sigma^2\label{eq:biasraw},\\
    Var &= \mathbb E\,\|RX^\top E/T\|_\Sigma^2 = \sigma^2\frac{1}{T}\trace \Sigma R^2\widehat\Sigma\label{eq:varraw}.
\end{align}
% \julia{$E$ is undefined here! What is $E$?}
These are respectively the bias and variance terms in the classical bias-variance decomposition of the test error
\begin{eqnarray}
\label{eq:raw}
E_{test}^{clean} := Bias + Var,
\end{eqnarray}
for standard ridge regression fitted on clean data from the true data distribution $P_{\Sigma,w_0,\sigma^2}$ (e.g., see \citet{Hastie2019Surprises}). % \julia{Do we want to add  reference(s)?}
\begin{theorem}
For an $n$-fold fake data generation process, the test error of a ridge predictor $\widehat w_n^{pred}$ based on a sample of size $T \ge 1$ with regularization parameter $\lambda$ is given by
    \begin{eqnarray}
\left.
\begin{aligned}
    E_{test}(\widehat w_n^{pred}) &= \textcolor{red}{\widetilde{Bias}} + Var + \textcolor{red}{n\sigma_0^2\rho}, \\
    \widetilde{Bias} &=  \mathbb E\,\|\widehat\Sigma R Q_{n-1} w_0 - w_0\|_\Sigma^2,\\
    % \Delta_1 &:= \mathbb E\,[\|\Delta w_0\|_\Sigma^2],\\
    % \Delta_2 &:= 2\mathbb E\, [\Delta w_0^\top \Sigma (I-S R)w_0],\\
    \rho &:= \frac{1}{n}\sum_{m=0}^{n-1}\mathbb E\,\trace C_{n-1,m} \widehat\Sigma R\Sigma R\widehat\Sigma,
\end{aligned}
\right\}
\label{eq:b2}
\end{eqnarray}
where $Var$ is as given in \eqref{eq:varraw} and  $C_{k,m}:=\overline Q_{k,m}\overline Q_{k,m}^\top$ for $\overline Q_{k,m} = Q_{k,m}X_m^\dagger$, $Q_{k,m} := P_kP_{k-1}\ldots P_m$, $Q_k := Q_{k,0}=P_k P_{k-1}\ldots P_0$, with $P_m=X_m^\dagger X_m$ being the orthogonal projection matrix onto the subspace of $\mathbb R^d$ spanned by the rows of $X_m$. %; $\Delta w_0 := P_0 w_0 -w_0\in \mathbb R^d$.

In particular, if $T_0 \ge d+2$ (under-parametrized data-generator),  then $\widetilde{Bias} = Bias$ as given in \eqref{eq:biasraw}, and 
\begin{eqnarray}
\left.
\begin{aligned}
E_{test}(\widehat w_n^{pred}) &\simeq E_{test}^{clean} + \textcolor{red}{n\sigma_0^2 \rho},\\
% E_{test}^{clean} &= Bias + Var,\\
    \rho &= \frac{1}{T_0-d-1}\mathbb E \trace \Sigma^{-1}\widehat \Sigma R\Sigma \widehat\Sigma R.
\end{aligned}
\right\}
\end{eqnarray}
\label{thm:generalcov}
\end{theorem}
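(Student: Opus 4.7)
The plan is to explicitly unroll the recursion for $\widehat w_n$, perform a conditional bias-variance decomposition that exploits independence of noise vectors across generations, and finally specialize to the case $T_0 \ge d+2$ via the inverse-Wishart mean identity. From the definition $\widehat w_m = X_{m-1}^\dagger(X_{m-1}\widehat w_{m-1} + E_{m-1}) = P_{m-1}\widehat w_{m-1} + X_{m-1}^\dagger E_{m-1}$, iterating from $\widehat w_0 = w_0$ and using the Moore-Penrose identity $P_m X_m^\dagger = X_m^\dagger$ to absorb adjacent projectors yields the clean expression
$$\widehat w_n = Q_{n-1} w_0 + \sum_{m=0}^{n-1} \overline Q_{n-1,m}\, E_m.$$
Plugging $Y = X\widehat w_n + E$ into $\widehat w_n^{pred} = RX^\top Y/T = \widehat\Sigma R\widehat w_n + RX^\top E/T$ and subtracting $w_0$ gives
$$\widehat w_n^{pred} - w_0 = \bigl(\widehat\Sigma R Q_{n-1} w_0 - w_0\bigr) + \sum_{m=0}^{n-1} \widehat\Sigma R\, \overline Q_{n-1,m}\, E_m + RX^\top E/T.$$

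Next I compute $E_{test}(\widehat w_n^{pred}) = \mathbb E\,\|\widehat w_n^{pred} - w_0\|_\Sigma^2$ by conditioning on the sigma-algebra generated by all the design matrices $X_0,\ldots,X_{n-1}, X = X_n$. Conditionally, the noise vectors $E_0, \ldots, E_{n-1}, E$ remain centered and mutually independent, with covariances $\sigma_0^2 I$ and $\sigma^2 I$ respectively by the identification \eqref{eq:identification}, so every cross-term in the expansion of the square vanishes. The three surviving diagonal contributions are exactly: (i) the bias-type term $\widetilde{Bias} = \mathbb E\,\|\widehat\Sigma R Q_{n-1} w_0 - w_0\|_\Sigma^2$; (ii) the standard variance $Var$ from the downstream noise $E$; and (iii) the sum $\sigma_0^2 \sum_{m=0}^{n-1} \mathbb E\,\trace\bigl(\widehat\Sigma R\, \overline Q_{n-1,m}\overline Q_{n-1,m}^\top\, R\widehat\Sigma\, \Sigma\bigr)$. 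Cycling the trace and using $[R,\widehat\Sigma]=0$, together with $\overline Q_{n-1,m}\overline Q_{n-1,m}^\top = C_{n-1,m}$, identifies (iii) with $n\sigma_0^2 \rho$, establishing \eqref{eq:b2}.

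For the second part, if $T_0 \ge d+2$, then each $X_m$ with $m \le n-1$ has full column rank almost surely, so $P_m = I_d$, giving $Q_{n-1} = I_d$ (hence $\widetilde{Bias} = Bias$) and $\overline Q_{n-1,m} = X_m^\dagger = (X_m^\top X_m)^{-1} X_m^\top$, so $C_{n-1,m} = (X_m^\top X_m)^{-1}$. Since $X_m^\top X_m \sim W_d(\Sigma, T_0)$, the inverse-Wishart mean formula yields $\mathbb E[(X_m^\top X_m)^{-1}] = \Sigma^{-1}/(T_0 - d - 1)$. Finally, $C_{n-1,m}$ depends only on $X_m$ and is therefore independent of $X = X_n$, so taking expectations over $X_m$ first factors the trace and produces the closed form for $\rho$ stated in the theorem, which is in particular independent of $m$. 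The main obstacle is the bookkeeping in the unrolling step and a clean argument that, once we condition on the joint design sigma-algebra, all noise cross-terms cancel; the Wishart computation in the last step is then standard.
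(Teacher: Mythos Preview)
Your proposal is correct and follows essentially the same route as the paper: unroll the recursion for $\widehat w_n$ via $P_m X_m^\dagger = X_m^\dagger$ to obtain the representation $\widehat w_n = Q_{n-1}w_0 + \sum_m \overline Q_{n-1,m}E_m$, decompose $\widehat w_n^{pred}-w_0$ into the three pieces, kill the cross-terms by independence and zero-mean of the noise vectors (your conditioning on the design sigma-algebra is a slightly cleaner way to phrase what the paper does), and then in the under-parametrized case invoke $P_m=I_d$ a.s.\ together with the inverse-Wishart identity $\mathbb E[(X_m^\top X_m)^{-1}]=\Sigma^{-1}/(T_0-d-1)$ and independence of $X_m$ from $X$ to factor the trace. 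No substantive differences.
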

In the second part of the theorem, the term $E_{test}^{clean}$ (introduced earlier in \eqref{eq:raw}) corresponds to the usual test error when the downstream model is trained on real (not fake) data, for which well-known formulae exist in a variety of scenarios (see Proposition \ref{prop:classicalrige}).
% Even in this special case, what is new is the term $n\sigma_0^2\rho$, where $n$ is the number of generations.

\begin{remark}
We shall later show in Theorem \ref{thm:rho} that $\widetilde{Bias} \ge Bias + \Delta Bias$, where $\Delta Bias \ge 0$ in the appropriate asymptotic limit, with equality if $T_0 \ge d+2$ (the {\em under-parametrized} regime). Thus, apart from the variance term, an over-parametrized ($T_0 < d +2$) synthetic data-generator harms the bias term of the test error of downstream models. In contrast, an under-parametrized synthetic data-generator ($T_0 \ge d+2$) only harms the variance. The increase in bias suffered in the over-parametrized regime will be precisely quantified in Section \ref{sec:cata}, and shown to be an increasing function of the number of generations $n$.
\end{remark}

The test error decomposition in Theorem \ref{thm:generalcov} is thus of the promised form \eqref{eq:cata}, with $Scaling = \sigma_0^2\rho$. This additional term means that there is competition between the usual test error $E_{test}^{clean}$ and the additional term induced by the fake labeling process.
Understanding the interaction of these two terms is key to demystifying the origins of model collapse.

%Importantly, the term $\rho$ only depends on the random design matrices $X_0$ and $X$, and the polylation covariance matrix $\Sigma$.  In an appropriate asymptotic regime, it will only depend on: $T$, $d$, and the spectral properties of $\Sigma$. Observe that we always have $\rho \le d/(T_0-d)$, a consequence of the fact that all the eigenvalues of the positive-definite matrix $(S + \lambda I_d)^{-1}S$ are at most $1$. Moreover,  $\rho \to 0$ in the limit $\lambda \to \infty$ (corresponding to the null predictor).

\paragraph{Low-Dimensional Limit.}
Observe that if $d$ is fixed and $T \to \infty$, then the empirical covariance matrix $\widehat\Sigma$ converges to\footnote{e.g. weakly, w.r.t. operator-norm.} its population version $\Sigma$, and so for $T_0 \ge d + 2$, we have
$$
    \rho \simeq \frac{\trace \Sigma^2(\Sigma + \lambda I_d)^{-2}}{T_0-d} = \frac{\mathrm{df}_2(\lambda)}{T_0-d},
$$
where for any $\lambda \ge 0$ and $m \in \mathbb N_\star$, $\df_m(\lambda)$ is the $m$th order "degree of freedom" of the covariance matrix $\Sigma$ is given by
$$
\mathrm{df}_m(\lambda) = \mathrm{df}_m(\lambda; \Sigma) := \trace\Sigma^m(\Sigma + \lambda I_d)^{-m}.
$$
Note that $\mathrm{df}_m(\lambda) \le d$ always. In the high-dimensional setting (where $d$ can grow beyond $T_0$), the precise analysis of $\rho$
% the trace term $\rho$ in Theorem \ref{thm:generalcov}
will be carried out via random matrix theory (RMT).
% We shall need to carefully split the analysis into different sub-cases depending on the high-dimensional behavior of the covariance matrix $\Sigma$.

\subsection{High-Dimensional Regimes in the RMT limit}
In order to analyze the trace term $\rho$ appearing in \eqref{eq:b2}, we need some tools from RMT, and ultimately obtain analytic formulae for $E_{test}(\widehat w_n^{pred})$ in Theorem \ref{thm:rho}. Such tools have been used extensively to analyze anisotropic ridge regression \citet{Richards2021, Hastie2019Surprises,bach2023highdimensional}.
A standard reference on RMT is \citet{BaiRMTBook}. %\ElvisIssue{Add more refs!}

% Define a function $\varphi:\mathbb C\setminus \mathbb R_+ \to \mathbb R_+$ implicitly as the unique nonnegative solution to the following equation
% \begin{eqnarray}
%     \frac{1}{\varphi(z)} + z = \phi\int_0^\infty \frac{t\mathrm{d}\mu(t)}{1+t\varphi(z)}.
% \end{eqnarray}

\textbf{Random Matrix Equivalents.}
For any sample size $T \ge 1$, define an increasing function $\lambda \to \kappa(\lambda,T)$ implicitly by
\begin{eqnarray}
\begin{aligned}
    \kappa(\lambda,T) - \lambda & = \kappa(\lambda,T)\cdot \mathrm{df}_1(\kappa(\lambda,T))/T\,.
    \label{eq:kappa}
\end{aligned}
\end{eqnarray}
% % It should then be clear that
% \begin{eqnarray}
%     \kappa(\lambda,T)\varphi(-\lambda) \simeq 1,\text{ for all }\lambda \ge 0.
% \end{eqnarray}
The effect of ridge regularization at level $\lambda \ge 0$ is to improve the condition of the empirical covariance matrix $\widehat \Sigma$; what the $\kappa$-function does is translate this into regularization on $\Sigma$ at level $\kappa(\lambda,T)$, so as control the capacity of the former, i.e. the "effective dimension" of the underlying problem. Quantitatively, there is an equivalence of the form
$$
\mathrm{df}_1(\lambda; \widehat \Sigma) \approx \mathrm{df}_1(\kappa(\lambda,T); \Sigma).
$$
Roughly speaking, RMT is the business of formalizing such a relationship and derivatives (w.r.t.~$\lambda$) thereof.

\textbf{Example: Isotropic Covariance.}
As an illustration, note that $\mathrm{df}_m(\lambda) \equiv d/(1+\lambda)^m$ (polynomial decay) in the isotropic case where $\Sigma=I_d$. Consequently, we have 
$$
\kappa(\lambda,T) - \lambda = \phi \cdot \kappa(\lambda,T)/(1+\kappa(\lambda,T)),\text{ with }\phi := d/T.% \phi\kappa(\lambda,T) / (1 + \kappa(\lambda,T)).
$$
In this case, it is easy to obtain the following well-known formula for $\kappa=\kappa(\lambda,T)$:
% $\kappa(\lambda,T) = 1/\varphi(-\lambda)$ and 
\begin{eqnarray}
    \kappa = \frac{1}{2}\left(\lambda + \overline\phi + \sqrt{(\lambda+\overline\phi)^2 + 4\lambda}\right),\text{ with }\overline\phi := \phi-1,
    \label{eq:MP}
\end{eqnarray}
% where $\overline \phi := 1-\phi$,
which is reminiscent of the celebrated Marchenko-Pastur law \citep{MarcenkoPastur}.

% \textbf{Assumptions.}
% We will temporarily work under the following standard assumption:
% % \ElvisIssue{Cite standard refs, including papers by Dobriban, Couillet, and Pennington (potential reviewers) !}
% \begin{assumption}
% $\Sigma=\Sigma_d$ (i.e a sequence of covariance matrix indexed by the dimensionality $d$) has spectrum bounded away from $0$ uniformly w.r.t $d$. Moreover, the empirical spectral distribution $\sum_{j=1}^d \delta_{\lambda_j}$ of $\Sigma$ converges in the limit $d \to \infty$, to a  compactly-supported distribution $\mu$ on $\mathbb R^+$.
% \end{assumption}
% \label{ass:boundedspec}
Furthermore, we shall work in the following so-called proportionate asymptotic scaling which is a standard analysis based on random matrix theory (RMT):
\begin{eqnarray}
\label{eq:proportionate}
    T,d \to \infty,\, d/T \to \phi,\, \,\|\Sigma\|_{op},\|\Sigma^{-1}\|_{op} = O(1).
\end{eqnarray}
Later in Section \ref{sec:power} when we consider power-law spectra, this scaling will be extended to account for the more realistic case where $d$ and $T$ are of the same order on log scale, i.e
% larger than one another, i.e 
% \begin{mdframed}
%\textbf{Polynomial Scaling Regime.}
    \begin{eqnarray}
T,d \to \infty,\, d^{1/C} \lesssim T \lesssim d^C,\, \|\Sigma\|_{op},\|\Sigma^{-1}\|_{op} = O(1),
\label{eq:nonproportionate}
\end{eqnarray}
for some absolute constant $C \ge 1$.
% \end{mdframed}
Such non-proportionate settings are covered by the theory developed in \citet{Knowles2017,WeiMoreThanToy}. For clarity of presentation, even in this more general regime \eqref{eq:nonproportionate}, we will still continue to write $\phi_0 := d/T_0$ and $\phi := d/T$.

\textbf{Bias-Variance Decomposition.}
With everything now in place, let us recall for later use, the following classical bias-variance decomposition for ridge regression (for example, see \citet{Richards2021,Hastie2019Surprises,bach2023highdimensional}):
\begin{proposition}
% Under Assumption \ref{ass:boundedspec}, in the limit  \eqref{eq:proportionate}
In the RMT limit \eqref{eq:nonproportionate}, the test error of a ridge predictor $w(\lambda)$ based on $T$ iid samples from the true data distribution $P_{\Sigma,w_0,\sigma^2}$ is given by
    \begin{align}
        E_{test}(w(\lambda)) &= \mathbb E\,\|w(\lambda)-w_0\|_\Sigma^2 \simeq Bias + Var, \nonumber\\
        Bias &\simeq \frac{\kappa^2 w_0^\top \Sigma(\Sigma + \kappa I)^{-2} w_0}{{1-\mathrm{df}_2(\kappa)/T}}\label{eq:bias},\\
        Var &\simeq  \frac{\sigma^2\mathrm{df}_2(\kappa)}{T}\cdot \frac{1}{1-\mathrm{df}_2(\kappa)/T},
        \label{eq:var}
    \end{align}
    where $\kappa = \kappa(\lambda,T)$ is as given in \eqref{eq:kappa}.

In particular, in the isotropic case where $\Sigma = I_d$,  we have
$$
    E_{test}(w(\lambda)) \simeq \frac{\kappa^2 \|w_0\|_2^2 + \sigma^2 \phi}{(1+\kappa)^2 - \phi},
$$
    where $\phi:=d/T$ and $\kappa = \kappa(\lambda,T)$ is as given in \eqref{eq:MP}.
    \label{prop:classicalrige}
\end{proposition}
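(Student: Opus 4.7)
Since this is a well-known classical result (as the word ``recall'' in the statement suggests), my plan is to give a streamlined argument built on the standard bias--variance decomposition plus the deterministic-equivalent machinery from \citet{Richards2021,Hastie2019Surprises,bach2023highdimensional}, rather than re-deriving the RMT tools from scratch. First I would expand $w(\lambda) = RX^\top Y / T$ using $Y = X w_0 + E$ and the algebraic identity $R\widehat{\Sigma} = I - \lambda R$ to obtain the exact, finite-sample decomposition
\begin{equation*}
w(\lambda) - w_0 \;=\; -\lambda R w_0 \;+\; \tfrac{1}{T} R X^\top E.
\end{equation*}
Taking expectation over the independent noise $E$ (and using $\mathbb{E}[EE^\top] = \sigma^2 I_T$) immediately produces the orthogonal decomposition $E_{test}(w(\lambda)) = \lambda^2\, \mathbb{E}\,w_0^\top R \Sigma R\, w_0 + \tfrac{\sigma^2}{T}\mathbb{E}\,\operatorname{tr}(\Sigma R \widehat{\Sigma} R)$, which I will call $Bias$ and $Var$ respectively, matching \eqref{eq:biasraw}--\eqref{eq:varraw}.

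Next I would invoke the standard anisotropic Marchenko--Pastur deterministic equivalents in the regime \eqref{eq:nonproportionate}. The key inputs are: (i) $\mathbb{E}[\lambda R] \simeq \kappa (\Sigma + \kappa I)^{-1}$, where $\kappa = \kappa(\lambda,T)$ is the unique positive solution of the self-consistent equation \eqref{eq:kappa}; and (ii) the differentiated equivalent $\mathbb{E}[\lambda^2 R \Sigma R] \simeq \kappa^2 \Sigma (\Sigma + \kappa I)^{-2} \big/ (1 - \mathrm{df}_2(\kappa)/T)$, from which the analogous identity $\tfrac{1}{T}\mathbb{E}\operatorname{tr}(\Sigma R \widehat{\Sigma} R) \simeq \mathrm{df}_2(\kappa)/T \cdot 1/(1 - \mathrm{df}_2(\kappa)/T)$ follows by the same resolvent differentiation trick (using $R\widehat{\Sigma} R = R - \lambda R^2$ together with the first equivalent). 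Substituting these into the expressions for $Bias$ and $Var$ yields \eqref{eq:bias} and \eqref{eq:var}.

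For the isotropic specialization $\Sigma = I_d$, I would simply compute: $\mathrm{df}_1(\kappa) = d/(1+\kappa)$ and $\mathrm{df}_2(\kappa) = d/(1+\kappa)^2$, so \eqref{eq:kappa} reduces to the quadratic $\kappa^2 + (1-\lambda-\phi)\kappa - \lambda = 0$, whose positive root is precisely the Marchenko--Pastur formula \eqref{eq:MP}. Plugging these into \eqref{eq:bias}--\eqref{eq:var} and combining over the common denominator $1 - \phi/(1+\kappa)^2 = ((1+\kappa)^2 - \phi)/(1+\kappa)^2$ gives the compact closed form $E_{test}(w(\lambda)) \simeq (\kappa^2 \|w_0\|_2^2 + \sigma^2 \phi)/((1+\kappa)^2 - \phi)$.

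\textbf{Main obstacle.} The only non-routine piece is the justification of the two deterministic equivalents (i) and (ii) in the non-proportionate regime \eqref{eq:nonproportionate}, where $d$ and $T$ need only be polynomially related. For this I would cite \citet{Knowles2017} and \citet{WeiMoreThanToy}, which provide the requisite anisotropic local-law estimates under the operator-norm boundedness of $\Sigma$ and $\Sigma^{-1}$; all the rest is algebra. The proportionate regime \eqref{eq:proportionate} is the classical setting already worked out in \citet{Hastie2019Surprises,Richards2021,bach2023highdimensional}, so for that case one can simply quote the stated formulae.
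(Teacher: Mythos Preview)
Your proposal is correct and matches the paper's approach: the paper treats the first part as classical, simply citing \citet{Richards2021,Hastie2019Surprises,bach2023highdimensional} without re-deriving it, and then obtains the isotropic specialization exactly as you do---plugging $w_0^\top \Sigma(\Sigma+\kappa I)^{-2}w_0 = \|w_0\|_2^2/(1+\kappa)^2$ and $\mathrm{df}_2(\kappa)=d/(1+\kappa)^2$ into \eqref{eq:bias}--\eqref{eq:var} and combining over the common denominator $(1+\kappa)^2-\phi$. Your sketch is in fact slightly more detailed than the paper's own, since you outline the resolvent-differentiation route to the deterministic equivalents (i) and (ii), whereas the paper leaves that entirely to the references.
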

% For completeness, the second part of the result is proved in the appendix.

\subsection{Main Result II: Analytic Formula for Test Error}
% Putting Theorem \ref{thm:generalcov}, Proposition \ref{prop:b2}, and Proposition \ref{prop:classicalrige} together, we obtain
The following result gives the test error for the downstream ridge predictor $\widehat w_n^{pred}$ defined in \eqref{eq:ridge}, in the context of fake training data, and will be heavily exploited later to obtain precise estimates in different regimes. For later reference, define $\Delta Bias \ge 0$ by
\begin{eqnarray}
    \Delta Bias := \mathbb E\,\|\widehat \Sigma R(Q_{n-1}w_0-w_0)\|_\Sigma^2,
    \label{eq:deltabias}
\end{eqnarray}
where the matrix $Q_{n-1}$ is as defined in Theorem \ref{thm:generalcov}.
 \begin{theorem}
% Suppose Assumption \ref{ass:boundedspec} is in order.
For an $n$-fold fake data-generation process, the test error of a ridge predictor $\widehat w_n^{pred}$ based on a sample of size $T$ with regularization parameter $\lambda$, is given in the RMT limit \eqref{eq:nonproportionate} by % in the polynomial scaling regime $T,T_0,d \gg 1$ s.t. $\log T,\log T_0 \asymp \log d$,  is given by
    \begin{eqnarray}
    E_{test}(\widehat w_n^{pred})  \simeq \textcolor{red}{\widetilde {Bias}} + Var + \textcolor{red}{n\sigma_0^2 \rho},
    \end{eqnarray}
    where $Var$ and $\rho$ are as given in Theorem \ref{thm:generalcov}, and $\widetilde{Bias}$ satisfies
\begin{eqnarray*}
    \begin{aligned}
\widetilde{Bias } &\ge Bias + \textcolor{red}{\Delta Bias} \ge Bias\\
&\text{(with equality if }T_0 \ge d\text{)},
    \end{aligned}
    \end{eqnarray*}
and $\Delta Bias$ as given in \ref{eq:deltabias}. 

Furthermore, if one of the  following conditions holds
\begin{eqnarray}
\label{eq:simplifier}
T_0 \ge d\text{ OR }X_n = X_0\text{ for all }n,
\end{eqnarray}
then, we have the following explicit formula for $\rho$
\begin{eqnarray*}
\begin{aligned}
        \rho &= \dfrac{\trace \Sigma^4(\Sigma + \kappa_0 I)^{-2}(\Sigma + \kappa I)^{-2}}{T_0 - \df_2(\kappa_0)}\\
        &\quad + \dfrac{\kappa^2 \trace\Sigma^2 (\Sigma + \kappa_0 I)^{-2}(\Sigma + \kappa I)^{-2}}{T_0-\df_2(\kappa_0)}\cdot \dfrac{\df_2(\kappa)}{T-\df_2(\kappa)},
\end{aligned}
\end{eqnarray*}
    with $\kappa=\kappa(\lambda,T)$ and $\kappa_0 := \kappa(0,T_0)$ are as given in \eqref{eq:kappa}.
\label{thm:rho}
\end{theorem}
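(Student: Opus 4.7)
My plan is to derive the three assertions of Theorem \ref{thm:rho} in sequence, building on the abstract trace formula already established in Theorem \ref{thm:generalcov}.

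For the bias lower bound, I would use the identity $R(\widehat\Sigma + \lambda I) = I$, which gives $\widehat\Sigma R = I - \lambda R$, to decompose
\begin{equation*}
\widehat\Sigma R Q_{n-1}w_0 - w_0 = -\lambda R w_0 + (I - \lambda R)(Q_{n-1} - I)w_0.
\end{equation*}
Expanding $\|\cdot\|_\Sigma^2$ produces $\widetilde{Bias} = Bias + \Delta Bias + \mathrm{Cross}$, with cross term $-2\lambda\,\mathbb E[w_0^\top R\Sigma(I-\lambda R)(Q_{n-1}-I)w_0]$. Because the downstream design $X=X_n$ is independent of the upstream $X_0,\ldots,X_{n-1}$ (and in the complementary $X_n=X_0$ case the factor $(I-\lambda R)(Q_{n-1}-I)w_0$ vanishes pointwise via $Rw_\perp=w_\perp/\lambda$ on the null space of $X_0$), this factorises as $-2\lambda\,w_0^\top \mathbb E[R\Sigma(I-\lambda R)]\,(\Pi^n - I)\,w_0$ with $\Pi := \mathbb E[P_0] \preceq I$. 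In the RMT limit both matrices commute with $\Sigma$; a short calculation from \eqref{eq:kappa} using $\df_2 \le \df_1$ shows $\mathbb E[R\Sigma(I-\lambda R)]$ is positive semidefinite, while $\Pi^n - I$ is negative semidefinite, so the cross term is $\ge 0$. When $T_0 \ge d$, each $X_m$ has full column rank almost surely, $P_m = I_d$ and $Q_{n-1} = I_d$, giving $\Delta Bias = 0$ and $\widetilde{Bias} = Bias$.

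For the reduction of $\rho$, under either hypothesis in \eqref{eq:simplifier} every intermediate projector fixes the row-space of the relevant design, so $P_k P_{k-1}\cdots P_m X_m^\dagger = X_m^\dagger$; hence $\overline Q_{n-1,m} = X_m^\dagger$ and $C_{n-1,m} = (X_m^\top X_m)^\dagger$. When $T_0 \ge d$ the matrices $(X_m^\top X_m)^{-1}$ are independent inverse-Wisharts with mean $\Sigma^{-1}/(T_0-d-1)$, reducing $\rho$ to $\mathbb E\trace[\Sigma^{-1}\widehat\Sigma R\Sigma R\widehat\Sigma]/(T_0-d-1)$, as already noted in Theorem \ref{thm:generalcov}. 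When $X_n=X_0$ all designs collapse to one matrix and $\rho = \mathbb E\trace[(X_0^\top X_0)^\dagger\widehat\Sigma R\Sigma R\widehat\Sigma]$, a single-design trace where the ``inner'' pseudoinverse shares randomness with the ``outer'' resolvent, but in which the same RMT equivalents below apply.

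For the explicit RMT closed form, I would rewrite $\widehat\Sigma R\Sigma R\widehat\Sigma = (I - \lambda R)\Sigma(I - \lambda R)$ using $\widehat\Sigma R = I - \lambda R$, and invoke the deterministic-equivalent machinery of \citet{Knowles2017,WeiMoreThanToy} in the regime \eqref{eq:nonproportionate}. At leading order, $\mathbb E[(I - \lambda R)\Sigma(I - \lambda R)] \simeq \Sigma^2(\Sigma + \kappa I)^{-2} + \kappa^2(\Sigma + \kappa I)^{-2}\Sigma\cdot \df_2(\kappa)/(T - \df_2(\kappa))$, the correction coming from $\lambda^2(\mathbb E[R\Sigma R] - (\mathbb E R)\Sigma(\mathbb E R))$, extracted by differentiating \eqref{eq:kappa} to obtain $\partial_\lambda\kappa = 1/(1-\df_2(\kappa)/T)$. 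Applying the same family of equivalents to the independent Wishart-pseudoinverse $(X_m^\top X_m)^\dagger$ at the second regularization level $\kappa_0 = \kappa(0,T_0)$ (and likewise for $(X_0^\top X_0)^\dagger$ in the $X_n=X_0$ case) contributes the companion factor $\Sigma(\Sigma + \kappa_0 I)^{-2}$ together with the denominator $T_0 - \df_2(\kappa_0)$. Multiplying the two equivalents and taking the trace against $\Sigma$ yields precisely the two summands of the displayed closed form.

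The main obstacle is the third step: keeping the two independent layers of RMT equivalents honest through a product, and correctly extracting both $(T-\df_2(\kappa))$ and $(T_0-\df_2(\kappa_0))$ denominators from the two self-consistent $\kappa$-equations. Once both equivalents are cleanly in place, the remaining bookkeeping is routine.
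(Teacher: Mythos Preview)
Your plan tracks the paper's proof closely: the same algebraic split $\widetilde{Bias}=Bias+\Delta Bias+\text{cross}$, the same reduction $\overline Q_{n-1,m}=X_0^\dagger$ under \eqref{eq:simplifier}, and a two-layer RMT evaluation of $\rho$. The one genuine problem is your reading of the condition ``$X_n=X_0$ for all $n$''. In the paper's setup this refers only to the \emph{intermediate} designs $X_0,\ldots,X_{n-1}$ in the fake-data loop; the downstream design $X$ on which $\widehat\Sigma,R$ are built is always drawn independently (the paper's proof explicitly conditions on $H=\widehat\Sigma R\Sigma R\widehat\Sigma$ and then integrates over $X_0$). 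This matters in two places. First, your parenthetical that $(I-\lambda R)(Q_{n-1}-I)w_0$ vanishes pointwise would only hold if $X=X_0$; it is unnecessary here, since your independence argument already covers both branches of \eqref{eq:simplifier}. Second, and more seriously, your assertion that in the $X_n=X_0$ branch the pseudoinverse ``shares randomness with the outer resolvent'' is false under the intended setup, and if it \emph{were} true the product-of-equivalents step you propose would not be justified---you would be facing a genuinely fourth-order functional of a single random matrix, not a product of two independent second-order pieces. Once the misreading is removed, both branches are handled uniformly by independence of $X$ from $X_0$, and your argument is sound.

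One minor technical difference worth noting: to extract the factor $\Sigma(\Sigma+\kappa_0 I)^{-2}/(T_0-\df_2(\kappa_0))$ from $X_0^\dagger(X_0^\dagger)^\top$ in the over-parametrized case $T_0<d$, the paper does not invoke a deterministic equivalent for the pseudoinverse directly. Instead it writes
\[
X_0^\dagger(X_0^\dagger)^\top \;=\; -\,T_0^{-1}\lim_{\lambda_0\to 0^+}\partial_{\lambda_0}\bigl[X_0^\top(X_0X_0^\top+\lambda_0 T_0 I)^{-1}X_0\bigr],
\]
applies the first-order equivalent inside the bracket, and then differentiates using $\partial_{\lambda_0}\kappa(\lambda_0,T_0)=(1-\df_2(\kappa_0)/T_0)^{-1}$ (Lemma~\ref{lm:bach}). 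Your route---compute equivalents for $H$ and for $(X_0^\top X_0)^\dagger$ separately and multiply, which is legitimate once independence is in hand---is equally valid, but you should say explicitly how the $(T_0-\df_2(\kappa_0))$ denominator arises, and the derivative trick is the cleanest way to do so.
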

Instructively, the term $\Delta Bias$ measures how biased the synthetic data-generation process away from ground-truth model $w_0$. This term disappears if the generator was fitted on sufficiently many samples (i.e. if $T_0 \ge d$). More quantitatively, when $T_0 < d$ and $X_n=X_0$, it is easy to see that $\Delta Bias \ge \mathbb E\, [\|\Sigma^{1/2}\widehat\Sigma R\|_{op}^2] \cdot Bias_0$, where $Bias_0 := \mathbb E\,\|P_0w_0-w_0\|_2^2$ measures the inability due to lack of enough data, of the first generation ($n=1$) to reliably estimate $w_0$ even in the absence of noise ($\sigma_0=0$) in the data-generating process.
% \julia{Check if this is still true now!}
This gap propagates over to higher generations of the process. The situation is illustrated in Figure \ref{fig:harmsway}. In the case where $T_0 < d$ and the $X_n$'s are independent, we shall see in Section \ref{sec:cata} that this increase in bias actually grows with $n$, even when $\sigma_0=0$.

\subsection{Under-Parametrized Fake Data-Generator}
Observe that if $T_0 \ge d$, then $P_0=I_d$ a.s., leading to $\widetilde{Bias} = Bias$ (given as in formula \eqref{eq:bias}), and 
$\kappa_0 = 0$ in Theorem \ref{thm:generalcov},  and 
\begin{eqnarray}
    \rho = \frac{\mathrm{df}_2(\kappa)}{T_0-d} + \frac{\kappa^2 \trace\,(\Sigma + \kappa I)^{-2}}{T_0-d}\dfrac{\df_2(\kappa)}{T-\df_2(\kappa)},
    \label{eq:underparam-rho}
\end{eqnarray}
with $\kappa$  as in Theorem \ref{thm:generalcov}.
We have the following corollary.
\begin{corollary}
\label{cor:previous-main-thm}
Consider the setting of Theorem \ref{thm:generalcov}. If $T_0 \ge d$ additionally, then it holds in the RMT limit \eqref{eq:nonproportionate} that
\begin{eqnarray*}
    E_{test}(\widehat w_n^{pred}) \simeq Bias + Var + n\sigma_0^2 \rho,
\end{eqnarray*}
 where $Bias$ and $Var$ are as given in formulae \eqref{eq:bias} and \eqref{eq:var} respectively, and $\rho$ is as given in \eqref{eq:underparam-rho}.

In the special case of isotropic features, it holds that
$$
\begin{aligned}
&Bias + Var \simeq \frac{\kappa^2 \|w_0\|_2^2 + \sigma^2 \phi}{(1+\kappa)^2 - \phi},\\
    &\rho \simeq \frac{\phi_0}{1-\phi_0}\left(\frac{1}{(1+\kappa)^2} +  \frac{1}{(1+\kappa)^2}\frac{\phi\kappa^2}{(1+\kappa)^2 - \phi}\right),
    \end{aligned}
$$
with $\phi:=d/T$, $\phi_0:=d/T_0$, and $\kappa = \kappa(\lambda,T)$ as in \eqref{eq:MP}.
\end{corollary}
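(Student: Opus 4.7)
The plan is to derive Corollary \ref{cor:previous-main-thm} by directly specializing Theorem \ref{thm:rho} to the regime $T_0 \ge d$, and then evaluating the resulting expressions when $\Sigma = I_d$. The work is entirely downstream algebraic simplification; no new probabilistic argument is needed.

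First, I would verify that condition \eqref{eq:simplifier} holds, which is immediate since $T_0 \ge d$, so the explicit formula for $\rho$ in Theorem \ref{thm:rho} applies. Next, since $T_0 \ge d$ the Gaussian design matrix $X_0 \in \mathbb{R}^{T_0 \times d}$ has full column rank almost surely, so the projector $P_0 = X_0^\dagger X_0$ onto the span of its rows equals $I_d$ a.s. By induction, the same holds for every $X_m$ (since each is drawn from the same distribution under \eqref{eq:identification} up to index $n-1$), giving $Q_{n-1} = P_{n-1} P_{n-2}\cdots P_0 = I_d$. Consequently $\widetilde{Bias} = \mathbb E\,\|\widehat\Sigma R w_0 - w_0\|_\Sigma^2 = Bias$ and $\Delta Bias = 0$, so $\widetilde{Bias}$ reduces to the standard formula \eqref{eq:bias} of Proposition \ref{prop:classicalrige}, and $Var$ is given by \eqref{eq:var}.

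Second, I would pin down $\kappa_0 := \kappa(0,T_0)$. The defining equation \eqref{eq:kappa} at $\lambda = 0$ reads $\kappa_0 = \kappa_0 \cdot \df_1(\kappa_0)/T_0$; since $\df_1(\cdot;\Sigma)$ is strictly decreasing with $\df_1(0)=d\le T_0$, any positive solution would require $\df_1(\kappa_0) = T_0 \ge d$, which is impossible, so $\kappa_0 = 0$ is the unique root. Plugging $\kappa_0 = 0$ into the formula for $\rho$ in Theorem \ref{thm:rho} gives $(\Sigma+\kappa_0 I)^{-2} = \Sigma^{-2}$ and $\df_2(\kappa_0) = d$. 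The first term simplifies as
\begin{equation*}
\frac{\trace \Sigma^4 \Sigma^{-2}(\Sigma+\kappa I)^{-2}}{T_0 - d} = \frac{\df_2(\kappa)}{T_0 - d},
\end{equation*}
and the second as $\kappa^2\,\trace(\Sigma+\kappa I)^{-2}/(T_0-d)\cdot \df_2(\kappa)/(T-\df_2(\kappa))$, yielding exactly \eqref{eq:underparam-rho}.

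Finally, for the isotropic case $\Sigma = I_d$, I would substitute $\df_m(\kappa) = d/(1+\kappa)^m$ and $\trace(\Sigma+\kappa I)^{-2} = d/(1+\kappa)^2$ into \eqref{eq:underparam-rho}. Dividing top and bottom by $d$ turns $d/(T_0 - d)$ into $\phi_0/(1-\phi_0)$ and $\df_2(\kappa)/(T-\df_2(\kappa)) = d/(T(1+\kappa)^2 - d)$ into $\phi/((1+\kappa)^2 - \phi)$, producing the stated expression for $\rho$. The $Bias + Var$ formula in the isotropic case is precisely the one already recorded in Proposition \ref{prop:classicalrige}, so nothing further needs to be computed. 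The only point requiring care is the justification that $\kappa_0 = 0$ in the overdetermined regime $T_0 \ge d$, handled by the monotonicity argument above; everything else is symbolic manipulation.
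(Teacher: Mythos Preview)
Your proposal is correct and matches the derivation the paper sketches in the main text immediately preceding the corollary (observing $P_0=I_d$ a.s., $\widetilde{Bias}=Bias$, $\kappa_0=0$, and substituting into the explicit $\rho$ formula of Theorem~\ref{thm:rho}). The appendix proof takes a marginally different starting point---it goes back to the raw expression for $\rho$ in Theorem~\ref{thm:generalcov} and re-applies Proposition~1 of \citet{bach2023highdimensional} with $A=\Sigma^{-1}$, $B=\Sigma$ rather than specializing the already-computed RMT formula of Theorem~\ref{thm:rho}---but since Theorem~\ref{thm:rho} was itself obtained from Theorem~\ref{thm:generalcov} via exactly those tools, the two routes are equivalent and your version is arguably the more economical one.
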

Such a result
% , empirically illustrated in Figures \ref{fig:linreg} and \ref{fig:krreg} \julia{Do you really mean Fig 2? Because by its title Fig 2 refers to Sec 5, power-law spectra.},
gives us the needed analytical handle for understanding $n$-fold model collapse in terms of all problem hyper-parameters (covariance spectrum, regularization, label-noise level, etc.). 
% \yunzhen{Add sentences to call back $\rho$. How is the proposition related to understanding $\rho$. Then end with "with such a result, we could analytically characterize the n generation model collapse"}
% The result is empirically illustrated in Figure \ref{fig:linreg}. 

% \yunzhen{Add some interpretations on the figure}
% \ElvisIssue{ (TODO) Use the analytic expressions provided in theorem to deduce a qualitative / quantitative picture of the optimal ridge parameter, at least in the isotropic case.}

% \ElvisIssue{(TODO) Study the diatomic setup where $\Sigma$ has $\pi d$ eigenvalues equal to $a$ and $(1-\pi)d$ eigenvalues equal to $b \ne a$, i.e $\mu = \pi \delta_a + (1-\pi)\delta_b$. Some interesting phenomena may prevail.
% }

% \ElvisIssue{(TODO) Study the case where $\Sigma_0 \ne \Sigma$.}

\subsection{Model Collapse in  the Absence of Label Noise}
\label{sec:cata}

We now consider the over-parametrized regime, where the different iterations of the synthetic data-generator (refer to the illustration in Figure \ref{fig:teaser}) are fitted on insufficient data. For simplicity of exposition, we restrict our presentation to isotropic covariance $\Sigma=I_d$. Since we will be focusing on the possible increase $\Delta Bias$ in bias (defined in \eqref{eq:bias}) due to $n \ge 1$ generations as predicted by Theorem \ref{thm:rho}, we further restrict ourselves to the noiseless regime where the fake data-generating process has no label noise, i.e $\sigma_0=0$. %  ; considering general covariance matrices produces to same qualitative picture.
Thanks to Lemma \ref{lm:whatrep}, we know that the generation-$n$ fake labelling vector $\widehat w_n$ is in this setting given explicitly by the formula
\begin{eqnarray}    
\widehat w_n=P_{n-1}P_{n-2}\ldots P_0 w_0,
\label{eq:dynamo}
\end{eqnarray}
where, as usual, $P_m \in \mathbb R^{d \times d}$ is the orthogonal projection matrix onto the subspace of $\mathbb R^d$ spanned by the rows of $X_m$. Further, for simplicity we will assume $T=T_n>d$, i.e the downstream model has access to enough data.

We shall focus on two important special cases.

\paragraph{Dependent Case.} We first consider the case where $T_m=T_0<d$ and $X_m=X_0$ for all $m \le n-1$. It is clear that \eqref{eq:dynamo} reduces to $\widehat w_n = P_0 w_0$. We have the following result.
\begin{theorem}
In the limit $\lambda \to 0^+$ and $d,T_0 \to \infty$ with $d/T_0 \to \phi_0 > 1$, it holds that
\begin{align}
\|\widehat w_n\|^2 &\simeq \|w_0\|^2/\phi_0,\\
\Delta Bias &\simeq \|w_0\|^2(1-1/\phi_0).
\label{eq:deltaBiasEst}
\end{align}
\label{thm:P0}
\end{theorem}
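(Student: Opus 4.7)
The first move is to collapse the product $Q_{n-1} = P_{n-1} P_{n-2} \cdots P_0$ using the standing assumption $X_m = X_0$ for every $m \le n-1$. Then $P_m = P_0$ for all such $m$, and idempotency of the orthogonal projection gives $Q_{n-1} = P_0^n = P_0$. Combined with formula \eqref{eq:dynamo} and $\sigma_0 = 0$, this yields $\widehat w_n = P_0 w_0$, so both statements of the theorem become assertions about the single Haar-like random projection $P_0$ onto the $T_0$-dimensional row space of $X_0$ in $\mathbb{R}^d$.

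\textbf{Computing $\|\widehat w_n\|^2$.} The first claim reduces to $w_0^\top P_0 w_0 \simeq \|w_0\|^2 / \phi_0$. Since $\Sigma = I_d$, the distribution of $X_0$ (and hence of $P_0$) is invariant under orthogonal conjugation, so one may assume without loss of generality $w_0 = \|w_0\| e_1$, giving $\|\widehat w_n\|^2 = \|w_0\|^2 (P_0)_{11}$. Writing $P_0 = UU^\top$ for $U \in \mathbb{R}^{d \times T_0}$ uniformly distributed on the Stiefel manifold, $(P_0)_{11} = \|U_{1,:}\|^2$ is Beta$(T_0/2, (d-T_0)/2)$ with mean $T_0/d = 1/\phi_0$ and variance $O(1/d)$; concentration then delivers the claim. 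Equivalently one may represent $P_0 = X_0^\top (X_0 X_0^\top)^{-1} X_0$ and apply the Hanson--Wright inequality to the quadratic form $w_0^\top P_0 w_0$.

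\textbf{Computing $\Delta Bias$.} Recall from \eqref{eq:deltabias} that $\Delta Bias = \mathbb{E}\,\|\widehat\Sigma R (Q_{n-1} w_0 - w_0)\|_\Sigma^2$. Under $T > d$, the matrix $\widehat\Sigma = X^\top X / T$ is almost surely invertible, so $\widehat\Sigma R = \widehat\Sigma (\widehat\Sigma + \lambda I_d)^{-1} \to I_d$ in operator norm as $\lambda \to 0^+$. Substituting $Q_{n-1} = P_0$ from the reduction step,
$$
\Delta Bias \;\longrightarrow\; \mathbb{E}\,\|(I_d - P_0) w_0\|^2 \;=\; \|w_0\|^2 - \mathbb{E}\,\|P_0 w_0\|^2,
$$
by the Pythagorean identity applied to the orthogonal decomposition along $\operatorname{range}(P_0)$ and its complement. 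Plugging in the first part of the theorem yields $\Delta Bias \simeq \|w_0\|^2 (1 - 1/\phi_0)$.

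\textbf{Main obstacle.} The technical work is to make the double limit ($\lambda \to 0^+$ jointly with $d, T_0 \to \infty$) rigorous: one needs the smallest eigenvalue of $\widehat\Sigma$ to be bounded below by an $\Omega(1)$ constant with overwhelming probability, so that $\widehat\Sigma R \to I_d$ uniformly over the relevant scaling and the substitution can be exchanged with the expectation. This is supplied by the Bai--Yin lower edge in the proportionate regime for $X$, provided $d/T$ is bounded away from $1$. Once this uniformity is in hand, the computation of $\mathbb{E}\,(P_0)_{11}$ up to $o(1)$ error closes both $\simeq$ statements simultaneously.
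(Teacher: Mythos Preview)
Your proof is correct, and the reduction steps ($Q_{n-1}=P_0$ via idempotency, then $\widehat\Sigma R\to I_d$ under $T>d$ and $\lambda\to 0^+$, then the Pythagorean split $\|(I-P_0)w_0\|^2=\|w_0\|^2-w_0^\top P_0 w_0$) coincide exactly with the paper's. The genuine divergence is in how $w_0^\top P_0 w_0$ is evaluated. The paper proves a general RMT lemma (Lemma~\ref{lm:product}) giving the deterministic equivalent $u^\top Q_{n-1}v\simeq u^\top\prod_m(\Sigma+\kappa_m I)^{-1}v$ via resolvent calculus \`a la \citet{bach2023highdimensional}, and then specializes to $\Sigma=I$, a single factor, and $\kappa_0=\phi_0-1$. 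You instead exploit rotational invariance of the isotropic Gaussian to reduce to $(P_0)_{11}$ and identify it exactly as a $\mathrm{Beta}(T_0/2,(d-T_0)/2)$ variable with mean $T_0/d$ and $O(1/d)$ variance. Your route is more elementary and yields a finite-$d$ distribution rather than just an asymptotic equivalent; the paper's route is heavier but is the same lemma it reuses for Theorem~\ref{thm:cata} (independent projections) and the anisotropic extensions, so its generality is the payoff. Your explicit discussion of the Bai--Yin lower edge to justify the $\lambda\to 0^+$ exchange is also more careful than the paper, which simply sets $\lambda=0$ at the outset and invokes $\widehat\Sigma R=I_d$ almost surely.
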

We see that in this setting, the increase in bias $\Delta Bias \simeq (1-1/\phi_0)\|w_0\|^2$ brought about by synthetic data is a positive constant which does not grow with the number of generations $n \ge 1$. This increase in bias (i.e compared to training on clean data) is due to the fact that, with probability $1$, the random subspace of $\mathbb R^d$ spanned by $X_0$ does not contain the ground truth model $w_0$. The expression is nothing but a RMT estimate of $\|P_0w_0-w_0\|^2$, i.e the squared norm of the projection of $w_0$ onto the orthogonal complement of this subspace. The result is illustrated in Figure \ref{fig:harmsway}(a).

\paragraph{Independent Case.}
For our second example, we remove the assumption that $T_m=T_0$ and $X_m=X_0$ for all $m \le n-1$ considered in the previous case (Theorem \ref{thm:P0}). We instead assume that
\begin{itemize}
\item  the $X_m$'s are assumed to be independent, and
\item we are in the following high-dimensional limit
\begin{eqnarray}
\label{eq:longtrain}
d,T_1,\ldots,T_{n-1} \to \infty,\quad d/T_m \to \phi_m,
\end{eqnarray}
where $\phi_1,\ldots,\phi_{n-1}$ are positive constants.
\end{itemize}
% \julia{Problem: We defined $\widehat w_0=w_0$. So, you need to say $n \ge 1$ here, right?}
We have the following theorem. 
\begin{theorem}
\label{thm:cata}
In the limit \eqref{eq:longtrain} and $\lambda \to 0^+$, it holds that
\begin{align}
\|\widehat w_n\|^2 &\simeq \|w_0\|^2\prod_{m=0}^{n-1}\min(1/\phi_m, 1),\\
\Delta Bias &\simeq \|w_0\|^2\left(1-\prod_{m=0}^{n-1} \min(1/\phi_m,1)\right).
\end{align}
In particular, if $n \to \infty$ such that infinitely many of the $\phi_m$'s are  $>1$, then $\widehat w_n \to 0$ and $\Delta Bias \to \|w_0\|^2$.
\end{theorem}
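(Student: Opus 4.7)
My plan is to exploit the explicit representation
\[
    \widehat w_n = P_{n-1} P_{n-2} \cdots P_0 \, w_0
\]
from \eqref{eq:dynamo}. In the isotropic Gaussian setting $\Sigma = I_d$, each row space of $X_m$ is rotationally invariant, so $P_m$ is Haar-distributed on the Grassmannian of rank-$r_m$ subspaces of $\mathbb R^d$, where $r_m := \min(T_m, d)$ (in particular, $P_m = I_d$ almost surely whenever $T_m > d$). By the independence of $X_0, \ldots, X_{n-1}$, the projections $P_0, \ldots, P_{n-1}$ are jointly independent. Writing $a_m := r_m/d = \min(1/\phi_m, 1)$, the two facts I will use repeatedly are $\mathbb E[P_m] = a_m I_d$ and $P_m^2 = P_m$.

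First, I compute the two moments entering the decomposition $\|\widehat w_n - w_0\|^2 = \|w_0\|^2 - 2\, w_0^\top \widehat w_n + \|\widehat w_n\|^2$. For the cross term, by independence,
\[
    \mathbb E\bigl[w_0^\top \widehat w_n\bigr] = w_0^\top \mathbb E[P_{n-1}] \cdots \mathbb E[P_0] \, w_0 = \Bigl(\prod_{m=0}^{n-1} a_m\Bigr) \|w_0\|^2.
\]
For the squared norm, expand $\widehat w_n^\top \widehat w_n = w_0^\top P_0 P_1 \cdots P_{n-2}\, P_{n-1}\, P_{n-2} \cdots P_1 P_0\, w_0$ using symmetry, and take the conditional expectation over $P_{n-1}$ first: it contributes a factor $a_{n-1}$ and collapses the resulting $P_{n-2} P_{n-2}$ into $P_{n-2}$ by idempotency. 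Iterating this peeling argument from the innermost projection outward yields $\mathbb E\,\|\widehat w_n\|^2 = \prod_{m=0}^{n-1} a_m \cdot \|w_0\|^2$. Combining both identities gives $\mathbb E\,\|\widehat w_n - w_0\|^2 = \bigl(1 - \prod_{m=0}^{n-1} a_m\bigr)\|w_0\|^2$, which is the claimed expression in expectation.

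Next, I upgrade from expectations to the asymptotic $\simeq$: for a Haar-distributed rank-$r$ projection $P$ on $\mathbb R^d$ and a deterministic unit vector $v$, standard Lipschitz concentration on the Stiefel manifold gives $\|Pv\|^2 = r/d + O(1/\sqrt d)$ with high probability. Applying this conditionally at each generation in the limit \eqref{eq:longtrain} with $n$ fixed controls both $\|\widehat w_n\|^2$ and $w_0^\top \widehat w_n$ to leading order. The translation to $\Delta Bias$ is then immediate: since $Q_{n-1} w_0 = \widehat w_n$ by definition of $Q_{n-1}$ in Theorem \ref{thm:generalcov}, equation \eqref{eq:deltabias} reads $\Delta Bias = \mathbb E\,\|\widehat \Sigma R\,(\widehat w_n - w_0)\|^2$; in the isotropic case with $T > d$ and $\lambda \to 0^+$, $\widehat \Sigma R = \widehat \Sigma(\widehat\Sigma + \lambda I_d)^{-1} \to I_d$ almost surely, so $\Delta Bias \simeq \mathbb E\,\|\widehat w_n - w_0\|^2$, completing both estimates. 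The final claim then follows because $\prod_{m=0}^{n-1} a_m \to 0$ whenever infinitely many $\phi_m$ are strictly bounded above $1$.

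The main obstacle I foresee is not the algebra but the concentration step: one must chain $n$ concentration inequalities for Haar projections where the "target vector" at generation $m$, namely $P_{m-1} \cdots P_0 w_0$, is itself random (though of approximately deterministic norm $\prod_{k<m} a_k \|w_0\|^2$). This induction-on-generations argument is standard but requires tracking conditional variances carefully; handling uniformly the sub-parametrized regime $T_m < d$ and the over-sampled regime $T_m \ge d$ (where the factor collapses to $1$) is precisely what produces the $\min(1/\phi_m, 1)$ in the final formula.
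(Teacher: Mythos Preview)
Your argument is correct and in fact somewhat cleaner than the paper's own proof in this isotropic setting. The paper proceeds via its Lemma~\ref{lm:product}, which computes $w_0^\top Q_{n-1} w_0$ through RMT deterministic equivalents (Proposition~2 of \citet{bach2023highdimensional}), obtaining $w_0^\top Q_{n-1} w_0 \simeq w_0^\top \prod_m (\Sigma+\kappa_m I)^{-1}\Sigma\, w_0$; the paper then asserts $w_0^\top Q_{n-1}^\top Q_{n-1} w_0 = w_0^\top Q_{n-1} w_0$ ``because the $P_m$'s are projections,'' which is not a literal matrix identity but only holds in the limit. Your peeling argument, taking conditional expectation over the innermost projection and collapsing $P_{m}^2=P_m$, makes this step exact in expectation and transparent. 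It also bypasses the RMT machinery entirely: the identities $\mathbb E[P_m]=(r_m/d)I_d$ and idempotency already give $\mathbb E\|\widehat w_n\|^2=\mathbb E[w_0^\top\widehat w_n]=\|w_0\|^2\prod_m a_m$ exactly, with concentration on the Grassmannian supplying the upgrade to $\simeq$.

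What your approach buys is elementarity and rigor in the isotropic case; what the paper's RMT route buys is the extension to anisotropic $\Sigma$ (the lemma immediately following Lemma~\ref{lm:product} in the appendix), where $\mathbb E[P_m]$ is no longer a scalar multiple of the identity and your peeling trick does not directly apply. Your caveat about chaining $n$ conditional concentration bounds is real but routine for fixed $n$, since each $\|P_m v\|^2$ for fixed unit $v$ is $\mathrm{Beta}(r_m/2,(d-r_m)/2)$ with variance $O(1/d)$. Finally, your phrasing ``strictly bounded above~$1$'' is actually the correct hypothesis for the product to vanish; the paper's ``infinitely many $\phi_m>1$'' alone does not force $\prod_m a_m\to 0$ without such a uniform gap.
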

The theorem predicts that a sequence of over-parametrized fake data-generators $(\widehat w_n)_n$ collapses to zero (and thus, effectively escapes from the ground truth model $w_0$). Consequently, the downstream model $\widehat w_n^{pred}$ convergences to a Gaussian process around zero, instead of the true model $w_0$, leading to an increase in the bias term of the test error!
% This result is in the spirit of \cite{bertrand2023stability}.

For example if $\phi_m = \phi_0 > 1$ for all $m \le n-1$, then
$$
\Delta Bias \simeq (1-\phi_0^{-n})\|w_0\|^2,
$$
which grows exponentially fast towards $\|w_0\|^2$, the test error of the null predictor. This compounding effect is due to the fact that in \eqref{eq:dynamo}, each projection $P_m$ spins the fake data labelling vector $\widehat w_n$ away from the ground-truth $w_0$ even further. The result is illustrated in Figure \ref{fig:harmsway}(b).

\section{The Case of Heavy Tails (Power Law)}
\label{sec:power}
% \ElvisIssue{Motivate why this regime is relevant for studying real ML setups: DL / LLMs / etc.}

%\bbb{%Neural scaling laws posit that test loss decreases linearly with increases in data and model size on a logarithmic scale. These laws are crucial for the success of large-scale language models. 
%Now, consider a variant of the distribution \eqref{eq:regprob} where the spectral decomposition follows a power-law rule for $d \to \infty$. This setting is considered in \citet{Caponnetto2007OptimalRF,Richards2021,SimonNTK2021,Cui_2022}. %and is recently used to provide theoretical understandings for scaling laws in kernel methods \citet{Cui_2023}.}
Now, consider a variant of the distribution \eqref{eq:regprob}, in the setting considered in \citet{Caponnetto2007OptimalRF,Richards2021,SimonNTK2021,Cui_2022}, for $d \to \infty$. 
Let the spectral decomposition of $\Sigma$ be
$$
\Sigma = \lambda_1 v_1 v_1^\top + \ldots + \lambda_d v_d v_d^\top,
$$
 where $\lambda_1 \ge \ldots \ge  \lambda_d \ge 0$ are the eigenvalues and $v_1, \ldots, v_d \in \mathbb R^d$ are the eigenvectors.
 % Compactness means $\sum_i \lambda_i < \infty$.
 For any $j$, define a coefficient $c_j := w_0^\top v_j$, i.e the projection of $w_0$ along the $j$th eigenvector of $\Sigma$.

\begin{figure*}[!htb]
    \centering
    \begin{subfigure}{\textwidth}
        \centering
        \includegraphics[width=0.8\linewidth]{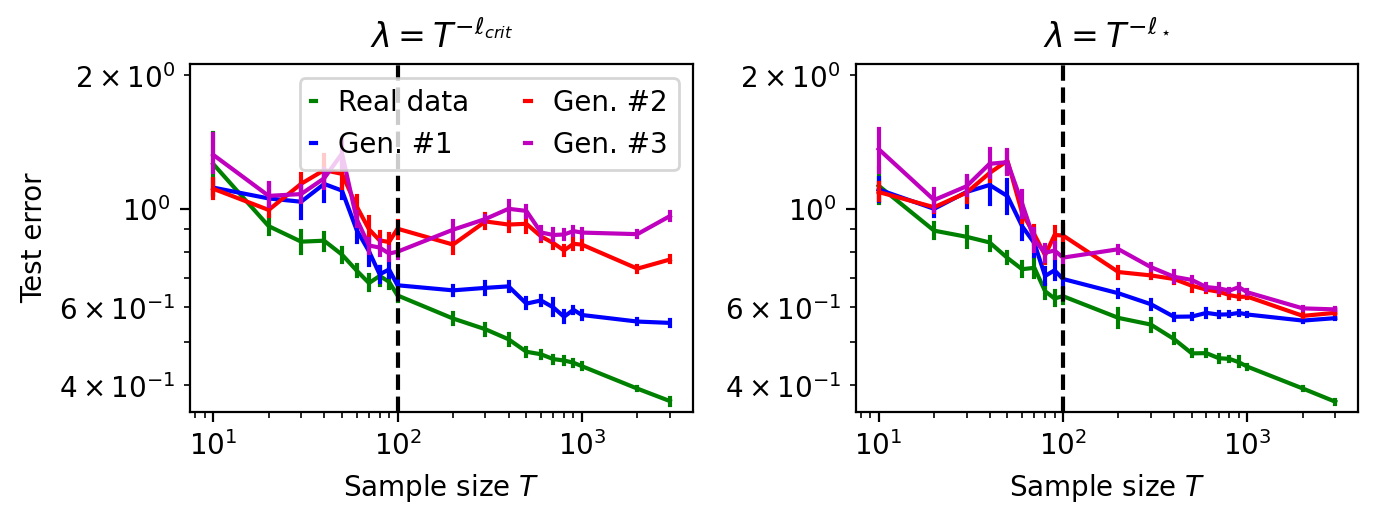}
        \caption{RBF kernel (bandwidth = $10^{-4}$)}
        % \label{fig:mnist_krr_rbf}
    \end{subfigure}
    % Add some vertical space between the figures if necessary
    % \vspace{1cm}
    \begin{subfigure}{\textwidth}
        \centering
        \includegraphics[width=0.8\linewidth]{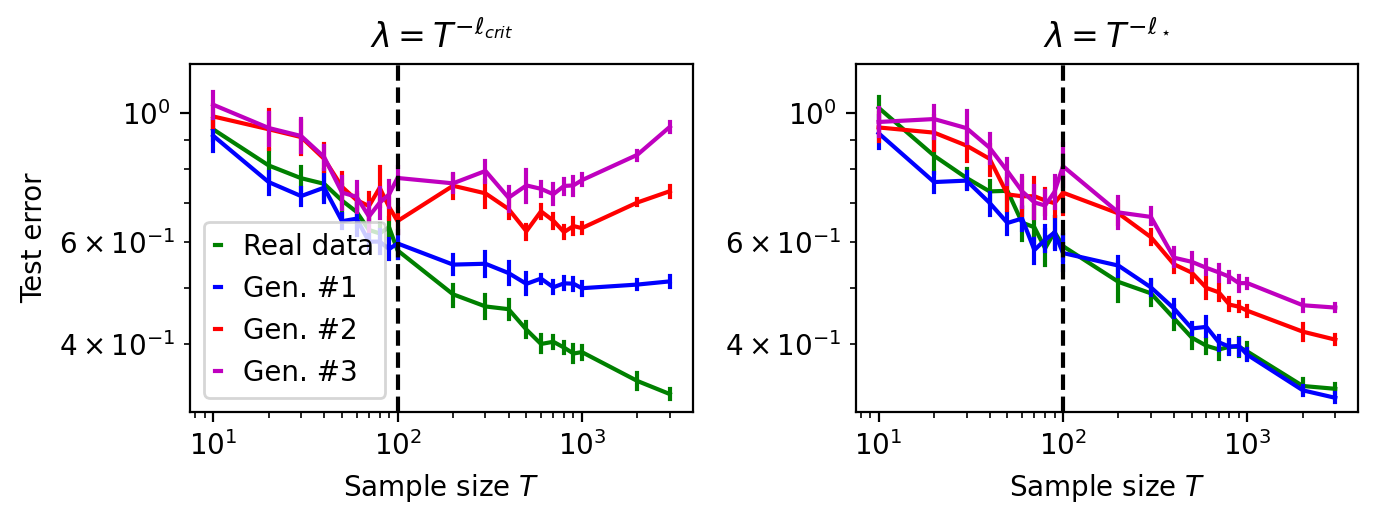}
        \caption{Polynomial kernel (degree = $5$, bandwidth = $10^{-3}$)}
        % \label{fig:mnist_krr_polynomial}
    \end{subfigure}
    \vspace{-.6cm}
    \caption{
    \textbf{Demystifying model collapse in kernel ridge regression (power-law covariance spectrum) on MNIST}. Here, we use adaptive regularization $T^{-\ell}$ for different values of the exponent $\ell \ge 0$ (see Section \ref{sec:exp} for full experimental setup). \textbf{Top row: } RBF kernel. \textbf{Bottom row: }polynomial kernel.
     In each plot, we show test error curves as a function of sample size $T$, from different generations ($n$) of fake data. The broken vertical line corresponds to $T=T_0$, where $T_0$ is the number of samples (from the true data distribution) which was used to train the label faker. The value of the exponent regularization $\ell=\ell_{\star}$ (broken curves) is the optimal value in the presence of iterative data relabeling, while $\ell=\ell_{crit}$ (solid curves) corresponds to the optimal value  without iterative re-labelling (i.e $n=0$) proposed in \citet{Cui_2022} (see \eqref{eq:lcrit}). Specifically, we take  $\ell_\star=(b-a)\ell_{cirt} = b\ell_{crit}$, where $b=\log T_0 / \log T$ (so that $T_0 = T^b$), as proposed in  Theorem \ref{thm:darkseid}, formula \eqref{eq:lopt}.
     Notice how the effect of fake data makes the test error become non decreasing in sample size $T$. This is effectively a collapse of the learned model.
     % The broken diagonal line is predicted by \citet{Cui_2022}, and would correspond to $T_0=\infty$.
    }
     \vspace{-.4cm}
    \label{fig:lcrit-vs-lopt}
\end{figure*}

We shall work under the following spectral conditions
\begin{eqnarray}
\left.
\begin{aligned}
    &\textbf{(Capacity Condition) } \lambda_j \asymp j^{-\beta}\text{ for all }j \in [d],\\
    &\textbf{(Source Condition) }\|\Sigma^{1/2-r} w_0\| = O(1),% \text{ i.e }\sum_i \lambda_i^{-2r}c_i^2 < \infty.
    \end{aligned}
        \right\}
    \label{eq:powerlaw}
\end{eqnarray}
where $\beta > 1$ and $r \ge 0$. The parameter $r$ measures the amount of dispersion of $w_0$ relative to the spectrum of $\Sigma$; a large value of $r$ means $w_0$ is concentrated only along a few important eigen-directions (i.e.~ the learning problem is easy). For later convenience, define $\underline r$ and $\delta$ by
$$
\underline r := \min(r,1),\quad \delta := 1+\beta(2r-1).
$$
As noted in \citet{Cui_2022}, the source condition in \eqref{eq:powerlaw} is satisfied if $c_j \asymp j^{-\delta/2}$ for all $j \in [d]$.
% For example, if $r = r_0 := (1-1/\beta)/2$, then $\delta = 0$. In general, note that $\delta$ can be negative; this is precisely when $r < r_0$. Note that $\beta-1 + \delta = \beta + \beta(2r-1) = 2\beta r$.

Consider adaptive ridge regularization strength of the form
\begin{eqnarray}
\label{eq:reguexpo}
\lambda = \lambda(T) \asymp T^{-\ell},
\end{eqnarray}
for fixed $\ell \ge 0$. The case where $\ell=0$ corresponds to non-adaptive regularization; otherwise, the level of regularization decays polynomially with the sample size $T$. Define
\begin{eqnarray}
\label{eq:lcrit}
\ell_{crit} :=  \beta / (1+2\beta \underline r). 
\end{eqnarray}
In \citet{Cui_2022} KRR under normal circumstances (corresponding to $n=0$, i.e.~ no fake data) was considered and it was shown that this value for the regularization exponent in \eqref{eq:reguexpo} is minimax-optimal for normal test error in the noisy regime ($\sigma>0$), namely $E_{test}(\widehat w_0^{pred}) \asymp T^{-c}$,
where
 $$
 c:=\frac{2\beta\underline r}{2\beta \underline r + 1} \in (0,1).
 $$
 This represents a crossover from the noiseless regime where it was shown that the test error scales like $E_{test}(\widehat w_0^{pred}) \asymp T^{-2 \beta \underline r}$, a much faster rate. We shall show that the picture drastically changes in the context of training on fake data considered in this manuscript for the purpose of understanding model collapse \citep{shumailov2023curse}.

\begin{remark}
    Unlike \citet{Cui_2022} which considered the proportionate scaling limit \eqref{eq:proportionate} for input dimension $d$ and sample size $T$, we shall consider the more general (and more realistic) polynomial scaling limit \eqref{eq:nonproportionate}, and invoke the tools of so-called \emph{anisotropic local RMT} developed in \citet{Knowles2017} to compute deterministic equivalents for quantities involving the spectra of random matrices.
\end{remark}
\subsection{Main Result III: A "Collapsed" Scaling Law}
The following result shows that model collapse is a modification of usual scaling laws induced by fake data. All proofs of this section can be found in Appendix \ref{app:B}. Here, we restrict to the case $T_0 \ge d+2$ to make the results easier to present. This condition can be removed as in Theorem \ref{thm:rho}.
\begin{theorem}
Consider $n$-fold fake-data generation with sample size $T_0 \ge d + 2$. % and set $\phi_0 := d/T_0 \in (0,1)$. 
For a ridge predictor $\widehat w_n^{pred}$ given in \eqref{eq:ridge} based on a fake data sample of size $T$, with regularization parameter $\lambda=\lambda(T)$ tuned adaptively as in \eqref{eq:reguexpo} with exponent $\ell \in [0,\beta)$, the test error satisfies the following scaling law in the RMT limit \eqref{eq:nonproportionate}:
\begin{equation}
\begin{aligned}
    E_{test}(\widehat w_n^{pred}) & \asymp \max(\sigma^2,T^{1-2\underline r\ell - \ell/\beta)})\cdot T^{-(1-\ell/\beta)} \nonumber \\ & + \textcolor{red}{\frac{n\sigma_0^2}{1-\phi_0}\max\left(T/T_0,\phi_0\right) \cdot T^{-(1-\ell/\beta)}}.
    \end{aligned}
    \label{eq:tron}
\end{equation}
% where  $\phi_0 := d/T_0$.
\label{thm:darkseid}
\end{theorem}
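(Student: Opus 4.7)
The plan is to read off Theorem \ref{thm:darkseid} as an asymptotic evaluation of Corollary \ref{cor:previous-main-thm} under the power-law spectrum \eqref{eq:powerlaw} and adaptive regularization $\lambda \asymp T^{-\ell}$. Since $T_0 \ge d+2$, Corollary \ref{cor:previous-main-thm} gives $E_{test}(\widehat w_n^{pred}) \simeq \mathrm{Bias} + \mathrm{Var} + n\sigma_0^2 \rho$ with $\mathrm{Bias}, \mathrm{Var}$ as in \eqref{eq:bias}--\eqref{eq:var} and
\begin{equation*}
\rho = \frac{\df_2(\kappa)}{T_0 - d} + \frac{\kappa^2\,\trace(\Sigma + \kappa I)^{-2}}{T_0 - d}\cdot\frac{\df_2(\kappa)}{T - \df_2(\kappa)}.
\end{equation*}
The task reduces to evaluating each of the three terms under \eqref{eq:powerlaw}--\eqref{eq:reguexpo} in the non-proportionate RMT limit \eqref{eq:nonproportionate}, and then matching with \eqref{eq:tron}.

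\textbf{Step 1: scaling of $\kappa$ and the degrees of freedom.} First I would establish $\kappa(\lambda,T)\asymp \lambda \asymp T^{-\ell}$ for $\ell\in[0,\beta)$. The defining equation $\kappa-\lambda=\kappa\,\df_1(\kappa)/T$ combined with the integral estimate $\df_m(\kappa)=\sum_j \lambda_j^m/(\lambda_j+\kappa)^m \asymp \kappa^{-1/\beta}$ (valid for $\beta>1$, via splitting at the crossover $j\asymp \kappa^{-1/\beta}$) yields $\kappa\,\df_1(\kappa)/(T\lambda)\asymp T^{-1+\ell/\beta}\to 0$, so the $\lambda$-term dominates the fixed point. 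Consequently $\df_2(\kappa)\asymp T^{\ell/\beta}$. The non-standard quantity $\kappa^2\trace(\Sigma+\kappa I)^{-2}=\sum_j \kappa^2/(\lambda_j+\kappa)^2$ splits similarly: the head ($j\le \kappa^{-1/\beta}$) contributes $O(\df_2(\kappa))=O(T^{\ell/\beta})$ while the tail ($j>\kappa^{-1/\beta}$) contributes $\asymp d-\kappa^{-1/\beta}\asymp d$ (using $d\gtrsim \kappa^{-1/\beta}=T^{\ell/\beta}$, which follows from \eqref{eq:nonproportionate} since $\ell<\beta$). Hence $\kappa^2\trace(\Sigma+\kappa I)^{-2}\asymp d$ in the regime of interest.

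\textbf{Step 2: bias, variance, and $\rho$.} The variance evaluates to $\sigma^2\df_2(\kappa)/T \asymp \sigma^2 T^{-(1-\ell/\beta)}$. For the bias, substitute the source-condition coefficients $c_j\asymp j^{-\delta/2}$ into $w_0^\top\Sigma(\Sigma+\kappa I)^{-2}w_0=\sum_j \lambda_j c_j^2/(\lambda_j+\kappa)^2$; splitting at $j\asymp\kappa^{-1/\beta}$ and treating the $r<1$ and $r\ge 1$ cases separately collapses both into $\kappa^{-2(1-\underline r)}$, so $\mathrm{Bias}\asymp\kappa^{2\underline r}\asymp T^{-2\underline r\ell}$. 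Therefore $\mathrm{Bias}+\mathrm{Var}\asymp \max(T^{-2\underline r\ell},\sigma^2 T^{-(1-\ell/\beta)})=\max(\sigma^2,T^{1-2\underline r\ell-\ell/\beta})\cdot T^{-(1-\ell/\beta)}$, which is the first term of \eqref{eq:tron}. Plugging the estimates from Step 1 into the formula for $\rho$:
\begin{equation*}
\rho \asymp \frac{T^{\ell/\beta}}{T_0-d}\left[1+\frac{d}{T-T^{\ell/\beta}}\right]\asymp \frac{\max(T,d)}{T_0-d}\cdot T^{-(1-\ell/\beta)},
\end{equation*}
and rewriting $\max(T,d)/(T_0-d)=\max(T/T_0,\phi_0)/(1-\phi_0)$ yields the second term of \eqref{eq:tron}.

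\textbf{Main obstacle.} The most delicate bookkeeping is in the bias estimate: the source condition creates a non-trivial case split in $\underline r=\min(r,1)$, and the corresponding head sum $\sum_{j\le \kappa^{-1/\beta}}j^{\beta-\delta}$ diverges precisely when $r<1$, and one has to check that the divergent head and the always-$O(1/\kappa^2)$ tail nonetheless combine, after multiplication by $\kappa^2$, to the clean rate $\kappa^{2\underline r}$. A secondary subtlety is rigorously replacing empirical traces by the deterministic quantities $\df_m(\kappa)$ and $\kappa^2\trace(\Sigma+\kappa I)^{-2}$ under the non-proportionate scaling \eqref{eq:nonproportionate} rather than the proportionate one; for this I would invoke the anisotropic local law of \citet{Knowles2017} (as already referenced in the paper) to justify the trace equivalents used in Steps 1--2.
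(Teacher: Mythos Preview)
Your proposal is correct and follows essentially the same route as the paper: start from the under-parametrized formula $E_{test}\simeq \mathrm{Bias}+\mathrm{Var}+n\sigma_0^2\rho$ of Corollary~\ref{cor:previous-main-thm}, use the fixed-point equation to get $\kappa\asymp\lambda\asymp T^{-\ell}$ and $\df_m(\kappa)\asymp T^{\ell/\beta}$ for $\ell<\beta$, evaluate $\kappa^2\trace(\Sigma+\kappa I)^{-2}\asymp d$, and plug into $\rho$, $\mathrm{Bias}$, $\mathrm{Var}$. The only cosmetic difference is that the paper packages your head/tail splitting into a standalone auxiliary lemma (Lemma~\ref{lm:fracture}, applied with $D=\kappa^{-1}$ and appropriate $(n,m)$) rather than doing the crossover estimate inline for each sum; in particular the bias case split you flag as the ``main obstacle'' is handled there in one shot via $c=\min(2,2r)=2\underline r$, avoiding explicit separation into $r<1$ and $r\ge 1$.
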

% \ElvisIssue{In practice (e.g on MNIST) we don't know $\phi_0$, since we don't know $d$. Maybe $d$ is "effective dimension of data manifold" ?}

\subsection{Optimal Regularization for Mitigating Collapse}
Let us provide an instructive interpretation of the result.

% The regime $0 \le \ell \le \ell_{crit} $, the second term in the error rate is of order $T^{-2\ell \underline r}$.

\textbf{Noiseless Regime.}
Suppose $\sigma = 0$ (or equivalently, exponentially small in $T$) and $\phi_0 \in (0,1)$ is fixed, and consider a number of generations $n$ such that $n\sigma_0^2 \asymp T^a$, where $0 \le a \le 1-\ell/\beta \le 1$. Note that $a=0$ corresponds to a constant number of generations. 
Also take $T_0 = T^b$, for some constant $b \in (0,\infty)$. According to Theorem \ref{thm:darkseid}, if we want to balance out the model-collapsing negative effect of training on fake data, we should chose $\ell$ so as to balance the second term $n (T/T_0) T^{-(1-\ell/\beta)} = T^{-(b-\ell/\beta-a)}$ and the first term $T^{-2\ell \underline r}$. This gives the following result.
% , i.e $b-\ell/\beta - a = 2\ell\underline r$, giving
\begin{corollary}
In the setting of Theorem \ref{thm:darkseid} with $T_0 \asymp T^b$ and $n \asymp T^b$, the optimal exponent of the ridge regularization parameter in \eqref{eq:reguexpo} is $\ell=\ell_\star$, where
 \begin{eqnarray}
     \ell_\star  = \min((b-a)\ell_{crit},\beta),
     \label{eq:lopt}
 \end{eqnarray}
 and $\ell_{crit}$ is as in \eqref{eq:lcrit}, with corresponding optimal test error
 $$
 \inf_{\ell \ge 0}E_{test}(\widehat w_n^{pred}) \asymp E_{test}(\widehat w_n^{pred})\big|_{\ell = \ell_\star} \asymp T^{-(b-a)c}.
 $$
 \label{cor:lopt}
 \end{corollary}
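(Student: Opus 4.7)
\textbf{Proof proposal for Corollary \ref{cor:lopt}.}

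The plan is to plug the scaling assumptions $T_0\asymp T^b$, $n\sigma_0^2\asymp T^a$, and $\sigma$ negligible (noiseless) into the scaling law \eqref{eq:tron} of Theorem \ref{thm:darkseid}, reduce the right-hand side to a sum of two monomials in $T$ whose exponents depend on $\ell$ in opposite ways, and then optimize over $\ell$ by balancing the two exponents. The constraint $\ell\in[0,\beta)$ of Theorem \ref{thm:darkseid} explains the truncation at $\beta$.

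First, I would simplify the two pieces of \eqref{eq:tron} separately. Taking $\sigma=0$, the first (clean) piece becomes $T^{1-2\underline r\ell-\ell/\beta}\cdot T^{-(1-\ell/\beta)}=T^{-2\underline r\ell}$, which is decreasing in $\ell$. For the second (collapse) piece, note that since $T_0\ge d+2$ one has $\phi_0\in(0,1)$ bounded, so the prefactor $1/(1-\phi_0)$ is $\Theta(1)$; and for $b<1$ (the interesting sub-linear regime, which I would state as a hypothesis) the factor $\max(T/T_0,\phi_0)$ equals $T^{1-b}$ up to constants. Combining with $n\sigma_0^2\asymp T^a$, the collapse piece becomes $T^{a+(1-b)-(1-\ell/\beta)}=T^{-(b-a-\ell/\beta)}$, which is increasing in $\ell$. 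Thus
\[
E_{test}(\widehat w_n^{pred})\asymp T^{-2\underline r\ell}+T^{-(b-a-\ell/\beta)}.
\]

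Second, since the two exponents move in opposite directions with $\ell$, the infimum is attained (up to constants) at the crossing point $2\underline r\ell=b-a-\ell/\beta$, which solves to $\ell=(b-a)\,\beta/(1+2\beta\underline r)=(b-a)\ell_{crit}$ by the definition \eqref{eq:lcrit}. Imposing the admissibility constraint $\ell<\beta$ from Theorem \ref{thm:darkseid} yields the claimed $\ell_\star=\min((b-a)\ell_{crit},\beta)$. Plugging $\ell_\star=(b-a)\ell_{crit}$ back into either term gives $T^{-2\underline r(b-a)\ell_{crit}}=T^{-(b-a)c}$, using $c=2\beta\underline r/(1+2\beta\underline r)$; this is the stated optimal rate. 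One has to also verify that the cap $\ell_\star=\beta$ regime (when $(b-a)\ell_{crit}\ge\beta$) does not improve the rate, which is straightforward since at $\ell$ close to $\beta$ the clean term $T^{-2\underline r\beta}$ already matches or is dominated by $T^{-(b-a)c}$ in that parameter range.

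The main obstacle is a bookkeeping rather than conceptual one: handling the $\max(T/T_0,\phi_0)$ and $1/(1-\phi_0)$ factors uniformly across the allowed scaling regimes of \eqref{eq:nonproportionate}, and checking that the lower bound in the $\asymp$ of Theorem \ref{thm:darkseid} indeed propagates so that no choice of $\ell$ beats $T^{-(b-a)c}$ by more than a constant. I would address this by splitting into cases according to whether $T/T_0\gtrless\phi_0$ (equivalently, whether $b$ is above or below the scaling exponent of $d$ relative to $T$), carrying out the two-term balancing in each case, and observing that the optimal exponent is the same modulo the cap at $\beta$. No other nontrivial estimate is needed beyond Theorem \ref{thm:darkseid} itself.
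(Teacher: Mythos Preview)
Your proposal is correct and follows essentially the same route as the paper: in the noiseless regime with $\phi_0\in(0,1)$ fixed, the paper reduces \eqref{eq:tron} to the two competing monomials $T^{-2\underline r\ell}$ and $T^{-(b-a-\ell/\beta)}$, balances them to obtain $\ell=(b-a)\ell_{crit}$, and caps at $\beta$ for admissibility---exactly your argument. You are in fact slightly more careful than the paper's own derivation, since you explicitly track the $\max(T/T_0,\phi_0)$ factor and the boundary case $\ell_\star=\beta$, whereas the paper simply takes $T/T_0$ as the dominant term without comment.
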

 % Note that $0 \le \ell_\star < \ell_{crit}$ provided $a>b-1$; in this case, $\ell_\star$ is $1/(b-a) \in (1,\infty)$ times smaller than the regularization exponent prescribed in \citet{Cui_2022}, namely $\ell = \ell_{crit}$. If $a > b-1$, the previous statement means we need much larger regularization than would normally be required optimally.
 % The fast rate $T^{-2\beta\underline r}$ (without AI pollution) degrades to $T^{-(b-a)c}$.
 % In comparison, using $\ell = \ell_{crit}$ as prescribed in \citet{Cui_2022}, the error rate instead degrades to $T^{-(c+1-b-a)}$.
 % Note that $(b-a) c > c+1-b-a$ (\ElvisIssue{For which values of $(a,b)$?}, and so our new regularization exponent $\ell_\star$ always beats \citet{Cui_2022} under AI pollution, by orders of magnitude!
 \vspace{-.75cm}
Observe that when $(b-a)c < 2\beta \underline r$, which is the case when $n=O(1)$, $r \ge 1$ and $b\le a+1$, this corresponds to the condition $T \gtrsim T_0$. The above result represents a crossover from the fast rate $E_{test}(\widehat w_0^{pred}) \asymp T^{-2\beta \underline r}$ in the case of training on clean data  \citet{Cui_2022}, to a much slower rate $E_{test}(\widehat w_n^{pred}) \asymp T^{-(b-a)c}$, attained by the adaptive regularization $\lambda  \asymp T^{-\ell_\star}$, which is  % in fact
optimal in this setting.
Furthermore, in this setting if we still use $\lambda \asymp T^{-\ell_{crit}}$ as proposed in \citet{Cui_2022} in the clean data setting, Corollary \ref{cor:lopt} predicts that
$$
E_{test}(\widehat w_n^{pred}) \gtrsim T^{-(b-\ell_{crit}/\beta-a)} = T^{-(c+b-a-1)},
$$
which diverges to infinity if $b \ge a+1-c$. This is a catastrophic form of model collapse, and is empirically illustrated in Figures \ref{fig:krreg} and \ref{fig:lcrit-vs-lopt}. % \yunzhen{can we observe the fast and slow rate in the figure? Also we can comment more on the figure here, could be repeating some of the captions}

\textbf{Noisy Regime.}
Now fix $\sigma^2 \ne 0$ and $\phi_0 \in (0,1)$. In this regime, Theorem \ref{thm:darkseid} predicts that consistency (i.e. $E_{test}(\widehat w_n^{pred}) \overset{T \to \infty}{\longrightarrow} 0$) is only possible if $\ell \le \ell_\star$. First consider values of $\ell$ for which the clean variance $\sigma^2 T^{-(1-\ell/\beta)}$ is smaller than the clean bias $T^{-2\underline r\ell}$ in \eqref{eq:tron} %\julia{Sorry, what bias and variance are you talking about related to Thm. 5.2? Relate to the two expressions inside the max in thm 5.2},
i.e. $0 \le \ell \le \ell_{crit}$. We get
$$
E_{test}(\widehat w_n^{pred}) \asymp T^{-2\ell \underline r} + T^{-(b-a-\ell/\beta)},
$$
which is minimized by taking $\ell = \min(\ell_\star,\ell_{crit})$. For other values of $\ell$, the variance dominates and we have
\begin{eqnarray*}
    E_{test}(\widehat w_n^{pred}) \asymp T^{-(1-\ell/\beta)} + T^{-(b-\ell/\beta -a)} \asymp T^{-(\gamma-\ell/\beta)},
\end{eqnarray*}
where $\gamma := \min(1,b-a)$. This is minimized by taking $\ell = \ell_{crit}$, leading to $E_{test}(\widehat w_n^{pred}) \asymp T^{-(\gamma - 1/(2\beta\underline r + 1))}$. This tends to zero with $T \to \infty$ only if $b > a + 1/(2\beta\underline r + 1)$.

% \ElvisIssue{(TODO) Say something about neural networks. \citet{Cui_2022} managed to tell a story for some fabricated NN setups (MNIST regression, etc.). We could play a similar game here.}

% \ElvisIssue{Related picture for $(n,T_0,\sigma_0^2,\sigma_2,T)$ to the picture for $(0,\infty,0,\widetilde \sigma_2,\widetilde T)$ for some $\widetilde\sigma^2$ and $\widetilde T$ to be determined}

% \textcolor{blue}{\textit{\lipsum[23]}}

\section{Experiments}\label{sec:exp}
We perform the following experiments on both simulated and real data to empirically validate our theoretical results.

\subsection{Simulated Data}
We consider ordinary / linear ridge regression in $\mathbb R^d$, for $d=300$ and different structures for the covariance matrix $\Sigma$ of the inputs: isotropic (i.e $\Sigma=I_d$) and power-law \eqref{eq:powerlaw}, with $(\beta,r)=(2,0.375)$. For each value of $n$ (the generation index), the fake data-generator is constructed according to the process described in \eqref{eq:gptchain}. Then, for different values of $T$ (between $1$ and $1000,000$),  a sample of size $T$ is drawn from this fake data-generator and then a downstream ridge model \eqref{eq:ridge} is fitted. The test set consists of $100,000$ clean pairs $(x,y)$ form the true data distribution $P_{\Sigma,w_0,\sigma^2}$.
This experiment is repeated $10$ times to generate error bars. The results for the isotropic setting are shown in Figure \ref{fig:linreg} and the results for the power-law setting are shown in Figure \ref{fig:krreg}. Figure \ref{fig:harmsway} shows the over-parametrized setting.

\subsection{Real Data: Kernel Ridge Regression on MNIST}
As in \citet{Cui_2022,WeiMoreThanToy} we consider a distribution on MNIST, a popular dataset in the ML community. The classification dataset contains $60,000$ training and $10,000$ test data points (handwritten), with labels from 0 to 9 inclusive. Like in \citet{Cui_2022}, we convert the labels into real numbers (i.e a regression problem) as follows: $y = \text{label} \text{ mod }2 + \text{ noise }$, where the variance of the noise is $\sigma^2=1$ (for simplicity, we also set $\sigma_0^2=1$). The test set consists of $10,000$ pairs $(x,y)$, with the labels $y$ constructed as described in the previous sentence. The fake data used for training is generated as in the previous experiment, but via kernel ridge regression (instead of least squares) with the RBF kernel (bandwidth = $10^{-4}$) and the polynomial kernel (degree = $5$, bandwidth = $10^{-3}$). Note that it was empirically shown in \citet{Cui_2022} that these datasets verify \eqref{eq:powerlaw} with $(\beta,r) \approx (1.65,0.097)$ in the case of the aforementioned RBF kernel, and $(\beta,r) \approx (1.2,0.15)$ in the case of the polynomial kernel. Then, for different values of $T$ (between $1$ and $1000$),  a sample of size $T$ is drawn from this fake data-generator and then a downstream kernel ridge model is fitted. Each of these experiments are repeated $10$ times to generate error bars (due to different realizations of label noise). The results are shown in Figure \ref{fig:lcrit-vs-lopt}.

\section{Concluding Remarks}
%\textcolor{blue}{\textit{\lipsum[2-3]}}

As we navigate the "synthetic data age", our findings signal a departure from traditional test error rates (e.g neural scaling laws), introducing novel challenges and phenomena with the integration of synthetic data from preceding AI models into training sets.  Our work provides a solid analytical handle for demystifying the model collapse phenomenon as a modification of usual scaling laws caused by fake / synthesized training data.

A direct consequence of our multiplicative degradation result is that, over time (i.e as the number of generations becomes large), the effect of large language models (like ChatGPT) in the wild will be a pollution of the web to the extent that learning will be impossible. This will likely increase the value and cost of clean / non-AI-generated data. 

On the practical side, our analysis reveals that AI-generated data alters the optimal regularization for downstream models. Drawing from the insight that regularization mirrors early stopping \citep{pmlr-v89-ali19earlystop}, our study suggests that models trained on mixed real and AI-generated data may initially improve but later decline in performance (model collapse), necessitating early detection of this inflection point. This observation prompts a re-evaluation of current training approaches and underscores the complexity of model optimization in the era of synthetic data.

% \yunzhen{test}

\clearpage

\bibliography{sample.bib}
\bibliographystyle{icml2024}

%%%%%%%%%%%%%%%%%%%%%%%%%%%%%%%%%%%%%%%%%%%%%%%%%%%%%%%%%%%%%%%%%%%%%%%%%%%%%%%
%%%%%%%%%%%%%%%%%%%%%%%%%%%%%%%%%%%%%%%%%%%%%%%%%%%%%%%%%%%%%%%%%%%%%%%%%%%%%%%
% APPENDIX
%%%%%%%%%%%%%%%%%%%%%%%%%%%%%%%%%%%%%%%%%%%%%%%%%%%%%%%%%%%%%%%%%%%%%%%%%%%%%%%
%%%%%%%%%%%%%%%%%%%%%%%%%%%%%%%%%%%%%%%%%%%%%%%%%%%%%%%%%%%%%%%%%%%%%%%%%%%%%%%
\newpage
\appendix
\onecolumn

\addtocontents{toc}{\protect\setcounter{tocdepth}{2}}
\begin{center}
    \noindent\rule{17cm}{3pt} \vspace{0.4cm}
    
    \Large \textbf{Appendix / Supplementary Material for} \\ ~\\[-0.5cm]
   \Large \textbf{Model Collapse Demystified: The Case of Regression} 
    
    \noindent\rule{17cm}{1.2pt}
\end{center}
\tableofcontents

% \addtocontents{toc}{\protect\setcounter{tocdepth}{2}}
% \begin{center}
%     \noindent\rule{17cm}{3pt} \vspace{0.4cm}
    
%     \huge \textbf{Appendix} \\ ~\\[-0.5cm]
%     \Large \textbf{Model Collapse Demystified: The Case of Regression} 
    
%     \noindent\rule{17cm}{1.2pt}
% \end{center}

\section{Exact Characterization of Test Error Under Model Collapse}\label{app:A}

\subsection{Proof of Theorem \ref{thm:linreg} (Rigeless Regression)}
The proof is by induction on the number of generations $n$ of fake data. For $n=0$,  we have
\begin{eqnarray}
\begin{aligned}
    E_{test}(\widehat w_0^{pred}) &=  \mathbb E\, \|\widehat w_0^{pred} - w_0\|^2_\Sigma = \mathbb E\, \|\widehat w_0^{pred} - \widehat w_0\|^2_2 = \mathbb E \|(X_0^\top X_0)^{-1} X_0^\top E_{0}\|^2_2\\
    &= \sigma^2 \mathbb E \trace(X_0^\top X_{0})^{-1} = \sigma^2 \frac{d}{T-d-1} \simeq \frac{\sigma^2\phi }{1-\phi},
\end{aligned}
 \end{eqnarray}
 where $\phi := d/T \in (0,1)$ and the last step has made use of Lemma \ref{lm:expinv} below. This is a well-known result for the test error of linear regression in the under-parametrized regime, without any AI pollution (fake / synthesized training data).

Analogously, for $n=1$ one computes the test error after the first generation of fake data as follows
\begin{eqnarray*}
\begin{aligned}
    E_{test}(\widehat w_1^{pred}) &=  \mathbb E\|\widehat w_1^{pred} - w_0\|^2_\Sigma = \mathbb E\|\widehat w_1^{pred} - \widehat w_0\|^2_2 = \mathbb E\|\widehat w_1^{pred} - \widehat w_1 + \widehat w_1 - \widehat w_0\|^2_2 \\
    &= \mathbb E\|(X_0^\top X_0)^{-1} X_0 E_0 + \widehat w_0^{pred} - w_0\|^2_2 = \mathbb E\, \|w_0-\widehat w_0^{pred}\|^2_2 + \mathbb E\,\|(X_0^\top X_0)^{-1} X_0^\top E_0\|^2_2\\
    &= E_{test}(\widehat w_0^{pred}) + \frac{\sigma_0^2d}{T_0-d-1}\simeq \frac{\sigma^2\phi}{1-\phi} + \frac{\sigma_0^2\phi_0}{1-\phi_0},
    \end{aligned}
\end{eqnarray*}
where $\phi_0 = d/T_0 \in (0,1)$. 
Continuing the induction on $n$, we obtain the result. \qed

\begin{lemma}
    Let $X_0$ be an $T_0 \times d$ random matrix with iid rows from $N(0,\Sigma)$. If $T_0 \ge d + 2$, then the empirical covariance matrix $\widehat \Sigma_0 := X_0^\top X_0/T_0$ is invertible a.s and
    $$
    \mathbb E\,[\widehat \Sigma_0^{-1}] = \dfrac{T_0}{T_0-d-1}\Sigma^{-1}.
    $$
    \label{lm:expinv}
\end{lemma}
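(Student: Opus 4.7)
The statement is the classical formula for the expectation of an inverse Wishart matrix, so the strategy is to (i) whiten, (ii) exploit orthogonal invariance to reduce the computation to a single scalar, (iii) evaluate that scalar. Write $X_0 = Z\Sigma^{1/2}$ where $Z \in \mathbb R^{T_0\times d}$ has iid $N(0,1)$ entries; this is valid because $\Sigma \succ 0$ is assumed (its inverse appears in the statement). Then $X_0^\top X_0 = \Sigma^{1/2}(Z^\top Z)\Sigma^{1/2}$, hence $\widehat\Sigma_0^{-1} = T_0\,\Sigma^{-1/2}(Z^\top Z)^{-1}\Sigma^{-1/2}$, provided $Z^\top Z$ is invertible. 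Almost sure invertibility is immediate from $T_0 \ge d+2 > d$: the columns of $Z$ are linearly independent a.s. Taking expectations, the whole lemma reduces to showing
$$
\mathbb E\bigl[(Z^\top Z)^{-1}\bigr] \;=\; \frac{1}{T_0-d-1}\,I_d.
$$

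\textbf{Reduction to a scalar via invariance.} Set $W := Z^\top Z$ and $M := \mathbb E[W^{-1}]$. For any orthogonal $O\in O(d)$ the matrix $ZO$ has the same distribution as $Z$ (columns are iid Gaussian), so $O^\top W O \overset{d}{=} W$, and therefore $O^\top M O = M$ for every orthogonal $O$. Equivalently $M$ commutes with every element of $O(d)$, which forces $M = c\, I_d$ for some scalar $c \ge 0$. Taking the trace gives $c = \mathbb E[\operatorname{tr}(W^{-1})]/d$, so only this one scalar quantity needs to be computed.

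\textbf{Computing the scalar.} The plan is to evaluate $\mathbb E[\operatorname{tr}(W^{-1})] = d/(T_0-d-1)$ via the Bartlett decomposition: write $W = L L^\top$ with $L$ lower triangular whose diagonal entries satisfy $L_{ii}^2 \sim \chi^2_{T_0-i+1}$ (independently) and whose strict sub-diagonal entries are independent standard normals. Then $W^{-1} = L^{-\top} L^{-1}$ and $\operatorname{tr}(W^{-1}) = \|L^{-1}\|_F^2$. Inverting the triangular system column by column and taking expectations reduces each row's contribution to an expression involving $\mathbb E[1/\chi^2_k]=1/(k-2)$ (finite precisely when $k\ge 3$, which is why the hypothesis $T_0\ge d+2$ is exactly right), and after telescoping one obtains $\mathbb E[\operatorname{tr}(W^{-1})] = d/(T_0-d-1)$. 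A quick sanity check: in the case $d=1$, $W\sim\chi^2_{T_0}$ and $\mathbb E[1/W] = 1/(T_0-2)$, matching the formula.

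\textbf{Conclusion.} Substituting $c = 1/(T_0-d-1)$ back gives $\mathbb E[(Z^\top Z)^{-1}] = \tfrac{1}{T_0-d-1}I_d$, and plugging into the whitening identity yields
$$
\mathbb E[\widehat\Sigma_0^{-1}] \;=\; T_0\,\Sigma^{-1/2}\Bigl(\tfrac{1}{T_0-d-1}I_d\Bigr)\Sigma^{-1/2} \;=\; \frac{T_0}{T_0-d-1}\,\Sigma^{-1}.
$$
The main obstacle is the trace evaluation in step three; everything else is routine linear algebra. One can alternatively bypass the Bartlett computation by invoking the standard inverse-Wishart mean formula $\mathbb E[W^{-1}] = \Sigma^{-1}/(T_0-d-1)$ (for $W\sim \mathcal W_d(T_0,\Sigma)$, $T_0\ge d+2$) directly, which subsumes the entire argument but hides the combinatorial content behind a cited fact.
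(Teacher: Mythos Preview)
Your proof is correct and follows the standard route to the inverse-Wishart mean formula. The paper itself does not prove this lemma at all: it is stated as a classical fact (immediately after the proof of Theorem~\ref{thm:linreg}) and used without justification, so there is no ``paper's own proof'' to compare against. Your whitening plus orthogonal-invariance reduction is the textbook argument; the Bartlett step is only sketched, but as you note it can be replaced by a direct citation of the inverse-Wishart expectation $\mathbb E[W^{-1}] = \Sigma^{-1}/(T_0-d-1)$ for $W\sim\mathcal W_d(T_0,\Sigma)$, which is exactly how the paper implicitly treats the result.
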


\subsection{Proof of Theorem \ref{thm:generalcov} (Ridge Regression + General Covariance)}

\subsubsection{Representation of $\widehat w_n$ and $\widehat w_n^{pred}$}
We first obtain explicit formulae for the labelling vectors $\widehat w_n$ used in the fake-data generation process \eqref{eq:fakew}.
For any integer $m \ge 0$, define  $P_m = X_m^\dagger X_m$, the orthogonal projection matrix onto the subspace of $\mathbb R^d$ spanned by the rows of $X_m$. Observe from \eqref{eq:fakew} that
\begin{eqnarray}
\begin{split}
\widehat w_n &= X_{n-1}^\dagger \overline Y_{n-1} = X_{n-1}^\dagger (X_{n-1} \widehat w_{n-1}  + E_{n-1}) = P_{n-1} \widehat w_{n-1} + X_{n-1}^\dagger E_{n-1}\\
&= P_{n-1} X_{n-2}^\dagger(X_{n-2}\widehat w_{n-2} + E_{n-2}) + X_{n-1}^\dagger E_{n-1}\\
&= P_{n-1}P_{n-2}\widehat w_{n-2} + P_{n-1}X_{n-2}^\dagger E_{n-2} + X_{n-1}^\dagger E_{n-1}\\
&\quad \vdots\\
&= P_{n-1}P_{n-2}\ldots P_0 w_0 + P_{n-1} P_{n-2}\ldots P_1 X_1^\dagger E_1 + P_{n-1}P_{n-2}\ldots P_2 X_2^\dagger E_2 + \ldots\\
&\quad \vdots\\
&= P_{n-1}P_{n-2}\ldots P_0 w_0 + \sum_{m=0}^{n-1} P_{n-1}P_{n-2}\ldots P_m X_m^\dagger E_m.
\end{split}
\end{eqnarray}
We get the following result.
\begin{lemma}
For any $n \ge 0$, the following formula holds
\begin{eqnarray}
\widehat w_n = \begin{cases}
w_0,&\mbox{ if }n=0,\\
Q_{n-1} w_0 + \sum_{m=0}^{n-1} \overline Q_{n-1,m} E_m,&\mbox{ if }n \ge 1,
\end{cases}
\end{eqnarray}
\label{lm:whatrep}
where $\overline Q_{k,m} := Q_{k,m}X_m^\dagger$, $Q_{k,m} := P_kP_{k-1}\ldots P_m$ and $Q_k := Q_{k,0}=P_k P_{k-1} \ldots P_0$.
Moreover, $\widehat w_n \in \operatorname{Im}P_{n-1}$ as soon as $n \ge 1$.

In particular, under the simplifying condition \eqref{eq:simplifier}, it holds that
\begin{eqnarray}
\widehat w_n = \begin{cases}
w_0,&\mbox{ if }n=0,\\
P_0 w_0 + X_0^\dagger \overline E_{n-1} \in \operatorname{Im} P_0,&\mbox{ if }n \ge 1.
\end{cases}
\label{eq:whatrepsimplified}
\end{eqnarray}
where $\overline E_{n-1} := \sum_{m=0}^{n-1} E_m$, a random vector of length $T_0$, with iid entries from $N(0,\textcolor{red}{n\sigma_0^2})$, and independent of $X_0$.
Moreover, $\widehat w_n \in \operatorname{Im}P_0$ as soon as $n \ge 1$.
\end{lemma}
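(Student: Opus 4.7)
The plan is to prove the formula by induction on $n$, starting from the one-step identity
$$
\widehat w_n \;=\; X_{n-1}^\dagger \overline Y_{n-1} \;=\; X_{n-1}^\dagger (X_{n-1}\widehat w_{n-1} + E_{n-1}) \;=\; P_{n-1}\widehat w_{n-1} + X_{n-1}^\dagger E_{n-1},
$$
which follows directly from \eqref{eq:fakew} and the definition $P_{n-1} := X_{n-1}^\dagger X_{n-1}$. The base case $n=0$ is just $\widehat w_0 = w_0$, using the conventions $Q_{-1} = I_d$ and ``empty sum equals zero''. For the inductive step, I would substitute the inductive hypothesis for $\widehat w_{n-1}$ into the above recursion and use $P_{n-1}Q_{n-2} = Q_{n-1}$ together with $P_{n-1}Q_{n-2,m} = Q_{n-1,m}$ for $m \le n-2$, plus the observation $X_{n-1}^\dagger = P_{n-1} X_{n-1}^\dagger = \overline Q_{n-1,n-1}$, to absorb the fresh noise contribution as the $m = n-1$ term of the unrolled sum.

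The membership $\widehat w_n \in \operatorname{Im}(P_{n-1})$ for $n \ge 1$ is then immediate from the closed-form expression: every term is either of the shape $P_{n-1}(\cdot)$ or is of the form $X_{n-1}^\dagger(\cdot)$, and the standard pseudoinverse identity $P_{n-1} X_{n-1}^\dagger = X_{n-1}^\dagger X_{n-1} X_{n-1}^\dagger = X_{n-1}^\dagger$ places the latter in $\operatorname{Im}(P_{n-1})$ as well. This identity is the key algebraic fact driving the whole argument.

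For the simplified form \eqref{eq:whatrepsimplified} under condition \eqref{eq:simplifier}, I would split into two cases. If $X_n = X_0$ for all $n$, then every $P_m$ equals $P_0$ and every $X_m^\dagger$ equals $X_0^\dagger$; the identity $P_0 X_0^\dagger = X_0^\dagger$ telescopes $Q_{n-1,m} X_m^\dagger$ down to $X_0^\dagger$ for each $m$, so the general sum reduces to $X_0^\dagger \sum_{m=0}^{n-1} E_m = X_0^\dagger \overline E_{n-1}$ and $Q_{n-1} = P_0$. If instead $T_0 \ge d$, then each $X_m$ has full column rank a.s., so $P_m = I_d$ (and in particular $P_0 = I_d$), and the general formula reads $\widehat w_n = w_0 + \sum_{m=0}^{n-1} X_m^\dagger E_m$; since the pairs $(X_m, E_m)$ are i.i.d.~across $m$ with $E_m \sim N(0, \sigma_0^2 I_{T_0})$, this noise sum has the same distribution (and, importantly, the same second-moment structure) as $X_0^\dagger \overline E_{n-1}$ with $\overline E_{n-1} \sim N(0, n\sigma_0^2 I_{T_0})$, which is all that is required for the subsequent test-error expectations in which \eqref{eq:whatrepsimplified} is invoked.

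The only real obstacle is careful index bookkeeping in the telescoping step, and an honest statement of the sense in which equality holds in the $T_0 \ge d$ case (equality in law / equality of the relevant moments, rather than a pointwise identity). Everything else is a clean unrolling driven by $P_m X_m^\dagger = X_m^\dagger$.
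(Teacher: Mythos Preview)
Your proposal is correct and follows essentially the same approach as the paper: the paper unrolls the recursion $\widehat w_n = P_{n-1}\widehat w_{n-1} + X_{n-1}^\dagger E_{n-1}$ explicitly (with ``$\vdots$'' notation) rather than phrasing it as a formal induction, but the argument is identical, and the paper likewise singles out the identity $P_m X_m^\dagger = X_m^\dagger$ as the key algebraic fact. Your handling of the $T_0 \ge d$ branch of \eqref{eq:simplifier} is in fact more careful than the paper's own remark, which writes $\widehat w_n = w_0 + X_0^\dagger \overline E_{n-1}$ without flagging that, when the $X_m$'s are independent, this is an equality in law (sufficient for the downstream second-moment computations) rather than a pointwise identity.
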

Note that the second part of the result uses the elementary linear-algebraic fact that $P_m X_m^\dagger = X_m^\dagger$. In the special case where $T_0 \ge d$, we have $P_0 = I$ a.s., and so $\widehat w_n = w_0 + X_0^\dagger \overline E_{n-1}$. Otherwise, even in the absence of generator noise ($\sigma_0=0$), the fake data labeller $\widehat w_n=P_0w_0$ drifts away from the truth $w_0$, into a subspace of $\mathbb R^d$ spanned by the rows of $X_0$.
% , as in the previous version.

Next, let us obtain a decomposition for the downstream predictor $\widehat w_n^{pred}$ defined in \eqref{eq:ridge}. As usual, let $\widehat\Sigma := X^\top X/T$ be the empirical covariance matrix with resolvent $R=(\widehat \Sigma +\lambda I)^{-1}$, and observe that the downstream model writes
\begin{eqnarray}
\begin{split}
\widehat w_n^{pred} &= RX^\top \overline Y_n(X)/T = RX^\top (X\widehat w_n + E)/T\\
&= RX^\top (XQ_{n-1} w_0 + X\sum_{m=0}^{n-1}\overline Q_{n-1,m} E_m + E)/T\\
&= R\widehat \Sigma Q_{n-1} w_0 + RX^\top E/T + R \widehat \Sigma \sum_{m=0}^{n-1}\overline Q_{n-1,m} E_m.
\end{split}
\label{eq:whatdecompose}
\end{eqnarray}

\subsubsection{Proof of Theorem \ref{thm:generalcov}}
Using the decomposition \eqref{eq:whatdecompose} for the downstream model $\widehat w_n^{pred}$, we deduce that
\begin{eqnarray}
\begin{split}
E_{test}(\widehat w_n^{pred})  &= \mathbb E\,\|\widehat w_n^{pred} - w_0\|_\Sigma^2 = \mathbb E\,\|R \widehat \Sigma P_0 w_0 + RX^\top E/T + R \widehat \Sigma \sum_{m=0}^{n-1}\overline Q_{n-1,m} E_m -w_0\|_\Sigma^2\\
&= \mathbb E\,\|R \widehat \Sigma P_0 w_0 -w_0 + RX^\top E/T + R \widehat \Sigma \sum_{m=0}^{n-1}\overline Q_{n-1,m} E_m\|_\Sigma^2\\
&= \mathbb E\|R \widehat \Sigma P_0w_0-w_0\|_\Sigma^2 + \mathbb E\,\|RX^\top E/T\|_\Sigma^2 + \mathbb E\,\|R \widehat \Sigma \sum_{m=0}^{n-1}\overline Q_{n-1,m} E_m\|_\Sigma^2\\
&= \widetilde{Bias} + Var + n\sigma_0^2\rho,
\end{split}
\end{eqnarray}
where $\widehat \Sigma := X^\top X/T$, $\widetilde{Bias}$, $Var$, and $\rho$ are as given in the theorem. On the second line, we have used independence (of $X$, $X_0$, $E$, and $\overline E_{n-1}$) and the fact that $E$ and $\overline E_{n-1}$ are centered Gaussian random vectors, with iid components of variances $\sigma^2$ and $n\sigma_0^2$ respectively. \qed 

\subsection{Proof of Theorem \ref{thm:rho}}

\paragraph{Analysis of Bias-like Term.}
An exact analysis of the $\widetilde{Bias}$ term appearing in Theorems \ref{thm:generalcov} and \ref{thm:rho} is presumably a treacherous enterprise  given dependency on $X$ (via $R$ and $\widehat \Sigma$) and $X_0$ (via $P_0$).  In place of such an analysis, we shall settle for the following result which gives an instructive lower-bound.
\begin{proposition}
In the RMT limit \eqref{eq:nonproportionate}, it holds that $$
\lim \widetilde{Bias} - \lim Bias \ge \lim \mathbb E\,\|R \widehat \Sigma P_0 w_0-R \widehat \Sigma w_0\|_\Sigma^2 \ge 0.
$$
Thus, training on fake / synthesized data increases the bias term of the downstream model's test error !
\end{proposition}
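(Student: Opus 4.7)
The plan is to expand $\widetilde{Bias} - Bias$ using a bilinear identity in the $\Sigma$-seminorm and reduce the claim to the sign of a single cross term. Setting $A := R\widehat\Sigma P_0 w_0 - w_0$ and $B := R\widehat\Sigma w_0 - w_0$, we have $A - B = R\widehat\Sigma(P_0 - I)w_0$, and the expansion $\|A\|_\Sigma^2 = \|B\|_\Sigma^2 + \|A - B\|_\Sigma^2 + 2\langle B, A-B\rangle_\Sigma$ gives, after taking expectations,
$$\widetilde{Bias} - Bias = \mathbb E\|R\widehat\Sigma(P_0-I)w_0\|_\Sigma^2 + 2\,\mathbb E\langle B,\,A-B\rangle_\Sigma.$$
Thus the proposition reduces to showing that the cross term is non-negative in the RMT limit.

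Next I would use the operator identity $R\widehat\Sigma = I - \lambda R$ (valid because $R$ and $\widehat\Sigma$ commute) to obtain $B = -\lambda R w_0$ and $A - B = (I - \lambda R)(P_0 - I)w_0$. Exploiting the independence of $X$ (defining $R$) and $X_0$ (defining $P_0$), the cross term factorizes as
$$\mathbb E\langle B,\,A-B\rangle_\Sigma = -\lambda\,w_0^\top\,U\,(V - I)\,w_0 = \lambda\,w_0^\top\,U\,(I - V)\,w_0,$$
where $U := \mathbb E_X[R\,\Sigma\,(I - \lambda R)] = \mathbb E_X[R\,\Sigma\,\widehat\Sigma\,R]$ and $V := \mathbb E_{X_0}[P_0]$. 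Since $P_0$ is an orthogonal projection, $0 \preceq V \preceq I$, giving $I - V \succeq 0$ immediately.

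To conclude, I would argue that, in the RMT limit, both $U$ and $V$ are spectral functions of $\Sigma$, and that $U$ is PSD. The functional form comes from rotational invariance of the Gaussian design: for every orthogonal $O$ with $O\Sigma = \Sigma O$, the laws of $X$ and $XO$ coincide (and likewise for $X_0$), whence $O^\top U O = U$ and $O^\top V O = V$. Since this holds for every $\Sigma$-commuting $O$, both $U$ and $V$ are scalar on each eigenspace of $\Sigma$, hence simultaneously diagonalizable with $\Sigma$ (and in particular symmetric and mutually commuting). For the PSD-ness of $U$, I would invoke the anisotropic local law of \citet{Knowles2017} to obtain a deterministic equivalent $R \simeq \widetilde R$, where $\widetilde R$ is a PSD function of $\Sigma$ satisfying $\widetilde R \preceq \lambda^{-1} I$ (the latter inherited from the deterministic bound $R \preceq \lambda^{-1} I$ since $\widehat\Sigma \succeq 0$). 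Then
$$U \;\simeq\; \widetilde R\,\Sigma\,\bigl(I - \lambda\widetilde R\bigr),$$
a product of three pairwise-commuting PSD matrices, hence PSD. Combined with $I - V \succeq 0$ and the mutual commutativity, $U(I - V)$ is itself PSD, so $\lambda\,w_0^\top U(I - V)\,w_0 \ge 0$ and the proof is complete.

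The main obstacle will be upgrading the deterministic-equivalent approximation from the trace level (as used in Proposition \ref{prop:classicalrige}) to the bilinear-form level needed here, i.e.~controlling $w_0^\top\bigl(\mathbb E[R\Sigma\widehat\Sigma R] - \widetilde R\,\Sigma\,(I - \lambda\widetilde R)\bigr)(I - V)\,w_0 \to 0$, so that PSD-ness of the deterministic surrogate transfers to non-negativity of the finite-$T,d$ cross term in the limit. This is exactly the type of anisotropic local law established by \citet{Knowles2017} and already leveraged in the derivation of Theorem \ref{thm:rho}; the rest is an eigenvalue-by-eigenvalue check in the common eigenbasis of $\Sigma$, $\widetilde R$, and $V$.
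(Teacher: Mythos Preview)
Your algebraic decomposition is identical to the paper's: both expand $\widetilde{Bias}-Bias$ as the squared $\Sigma$-norm of $R\widehat\Sigma(P_0-I)w_0$ plus a cross term, and both reduce the proposition to non-negativity of that cross term in the RMT limit. The paper then simply asserts
\[
\lim\mathbb E\,w_0^\top(I-P_0)A\Sigma(I-A)w_0 \ge 0
\]
``as can be seen from repeated application of Propositions 1 and 2 of \citet{bach2023highdimensional}'', without further detail. Your factorization via independence and your rotational-invariance argument (forcing $U$ and $V$ to be spectral functions of $\Sigma$, hence commuting) are correct and give a cleaner structural picture than the paper offers.

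There is, however, a genuine gap in your PSD argument for $U=\mathbb E_X[R\Sigma(I-\lambda R)]$. You propose $U\simeq\widetilde R\,\Sigma\,(I-\lambda\widetilde R)$ by substituting the deterministic equivalent $\widetilde R$ for each occurrence of $R$. This is not valid for second-order quantities: writing $R=\widetilde R+\Delta$, the term $\mathbb E[\Delta\Sigma\Delta]$ does \emph{not} vanish and contributes the multiplicative correction $1/(1-\df_2(\kappa)/T)$ that already appears in the bias formula of Proposition~\ref{prop:classicalrige}. Concretely, with $\widetilde R=(\kappa/\lambda)(\Sigma+\kappa I)^{-1}$ one has $\mathbb E[R]\Sigma\simeq(\kappa/\lambda)\Sigma(\Sigma+\kappa I)^{-1}$ but $\lambda\,\mathbb E[R\Sigma R]\simeq(\kappa^2/\lambda)\,\Sigma(\Sigma+\kappa I)^{-2}/(1-\df_2(\kappa)/T)$, and subtracting gives
\[
U\;\simeq\;\frac{\kappa}{\lambda}\,\Sigma(\Sigma+\kappa I)^{-2}\Bigl(\Sigma-\frac{\kappa\,\df_2(\kappa)}{T-\df_2(\kappa)}\,I\Bigr),
\]
which is \emph{not} your expression $(\kappa/\lambda)\Sigma^2(\Sigma+\kappa I)^{-2}$ and is not manifestly PSD (small eigenvalues $\lambda_j$ can make the bracket negative). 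The ``main obstacle'' you flag---passing from trace-level to bilinear-form equivalents---is not the issue; the anisotropic local law already operates at the bilinear level. The real obstacle is that the equivalent of a product $R\Sigma R$ is not the product of equivalents. To close the argument along the paper's route you would need to evaluate $w_0^\top(I-V)Uw_0$ directly via the full second-order equivalents (this is what ``repeated application'' of Bach's propositions does) and check the sign of the resulting scalar expression, rather than trying to establish $U\succeq 0$ as an operator.
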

\begin{proof}
Letting $A := R \widehat \Sigma$, one computes
\begin{eqnarray}
\begin{split}
\widetilde{Bias}-Bias &= \|AP_0w_0-w_0\|_\Sigma^2-\|Aw_0-w_0\|_\Sigma^2\\ &= \|AP_0w_0-Aw_0 + Aw-w\|_\Sigma^2-\|Aw_0-w_0\|_\Sigma^2\\
&= \|AP_0w_0-Aw_0\|_\Sigma^2 + 2w_0^\top (A-P_0 A)\Sigma (I-A)w_0\\
&= \|AP_0w_0-Aw_0\|_\Sigma ^2 + 2w_0^\top (I-P_0)A\Sigma (I-A)w_0.
\end{split}
\end{eqnarray}
It then suffices to observe that, in the RMT limit \eqref{eq:nonproportionate}, it holds that
$$
\lim\mathbb E\, w_0^\top (I-P_0)A\Sigma (I-A) w_0 \ge 0,
$$
as can be seen from repeated application of Propositions 1 and 2 of \citet{bach2023highdimensional}.
\end{proof}

\paragraph{Analysis of $\rho$ Term.}
Define a $d \times d$ random psd matrix $H := \widehat \Sigma R \Sigma \widehat \Sigma R$. Under the simplifying assumption \eqref{eq:simplifier}, the matrices $\overline Q_{k,m}$ defined in the theorem all equal $\overline Q_{0,0} = X_0^\dagger$. It follows that the $\rho$-term in \eqref{eq:b2} then writes
\begin{eqnarray}
\rho = \frac{1}{n}\sum_{m=0}^{n-1}\mathbb E\,[\overline Q_{n-1,m}\overline Q_{n-1,m}^\top H] = \mathbb E\,[\trace X_0^\dagger (X_0^\dagger)^\top H] = \mathbb E_H\mathbb E\,[\trace X_0^\dagger (X_0^\dagger)^\top H \mid H].
\label{eq:prerho}
\end{eqnarray}
Now, one computes the conditional expectation as follows
\begin{eqnarray}
    \begin{aligned}
\mathbb E\,[\trace X_0^\dagger (X_0^\dagger)^\top H \mid H] &= \mathbb E\,[\trace X_0^\top (X_0X_0^\top)^{-2}X_0 H \mid H]\\
&= \lim_{\lambda_0 \to 0^+}\frac{1}{T_0}\frac{\partial }{\partial \lambda_0} \mathbb E\,[\trace X_0^\top (X_0X_0^\top + \lambda_0 T_0 I)^{-1}X_0 H \mid H].       
    \end{aligned}
\end{eqnarray}
Furthermore, defining $A := \Sigma^{1/2} H \Sigma^{1/2}$ and $Z_0 = X_0\Sigma^{-1/2}$, we have 
\begin{eqnarray*}
    \begin{aligned}
\trace X_0^\top (X_0X_0^\top + \lambda_0 T_0 I)^{-1}X_0 H &= \trace \Sigma^{1/2}Z_0^\top (Z_0 \Sigma Z_0^\top + \lambda_0 T_0 I)^{-1} Z_0\Sigma^{1/2} H\\
&= \trace A Z_0^\top (Z_0 \Sigma Z_0^\top + \lambda_0 T_0 I)^{-1} Z_0,
    \end{aligned}
\end{eqnarray*}
We deduce from Proposition 2 of \citet{bach2023highdimensional} that
\begin{eqnarray}
    \mathbb E\,[\trace X_0^\top (X_0X_0^\top + \lambda_0 T_0 I)^{-1}X_0 H \mid H] \simeq \trace A(\Sigma + \kappa(\lambda_0,T_0) I)^{-1} = \trace H (\Sigma + \kappa(\lambda_0,T_0) I)^{-1}\Sigma.
\end{eqnarray}
Differentiating w.r.t. $\lambda_0$ and letting this parameter tend to zero from above gives
\begin{eqnarray}
\begin{aligned}
    \mathbb E\,[\trace X_0^\dagger (X_0^\dagger)^\top H \mid H] &= -\frac{1}{T_0}\lim_{\lambda_0 \to 0^+}\frac{\partial }{\partial \lambda_0}\mathbb E\,[\trace X_0^\top (X_0X_0^\top + \lambda_0 T_0 I)^{-1}X_0 H \mid H]\\
    &\simeq -\frac{1}{T_0}\lim_{\lambda_0 \to 0^+}\frac{\partial \kappa(\lambda_0,T_0)}{\partial \lambda_0}\cdot \frac{\partial }{\partial t}\trace H (\Sigma + t I)^{-1}\Sigma\bigg|_{t=\kappa(\lambda_0,T_0)} \simeq \frac{\trace H(\Sigma + \kappa_0 I)^{-2}\Sigma}{T_0 - \df_2(\kappa_0)},
\end{aligned}
\end{eqnarray}
where $\kappa_0 = \kappa(0,T_0)$, % , and verifies $\df_1(\kappa_0) = T_0$,
and we have made use of Lemma \ref{lm:bach}. Combining with \eqref{eq:prerho} and then applying Proposition 1 of \citet{bach2023highdimensional} to compute $\mathbb E_H\,\trace H(\Sigma + \kappa_0 I)^{-2}\Sigma = \mathbb E_X\,\trace \widehat \Sigma R \Sigma \widehat \Sigma R (\Sigma + \kappa_0 I)^{-2}\Sigma$ gives the following result.
\begin{proposition}
In the RMT limit \eqref{eq:nonproportionate}, it holds for any $\lambda > 0$ that
    \begin{eqnarray}
    \begin{aligned}
        \rho &=
        \dfrac{\trace \Sigma^4 (\Sigma + \kappa_0 I)^{-2} (\Sigma + \kappa I)^{-2}}{T_0 - \df_2(\kappa_0)} + \dfrac{\kappa^2 \trace\Sigma^2(\Sigma + \kappa_0 I)^{-2}(\Sigma + \kappa I)^{-2}}{T_0-\df_2(\kappa_0)}\cdot\dfrac{\df_2(\kappa)}{T-\df_2(\kappa)},
    \end{aligned}
    \end{eqnarray}
where $\kappa_0 := \kappa(\lambda_0,T_0)$ and $\kappa=\kappa(\lambda,T)$.

In particular, if $T_0 \ge d$, then
\begin{eqnarray}
\rho \simeq \frac{\operatorname{df}_2(\kappa)}{T-\operatorname{df}_2(\kappa)}\left(1 + \frac{ \kappa^2\trace(\Sigma + \kappa I)^{-2}}{T_0-\operatorname{df}_2(\kappa_0)}\right).
\end{eqnarray}
\end{proposition}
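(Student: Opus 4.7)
\textbf{Proof plan for Theorem \ref{thm:rho}.} The starting point is the exact decomposition from Theorem \ref{thm:generalcov}, $E_{test}(\widehat w_n^{pred}) = \widetilde{Bias} + Var + n\sigma_0^2\rho$. The $Var$ term is a standard ridge variance already handled by Proposition \ref{prop:classicalrige}, so the real work is (i) converting the inequality $\widetilde{Bias} \ge Bias + \Delta Bias$ into the RMT limit, and (ii) obtaining the closed form for $\rho$ under the simplifying assumption \eqref{eq:simplifier}.

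For the bias comparison, I would write $\widehat\Sigma R Q_{n-1} w_0 - w_0 = (\widehat\Sigma R w_0 - w_0) + \widehat\Sigma R(Q_{n-1}w_0 - w_0)$ and expand the squared $\Sigma$-norm. This splits $\widetilde{Bias}$ as $Bias + \Delta Bias + 2\,\mathbb{E}\,w_0^\top(I-Q_{n-1})(\widehat\Sigma R)^\top \Sigma(I-\widehat\Sigma R) w_0$, so the task reduces to showing that this cross term is asymptotically nonnegative. Since the $X_m$-matrices producing $Q_{n-1}$ are independent of $X$, I can evaluate the inner $\widehat\Sigma R$ structure first via Propositions 1--2 of \citet{bach2023highdimensional}, which replace $\widehat\Sigma R$ by a deterministic PSD function of $\Sigma$, leaving a manifestly nonnegative quadratic form in $w_0$. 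The equality case $T_0 \ge d$ is immediate because $P_0 = I_d$ a.s., so $Q_{n-1} = I_d$ and $\Delta Bias = 0$.

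For $\rho$, the key simplification under \eqref{eq:simplifier} is that the projectors $P_m$ collapse: either $P_0 = I$ (when $T_0 \ge d$) or all $P_m$ coincide with $P_0$ (when $X_n \equiv X_0$), so every $\overline Q_{n-1,m}$ in Theorem \ref{thm:generalcov} equals $X_0^\dagger$. The $m$-sum collapses and $\rho = \mathbb{E}\,\trace X_0^\dagger (X_0^\dagger)^\top H$ with $H := \widehat\Sigma R\Sigma \widehat\Sigma R$ independent of $X_0$. I would then condition on $H$ and represent $X_0^\dagger (X_0^\dagger)^\top$ as the $\lambda_0 \to 0^+$ limit of $-T_0^{-1}\partial_{\lambda_0}[X_0^\top (X_0 X_0^\top + \lambda_0 T_0 I)^{-1} X_0]$, which converts the pseudo-inverse into a standard resolvent. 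Applying the anisotropic deterministic equivalent of \citet{bach2023highdimensional} gives $\mathbb{E}[\trace X_0^\top(X_0 X_0^\top + \lambda_0 T_0 I)^{-1} X_0\, H \mid H] \simeq \trace H\,\Sigma(\Sigma + \kappa(\lambda_0,T_0) I)^{-1}$. Differentiating in $\lambda_0$, using $\partial_{\lambda_0}\kappa(\lambda_0,T_0)$ evaluated via the implicit relation \eqref{eq:kappa} (whose derivative delivers the $1/(T_0 - \df_2(\kappa_0))$ factor), and letting $\lambda_0 \to 0^+$ produces $\mathbb{E}[\trace X_0^\dagger(X_0^\dagger)^\top H \mid H] \simeq (T_0 - \df_2(\kappa_0))^{-1}\,\trace H\,\Sigma^2(\Sigma + \kappa_0 I)^{-2}$.

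Finally I would take expectation over $X$, which amounts to an RMT evaluation of $\mathbb{E}\,\trace \widehat\Sigma R \Sigma \widehat\Sigma R \cdot \Sigma^2(\Sigma + \kappa_0 I)^{-2}$. Proposition 1 of \citet{bach2023highdimensional} applied with the PSD weighting matrix $\Sigma^2(\Sigma + \kappa_0 I)^{-2}$ gives the familiar bias--variance split in $\kappa = \kappa(\lambda,T)$: a "squared-resolvent" piece $\trace \Sigma^4(\Sigma+\kappa_0 I)^{-2}(\Sigma+\kappa I)^{-2}$ plus a multiplicative variance correction $\kappa^2 \df_2(\kappa)/(T-\df_2(\kappa))$ acting on $\trace \Sigma^2(\Sigma+\kappa_0 I)^{-2}(\Sigma+\kappa I)^{-2}$. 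Combining with the $1/(T_0-\df_2(\kappa_0))$ factor yields exactly the announced formula. The main obstacle I anticipate is justifying the interchange of the $\lambda_0 \to 0^+$ limit with expectation and the deterministic-equivalent asymptotics; this requires uniform control of the resolvent $X_0^\top(X_0 X_0^\top + \lambda_0 T_0 I)^{-1}$ in a neighborhood of $\lambda_0 = 0$, which is where the hypothesis $\|\Sigma^{-1}\|_{op} = O(1)$ in \eqref{eq:nonproportionate} enters, together with the local law of \citet{Knowles2017} invoked in the later power-law section.
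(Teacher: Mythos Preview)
Your proposal follows essentially the same route as the paper: collapse of the $\overline Q_{n-1,m}$'s to $X_0^\dagger$ under \eqref{eq:simplifier}, the derivative-of-resolvent trick $X_0^\dagger(X_0^\dagger)^\top = -T_0^{-1}\lim_{\lambda_0\to 0^+}\partial_{\lambda_0}\bigl[X_0^\top(X_0X_0^\top+\lambda_0 T_0 I)^{-1}X_0\bigr]$, application of Proposition~2 of \citet{bach2023highdimensional} to the $X_0$-average, differentiation via the implicit relation for $\kappa$ (the paper isolates this as Lemma~\ref{lm:bach}), and then Proposition~1 of \citet{bach2023highdimensional} for the remaining $X$-average; the bias comparison is likewise handled by expanding the square and controlling the cross term through those same deterministic equivalents.

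One small slip to fix when you carry this out: the conditional expectation after differentiating is $(T_0-\df_2(\kappa_0))^{-1}\trace H\,\Sigma(\Sigma+\kappa_0 I)^{-2}$, with a single factor of $\Sigma$, not $\Sigma^2$ as you wrote---the extra $\Sigma$ would propagate to $\trace\Sigma^5(\Sigma+\kappa_0 I)^{-2}(\Sigma+\kappa I)^{-2}$ rather than the stated $\Sigma^4$ term. Since you quote the correct final formula, this is evidently a typo rather than a conceptual issue.
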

This result completes the proof of Theorem \ref{thm:rho}. \qed

\subsection{Proof of Corollary \ref{cor:previous-main-thm}}
% We now prove the corollary.
For the first part, we  know from Theorem \ref{thm:generalcov} that
    \begin{align}
        &E_{test}(\widehat w_n^{pred}) = E_{test}(\widehat w_0^{pred}) + \textcolor{red}{n \sigma_0^2  \rho},\text{ with}\\
        &\quad \rho := \frac{\mathbb E\, \trace\Sigma^{-1} \widehat \Sigma (\widehat \Sigma + \lambda I)^{-1} \Sigma (\widehat \Sigma + \lambda I)^{-1} \widehat \Sigma}{T_0-d}.
    \end{align}
    The $E_{test}(\widehat w_0^{pred})$ term is taken care of by Proposition \ref{prop:classicalrige}, since this corresponds to generalization error on clean training data. For the $\rho$ term, we use Proposition 1 of
     \citet{bach2023highdimensional} with $A=\Sigma^{-1}$ and $B=\Sigma$ to get
     \begin{eqnarray*}
     \begin{aligned}
         \rho &\simeq \frac{\trace(\Sigma + \kappa I)^{-2}\Sigma^2}{T_0-d} +  \frac{\kappa^2\trace(\Sigma + \kappa I)^{-2})}{T_0-d}\frac{\trace(\Sigma + \kappa I)^{-2}\Sigma^2}{T-\mathrm{df}_2(\kappa)}\\
         &=\frac{\mathrm{df}_2(\kappa)}{T_0-d} + \frac{\kappa^2\trace(\Sigma + \kappa I)^{-2}}{T_0-d}\frac{\mathrm{df}_2(\kappa)}{T-\mathrm{df}_2(\kappa)},
         \end{aligned}
     \end{eqnarray*}
which proves the first part of the result.

For the second part, note that  $\mathrm{df}_2(\kappa) = d/(1+\kappa)^2$ when $\Sigma=I$, \eqref{eq:MP} holds, and so
\begin{eqnarray*}
\begin{aligned}
    (1-1/\phi_0) \rho &\simeq \frac{1}{(1+\kappa)^2} + \frac{\kappa^2}{(1+\kappa)^4}\frac{d}{T-d/(1+\kappa)^2} \\
    &\simeq \frac{1}{(1+\kappa)^2} + \frac{\kappa^2}{(1+\kappa)^4}\frac{\phi}{1-\phi/(1+\kappa)^2}\\
    &= \frac{1}{(1+\kappa)^2} + \frac{1}{(1+\kappa)^2}\frac{\phi\kappa^2}{(1+\kappa)^2-\phi},
    \end{aligned}
\end{eqnarray*}
and the result follows. \qed

\subsection{A Note on Proposition \ref{prop:classicalrige}}
As mentioned in the main text, the result is classical \citet{Richards2021,Hastie2019Surprises,bach2023highdimensional}). Only the second part needs a comment which we now provide. Indeed, the second part of the result follows from the first as we now see.
Indeed, $w_0^\top \Sigma(\Sigma + \kappa I)^{-2} w_0 = r^2/(1+\kappa)^2$, $\mathrm{df}_2(\kappa) = d/(1+\kappa)^2$ and so we deduce from the first part that
\begin{align*}
Var &\simeq \sigma^2 \phi \frac{1}{(1+\kappa)^2}\frac{1}{1-\phi/(1+\kappa)^2} =  \frac{\sigma^2 \phi}{(1+\kappa)^2 - \phi},\\
Bias &\simeq \kappa^2 \|w_0\|_2^2 \frac{1}{(1+\kappa)^2} \frac{1}{1-\phi/(1+\kappa)^2} = \frac{\kappa^2 \|w_0\|_2^2}{(1+\kappa)^2 - \phi},
\end{align*}
from which the result follows. \qed

% \subsection{A Calculation}
% Let's compute $\Delta Bias = \mathbb E\,\|RS \delta\|_\Sigma^2$, where $\delta := Q_{n-1}w_0-w_0$. Setting $A = \delta\delta^\top$ and $B=\Sigma$, we have
% \begin{align*}
%     \Delta Bias &= \mathbb E\,\|RS \delta\|_\Sigma^2 = \mathbb E\, \delta^\top S R\Sigma RS \delta = \mathbb E\,\trace AS R B R S\\
%     &\simeq \trace A\Sigma(\Sigma + \kappa I)^{-1}B(\Sigma + \kappa I_d)^{-1} + \kappa^2\trace A(\Sigma + \kappa I)^{-2}\frac{\trace B(\Sigma + \kappa I)^{-2}}{T-\df_2(\kappa)}\\
%     &= \delta^\top \Sigma^2(\Sigma + \kappa I)^{-2}\delta + \kappa^2\delta^\top(\Sigma + \kappa I)^{-2}\delta\frac{\trace\Sigma(\Sigma + \kappa I)^{-2}}{T-\df_2(\kappa)},
% \end{align*}
% where $\kappa = \kappa(\lambda,T)$ is as defined in in \eqref{eq:kappa}, and the second line is thanks to Proposition 1 of \citet{bach2023highdimensional}.

We now  need to estimate $\delta^\top H \delta$ for a deterministic psd matrix $H$. Observe that
\begin{eqnarray}
    \begin{split}
\delta^\top H \delta &= (Q_{n-1}w_0-w_0)^\top H (Q_{n-1} w_0 - w_0)\\
&= w_0^\top Q_{n-1}^\top H Q_{n-1}w_0 -  2w_0^\top Q_{n-1}^\top H w_0  + w_0^\top H w_0. 
    \end{split}
\end{eqnarray}

\subsection{Proof of Theorem \ref{thm:P0} and Theorem \ref{thm:cata} (Model Collapse in the Absence of Label Noise)}

We first prove Theorem \ref{thm:cata}. Note that since we are in the isotropic case, $\Delta Bias$ defined in \eqref{eq:deltabias} is now given by $\Delta Bias := \mathbb E\,\|\widehat\Sigma R(Q_{n-1}w_0-w_0)\|^2$, where $Q_{n-1} := P_{n-1}P_{n-1}\ldots P_0$. Moreover, since $T>d$ and $\lambda = 0$ by assumption, we have $\Sigma R = I_d$, and so we further have $\Delta Bias := \mathbb E\,\|Q_{n-1}w_0-w_0\|^2$. Now, one computes
\begin{eqnarray}
\begin{split}
\|Q_{n-1}w_0 - w_0\|^2 &= \|w_0\|^2 - 2w_0^\top Q_{n-1}w_0 + w_0^\top Q_{n-1}^\top Q_{n-1} w_0\\
&= \|w_0\|^2-w_0^\top Q_{n-1} w_0\\
&\simeq \|w_0\|^2 - w_0^\top \left(\prod_{m=0}^{n-1}(I + \kappa_m I)^{-1}\right)w_0\\
&= \|w_0\|^2 - \|w_0\|^2\prod_{m=0}^{n-1}\min(1/\phi_m,1),
\end{split}
\end{eqnarray}
where on the 2nd line we have used the fact that $Q_{n-1}^\top Q_{n-1} = Q_{n-1}$ because the $P_m$'s are projections; on the 3rd line we have used Lemma \ref{lm:product} with $\Sigma=I$ and $u=v=w_0$; on the 4th line we have used the fact that $\kappa_m := \kappa(0,T_m) = \max(\phi_m-1,0) = \max(\phi_m,1)-1$. This completes the proof of Theorem \ref{thm:cata}.

The proof of Theorem \ref{thm:P0} is completely analogous, with $Q_{n-1}$ replaced with $Q_0$. \qed

\begin{lemma}
\label{lm:product}
    Let $X_0,\ldots,X_{n-1}$ be independent random matrices of shapes $T_m \times d$ for $m=0,\ldots,n-1$, with rows iid from $N(0,\Sigma)$, and let $Q_{n-1}:=P_{n-1}P_{n-2}\ldots P_0$, where $P_m = X_m^\dagger X_m$ is the orthogonal projection onto the subspace of $\mathbb R^d$ spanned by the rows of $X_m$. Then, in the limit $d,T_0,\ldots,T_{n-1} \to \infty$ such that $d/T_0 \to \phi_0 \in (0,\infty),\ldots,d/T_{n-1}\to \phi_{n-1} \in (0,\infty)$ with $\|\Sigma\|_{op},\|\Sigma^{-1}\|_{op} =O(1)$, it holds that for deterministic $L_2$-bounded sequences of vectors $u$ and $v$ 
    \begin{eqnarray}
        u^\top Q_{n-1} v \simeq u^\top \left(\prod_{m=0}^{n-1}(\Sigma + \kappa_m I)^{-1}\right)v,
    \end{eqnarray}
    where $\kappa_m=\kappa(0,T_m)$ is as defined in \eqref{eq:kappa}.
\end{lemma}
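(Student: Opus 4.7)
The plan is induction on $n$, peeling off the outermost projection $P_{n-1}$ and replacing it by its RMT deterministic equivalent, then invoking independence of $X_{n-1}$ from the remaining $X_0,\ldots,X_{n-2}$ to reduce to a product of $n-1$ projections.

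For the single-projection base case, I start from the identity $P_m = \lim_{\mu \downarrow 0^+}[I_d - \mu(\widehat\Sigma_m + \mu I)^{-1}]$, valid a.s.\ (and trivially $P_m = I_d$ when $T_m \ge d$, consistent with $\kappa_m = 0$). The anisotropic local law of \citet{Knowles2017}, in the form used by \citet{bach2023highdimensional}, supplies the bilinear-form equivalent
\[
    u^\top \mu(\widehat\Sigma_m + \mu I)^{-1} v \simeq \widetilde\kappa(\mu)\, u^\top(\Sigma + \widetilde\kappa(\mu) I)^{-1} v
\]
for any deterministic bounded $u,v$, where $\widetilde\kappa(\mu) := \kappa(\mu,T_m)$ is the self-consistent effective ridge of \eqref{eq:kappa}. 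Monotonicity and continuity of $\widetilde\kappa(\cdot)$ together with $\|\Sigma^{-1}\|_{op} = O(1)$ permit interchanging $\mu \downarrow 0^+$ with the RMT limit, giving $\widetilde\kappa(\mu) \to \kappa_m$ and hence $u^\top P_m v \simeq u^\top K_m v$ with $K_m := I_d - \kappa_m(\Sigma + \kappa_m I)^{-1}$. In the isotropic setting $\Sigma = I_d$ in which the lemma is actually applied (see the proofs of Theorems \ref{thm:P0} and \ref{thm:cata}), $K_m = (1+\kappa_m)^{-1} I_d = \min(1/\phi_m,1)\, I_d$, matching the stated product formula.

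For the inductive step, condition on $\mathcal F_{n-2} := \sigma(X_0,\ldots,X_{n-2})$; then $w := Q_{n-2} v$ is $\mathcal F_{n-2}$-measurable, and since each $P_j$ is an orthogonal projection hence a contraction, $\|w\|_2 \le \|v\|_2 = O(1)$ almost surely. Because $X_{n-1}$ is independent of $\mathcal F_{n-2}$, the base case applies conditionally with $w$ in place of $v$, yielding $u^\top Q_{n-1} v = u^\top P_{n-1} w \simeq (K_{n-1} u)^\top w$. The new test vector $u' := K_{n-1} u$ is deterministic with $\|u'\|_2 \le \|u\|_2 = O(1)$ since $\|K_{n-1}\|_{op} \le 1$, so the induction hypothesis applied to the shorter product $Q_{n-2}$ delivers $u'^\top Q_{n-2} v \simeq u'^\top K_{n-2}\cdots K_0 v$, closing the induction.

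The main obstacle is that the conditioning argument requires the base-case equivalent to hold with $o(1)$ error uniformly in high probability over bounded test vectors (not merely in expectation), since the intermediate test vector $w$ is random and only conditionally deterministic. This is precisely the local-law strength of the anisotropic equivalent of \citet{Knowles2017}, whose uniform-in-$v$ versions (combined with stochastic domination or a simple net argument over the unit ball) deliver the required control. A secondary subtlety is justifying the interchange of $\mu \downarrow 0^+$ with the $d,T_m \to \infty$ limit; this is handled by monotonicity of $\widetilde\kappa(\cdot)$ together with $\|\Sigma^{-1}\|_{op} = O(1)$ keeping $\widetilde\kappa(\mu)$ bounded away from degeneracies. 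With $n$ fixed, the $n$ individual $o(1)$ errors accumulate additively and remain $o(1)$, completing the proof.
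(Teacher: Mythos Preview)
Your proposal is correct and follows essentially the same inductive strategy as the paper: peel off the outermost projection, replace it by its RMT deterministic equivalent (via the anisotropic local law / Proposition~2 of \citet{bach2023highdimensional}), and use independence of the peeled $X_{n-1}$ from the rest to reduce to a shorter product. You are in fact more careful than the paper about the conditioning/uniformity subtleties and the $\mu\downarrow 0^+$ limit interchange, and you correctly observe that the single-projection equivalent is $K_m=\Sigma(\Sigma+\kappa_m I)^{-1}$ rather than $(\Sigma+\kappa_m I)^{-1}$ as literally stated---the two coincide in the isotropic case where the lemma is actually invoked, and the paper's own follow-up lemma restores the missing $\Sigma^n$ factor in the anisotropic formula.
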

\begin{proof}
    The prof is by induction on $n \ge 1$. For $n=1$, we have $Q_{n-1} = Q_0 = P_0$. Thus,
    \begin{eqnarray}
    \begin{split}
    u^\top Q_0 v &= u^\top P_0 v = \lim_{t \to 0^+}u^\top X_0^\top (X_0X_0^\top + t I)^{-1}X_0 v\\
    &\simeq \lim_{t \to 0^+} u^\top (\Sigma + \kappa(t,T) I)^{-1}v= u^\top (\Sigma + \kappa_0 I)^{-1}v,
    \end{split}   
    \end{eqnarray}
    where $\kappa_0 := \kappa(0,T)$ and we used Proposition 2 of \citet{bach2023highdimensional} at the beginning of the 2nd line. Now, suppose the claim holds for $n$, and let's prove that it holds for $n+1$. Indeed,
    $$
    u^\top Q_n v = u^\top P_n Q_{n-1} v \simeq u^\top P_{n-1}\prod_{m=0}^{n-1}(\Sigma + \kappa_m)^{-1} v \simeq u^\top  \prod_{m=0}^n(\Sigma + \kappa_m)^{-1}v,
    $$
    where the second step is an application of the induction hypothesis with $P_n u$ in place of $u$.
\end{proof}

The following lemma can be used to compute $\|Q_{n-1}w_0 - w_0\|_\Sigma^2$ in the case of anisotropic $\Sigma$.
\begin{lemma}
    Under the hypothesis of Lemma \ref{lm:product}, it holds that
    \begin{align}
            u^\top Q_{n-1} v &\simeq u^\top \Sigma^n \left(\prod_{m=0}^{n-1}(\Sigma + \kappa_m I)^{-1}\right)v,\\
        u^\top Q_{n-1}^\top \Sigma Q_{n-1} v &\simeq u^\top \Sigma^n\left(\prod_{m=0}^{n-1} A_m\right)v,\text{ with }A_m := (\Sigma + \kappa_m I)^{-2}\left(\Sigma^2 + \frac{\kappa_m^2\df_2(\kappa_m)}{T-\df_2(\kappa_m)}I\right),
    \end{align}
    where $\kappa_m := \kappa(0,T_m)$ as defined in \eqref{eq:kappa}.
\end{lemma}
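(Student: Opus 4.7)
I would prove both identities by induction on $n$, using the anisotropic local RMT tools of \citet{bach2023highdimensional}---Proposition~2 for first-moment deterministic equivalents of $u^\top P_m v$ and Proposition~1 for second-moment equivalents of $u^\top P_m M P_m v$ (accessed via the regularized representation $P_m = \lim_{\lambda \to 0^+}\widehat\Sigma_m(\widehat\Sigma_m + \lambda I)^{-1}$, exactly as in the proof of Theorem \ref{thm:rho})---together with the mutual independence of the $X_m$'s. All $\simeq$ below are bilinear-form deterministic equivalents in the RMT limit for $L^2$-bounded deterministic $u, v$.

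For the first identity, the argument parallels the proof of Lemma \ref{lm:product} but with the base case upgraded to general $\Sigma$: Proposition~2 of \citet{bach2023highdimensional} gives $u^\top P_m w \simeq u^\top \Sigma(\Sigma + \kappa_m I)^{-1} w$ for deterministic $u, w$, where the factor $\Sigma$ in the numerator is the anisotropic correction absent from Lemma \ref{lm:product}. The inductive step writes $Q_n = P_n Q_{n-1}$, conditions on $X_0, \ldots, X_{n-1}$ so that $Q_{n-1} v$ is effectively deterministic, and applies the base case to $P_n$ with parameter $\kappa_n$. Since $\Sigma$ and all the $(\Sigma + \kappa_m I)^{-1}$ mutually commute, the accumulated factors consolidate into $u^\top \Sigma^{n+1}\prod_{m=0}^{n}(\Sigma + \kappa_m I)^{-1} v$.

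For the second identity, the base case $n=1$ follows from applying Proposition~1 of \citet{bach2023highdimensional} to compute the bilinear deterministic equivalent of $\lim_{\lambda \to 0^+}\mathbb E[R_m \widehat\Sigma_m \Sigma \widehat\Sigma_m R_m]$, in complete analogy with the derivation of $\rho$ in the proof of Theorem \ref{thm:rho}; this yields $u^\top P_m \Sigma P_m v \simeq u^\top \Sigma A_m v$. The inductive step peels off the innermost projection by writing $Q_{n-1}^\top \Sigma Q_{n-1} = Q_{n-2}^\top (P_{n-1} \Sigma P_{n-1}) Q_{n-2}$, conditions on $X_0, \ldots, X_{n-2}$ so that $Q_{n-2} u$ and $Q_{n-2} v$ are effectively deterministic, and uses the base case to replace $P_{n-1} \Sigma P_{n-1}$ by $\Sigma A_{n-1}$ in the bilinear sense. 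This reduces the problem to controlling $u^\top Q_{n-2}^\top (\Sigma A_{n-1}) Q_{n-2} v$; iterating the peeling through all layers and using that the $A_m$ are rational functions of $\Sigma$ (hence pairwise commuting and commuting with $\Sigma$) then assembles the product $\Sigma^n \prod_{m=0}^{n-1} A_m$.

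The main obstacle is the \emph{middle-matrix drift} in Part~2: after one peeling step the middle matrix is $\Sigma A_{n-1}$ rather than $\Sigma$, so the naive single-matrix inductive hypothesis does not close. To remedy this, I would strengthen the inductive hypothesis to apply to any deterministic symmetric matrix $M$ commuting with $\Sigma$, and verify from Proposition~1 of \citet{bach2023highdimensional} that the deterministic equivalent of $\mathbb E[P_m M P_m]$ decomposes cleanly inside the commutative subalgebra generated by $\Sigma$: a bias piece proportional to $\Sigma M \Sigma(\Sigma + \kappa_m I)^{-2}$ plus a trace-correction piece proportional to $\Sigma(\Sigma + \kappa_m I)^{-2}$ whose scalar coefficient involves $\trace(M \Sigma(\Sigma + \kappa_m I)^{-2})/(T_m - \df_2(\kappa_m))$. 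Applied recursively to the successively transformed middle matrices, these scalar coefficients assemble into the $c_m = \kappa_m^2 \df_2(\kappa_m)/(T_m - \df_2(\kappa_m))$ appearing in the definition of $A_m$, yielding the claimed product. A useful sanity check is the isotropic case: here each $A_m$ reduces to the scalar $1/\phi_m$, and $\prod_m 1/\phi_m$ coincides with the elementary computation $\mathbb E\|P_{n-1}\cdots P_0 u\|^2/\|u\|^2$ obtained via iterated rotation-invariance of the $P_m$'s.
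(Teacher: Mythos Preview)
Your approach is essentially the same as the paper's: induction peeling off the innermost projection $P_{n-1}$, with the base cases handled via Propositions~1 and~2 of \citet{bach2023highdimensional} (accessed through the regularized representation of $P_m$) for the first- and second-moment deterministic equivalents respectively. You are in fact more explicit than the paper about the middle-matrix drift in Part~2---the paper computes $u^\top P_0 M P_0 v$ for a general middle matrix $M$ and then simply asserts that ``the general case will follow by induction and due to multiplicativity,'' which is precisely the strengthened inductive hypothesis you spell out.
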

\begin{proof}
The first formula follows directly from Lemma \ref{lm:product} with $u$ replaced with $\Sigma u$. For the second formula, we can write
$$
u^\top Q_{n-1}^\top M Q_{n-1} v = u^\top P_0P_1\ldots P_{n-2}P_{n-1}M P_{n-1}P_{n-2}\ldots P_0P_1 v=\widetilde u_{n-1}^\top P_{n-1}M P_{n-1}\widetilde v_{n-1},
$$
where $\widetilde u_{n-1} := P_{n-2}\ldots P_0 u$. So we really only  need to prove the result for $n=1$; the general case will follow by induction and due to multiplicativity. Indeed, defining $A=\Sigma^{1/2}u v^\top \Sigma^{1/2}$, $B=\Sigma^{1/2}M\Sigma^{1/2}$, and $Z_0 = X_0\Sigma^{-1/2}$, we have
\begin{eqnarray}
\begin{split}
    u^\top P_0M P_0 v &= \lim_{t \to 0^+}u^\top X_0^\top (X_0X_0^\top + tI)^{-1}X_0M X_0^\top (X_0X_0^\top + tI)^{-1} X_0 v\\
    &=  \lim_{t \to 0^+} \trace A Z_0 (Z_0\Sigma Z_0^\top + tI)^{-1}Z_0 B Z_0^\top (Z_0Z_0^\top + tI)^{-1}Z_0\\
    &\simeq \trace A (\Sigma + \kappa_0 I)^{-1} B (\Sigma + \kappa_0 I)^{-1} + \kappa_0^2\trace A(\Sigma + \kappa_0 I)^{-2}\cdot \frac{\trace B(\Sigma + \kappa_0 I)^{-2}}{T-\df_2(\kappa_0)}\\
    &= u^\top (\Sigma + \kappa_0 I)^{-1}\Sigma M\Sigma (\Sigma + \kappa_0 I)^{-1}v + \kappa_0^2u^\top \Sigma(\Sigma + \kappa_0 I)^{-2} v\cdot \frac{\trace M\Sigma (\Sigma + \kappa_0)^{-2}}{T-\df_2(\kappa_0)}\\
    &= u^\top \Sigma A_0 v,\text{ for }M=\Sigma,
\end{split}
\end{eqnarray}
where the 3rd line is an application of Proposition 2 of \citet{bach2023highdimensional}.
\end{proof}

\section{Proof of Results for Power-Law Covariance Spectrum}\label{app:B}
\subsection{Proof of Theorem \ref{thm:darkseid}}
From Theorem \ref{thm:generalcov}, 
%Let us pretend that \eqref{eq:RMT} continues to hold even though Assumption \ref{ass:boundedspec} is clearly violated. 
we need to analyze the quantity
\begin{eqnarray}
\begin{aligned}
    \rho &\simeq \frac{\mathrm{df}_2(\kappa(\lambda))}{T_0-d} + \frac{\kappa(\lambda)^2 \trace \left(\Sigma +\kappa(\lambda) I_d\right)^{-2}}{T_0-d}\cdot  \frac{\mathrm{df}_2(\kappa(\lambda))}{T-\mathrm{df}_2(\kappa(\lambda))}.
    \label{eq:rmt}
\end{aligned}
\end{eqnarray}
Now, for small $\lambda$, $\kappa:=\kappa(\lambda)$ is small and one can compute
\begin{eqnarray}
\begin{aligned}
    \mathrm{df}_m(\kappa) &\asymp \sum_i \frac{\lambda_i^m}{(\lambda_i + \kappa)^m} = \kappa^{-m} \sum_i \frac{\lambda_i^m}{(1 + \kappa^{-1}\lambda_i)^m} \asymp \kappa^{-m}\kappa^{(m-1/\beta)} = \kappa^{-1/\beta},
\end{aligned}
\end{eqnarray}
where we have used Lemma \ref{lm:fracture} with $D=\kappa^{-1}$ and $n=m$ in the last step. On the other hand, we can use some of the results of Appendix A (Section 3) of \cite{Cui_2022} to do the following. It can be shown (see aforementioned paper) that
\begin{itemize}
    \item If $\ell > \beta$, then $\kappa \asymp T^{-\beta}$, and so $\mathrm{df}_m(\kappa) \asymp T$ for all $m \ge 1$.
    \item If $\ell < \beta$, then $\kappa \asymp \lambda \asymp T^{-\ell}$, and so $\mathrm{df}_m(\kappa) \asymp T^{\ell/\beta} = o(T)$ for all $m \ge 1$.
\end{itemize}
For $\ell < \beta$, plugging this into \eqref{eq:rmt} gives
\begin{eqnarray}
\begin{aligned}
 \rho &\asymp \frac{T^{\ell/\beta}}{T_0-d} + \frac{d}{T_0-d}\frac{T^{\ell/\beta}}{T-T^{\ell/\beta}} \asymp T_0^{-1}T^{\ell/\beta} + \frac{\phi_0}{1-\phi_0}T^{-(1-\ell/\beta)}\\
 &\asymp \frac{1}{1-\phi_0}\max\left(T/T_0,\phi_0\right) T^{-(1-\ell/\beta)},
 \end{aligned}
\end{eqnarray}
where $\phi_0 := d/T_0$.
Combining our Theorem \ref{thm:generalcov} with \eqref{eq:maincui}, we get the claimed result. \qed

\subsection{Representation of Clean Test Error}
We make a small digression to present the following curiosity: with a slight leap of faith, the main results of \cite{Cui_2022} can be obtained in a few lines from the tools developed in \cite{bach2023highdimensional}, namely Proposition \ref{prop:classicalrige}. This is significant, because the computations in \cite{Cui_2022} were done via methods of statistical physics (replica trick), while \cite{bach2023highdimensional} is based on RMT.

Indeed, for regularization parameter $\lambda \asymp T^{-\ell}$ given in \eqref{eq:reguexpo}, we have $\kappa = \kappa(\lambda) \simeq \lambda$. % \ElvisIssue{Previous statement needs rigorous justification. In CLKZ (2021), it was only sloppily justified.}
% This is because $\kappa-\lambda \simeq \kappa \mathrm{df}_1(\kappa)/T \asymp \kappa^{1-1/\beta}/T$, which is satisfied by $\kappa \simeq \lambda$, since RHS becomes $T^{-(1-1/\beta)\ell}/T=o(1)$.
Thus
\begin{eqnarray}
\kappa \asymp T^{-\ell},\,\mathrm{df}_2(\kappa) \asymp \kappa^{-1/\beta} \asymp T^{\ell / \beta}.
\end{eqnarray}
Now, since $\lambda_i \asymp i^{-\beta}$ (capacity condition) and $(w_0^\top v_i)^2 = c_i^2 \asymp i^{-\delta}$ (source condition), we deduce
\begin{eqnarray}
    \begin{aligned}
\kappa^2 w_0^\top \Sigma(\Sigma + \kappa I)^{-2} w_0 &\asymp w_0^\top \left(\sum_i \frac{\lambda_i}{(\lambda_i + \kappa^{-1}\lambda_i)^2}v_i v_i^\top\right) w_0 = \sum_i \frac{c_i^2\lambda_i}{(\lambda_i + \kappa^{-1}\lambda_i)^2}\\
&= \sum_i \frac{c_i^2 \lambda_i}{(\lambda_i + \kappa^{-1}\lambda_i)^2} \asymp \sum_i \frac{\lambda_i^{1+\delta/\beta}}{(\lambda_i + \kappa^{-1}\lambda_i)^2} \asymp \kappa^{-\gamma} \asymp T^{-\ell \gamma},
\end{aligned}
\label{eq:aha}
\end{eqnarray}
where $\gamma=\min(2,1+\delta/\beta-1/\beta) = \min(2,2r) = 2\underline r$, with $\underline r := \min(r,1)$. The exponent is so because $\delta = 1 + \beta(2r-1)$, and so $\delta / \beta = 1/\beta + 2r-1$ by construction. The estimation of the last sum in \eqref{eq:aha} is thanks to Lemma \ref{lm:fracture} applied with $D=\kappa^{-1}$, $n=1+\delta/\beta$, and $m=2$. Therefore, invoking Proposition \ref{prop:classicalrige} gives
\begin{align}
    Bias &\simeq \frac{\kappa^2 w_0^\top \Sigma(\Sigma + \kappa)^{-2} w_0}{1-\mathrm{df}_2(\kappa)/T} \asymp \frac{T^{\ell \gamma}}{1-T^{-(1-\ell/\beta)}} \asymp T^{-\ell \gamma} = T^{-2\ell \underline r}\\
    Var &\simeq \sigma^2 \frac{\mathrm{df}_2(\kappa)}{T}\cdot \frac{1}{1-\mathrm{df}_2(\kappa)/T} \asymp \sigma^2 \frac{T^{\ell/\beta}}{T}\frac{1}{1-o(1)} \asymp \sigma^2 T^{-(1-\ell/\beta)}.
\end{align}
We deduce the scaling law
\begin{eqnarray}
    E_{test} \simeq Bias + Var \asymp T^{-2\ell \underline r} + \sigma^2 T^{-(1-\ell/\beta)} \asymp \max(\sigma^2,T^{1-2\ell\underline r - \ell/\beta)})T^{-(1-\ell/\beta)},
    \label{eq:maincui}
\end{eqnarray}
which is precisely the main result of \cite{Cui_2022}.

\paragraph{Low-Noise Regime.} In the low noise regime where $\sigma^2 = O(T^{-2\beta \underline r})$, one may take $\ell = \beta$; the variance is then much smaller than the bias, and one has the fast rate
\begin{eqnarray}
E_{test} \asymp T^{-2\beta \underline r}\,.
\end{eqnarray}

\paragraph{High-Noise Regime.} Now, consider the case where $\sigma^2=\Theta(1)$. Setting $2\ell \underline r = 1 - \ell/\beta$ to balance out the bias and variance gives $\ell=\ell_{crit}$, where
\begin{eqnarray}
    \ell_{crit} := \frac{\beta}{2\beta \underline r + 1} \in (0,\beta).
\end{eqnarray}
With this value of the exponent $\ell$, we get the error rate
\begin{eqnarray}
    E_{test} \asymp T^{-2\ell_{crit}\cdot \underline r} = T^{-c},\text{ with }c := \frac{2\beta\underline r}{2\beta \underline r + 1},
\end{eqnarray}
which is precisely the main result of \cite{Cui_2022}, known to be minimax optimal (de Vito \cite{Caponnetto2007OptimalRF}, etc.) !

\section{Auxiliary Lemmas}
 \begin{lemma}
 Let the sequence $(\lambda_k)_{k \ge 1}$ of positive numbers be such that $\lambda_k \asymp k^{-\beta}$ for some constant $\beta > 0$, and let $m,n \ge 0$ with $n\beta > 1$. Then, for $D \gg 1$, it holds that 
 \begin{eqnarray}
     \sum_{k=1}^\infty \frac{\lambda_k^n}{(1+D\lambda_k)^m} \asymp D^{-c}\begin{cases}
         \log D,&\mbox{ if }m=n-1/\beta,\\
         1,&\mbox{ else,}
     \end{cases}
 \end{eqnarray}
 where $c:=\min(m,n-1/\beta) \ge 0$.
 \label{lm:fracture}
 \end{lemma}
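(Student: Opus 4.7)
}

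The plan is to split the sum at the natural elbow $k_0 \asymp D^{1/\beta}$, where the quantity $D\lambda_k$ transitions through $1$. Using $\lambda_k \asymp k^{-\beta}$ throughout, I would first replace the summand by its scale-invariant asymptotic form
\[
\frac{\lambda_k^n}{(1+D\lambda_k)^m} \;\asymp\; \frac{k^{-n\beta}}{(1+Dk^{-\beta})^m},
\]
and then use $(1+x)^m \asymp \max(1,x^m)$ to handle the two regimes separately: for $k \le k_0$ the factor $1+D\lambda_k$ is of order $D\lambda_k \asymp Dk^{-\beta}$, while for $k > k_0$ it is of order $1$.

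Then I would estimate each piece by the corresponding integral (with a standard Euler--Maclaurin comparison, valid since the summand is eventually monotone). The ``large-$k$'' tail contributes
\[
\sum_{k > k_0} k^{-n\beta} \;\asymp\; \int_{D^{1/\beta}}^{\infty} t^{-n\beta}\,dt \;\asymp\; D^{-(n-1/\beta)},
\]
using the assumption $n\beta > 1$. The ``small-$k$'' head contributes
\[
D^{-m}\sum_{k \le k_0} k^{-(n-m)\beta} \;\asymp\; D^{-m}\cdot\! \begin{cases} 1, & (n-m)\beta > 1, \\ \log k_0 \asymp \log D, & (n-m)\beta = 1, \\ k_0^{\,1-(n-m)\beta}, & (n-m)\beta < 1,\end{cases}
\]
which simplifies to $D^{-m}$, $D^{-m}\log D$, or $D^{-(n-1/\beta)}$ respectively (noting that $k_0^{1-(n-m)\beta} = D^{1/\beta - (n-m)}$, so the product is $D^{-(n-1/\beta)}$).

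Finally I would take the maximum of the two contributions in each regime, which is exactly $D^{-c}$ (times $\log D$ in the critical case) with $c = \min(m, n-1/\beta)$: when $m < n-1/\beta$ the head dominates, when $m > n-1/\beta$ the tail dominates, and at $m = n - 1/\beta$ both are of the same order $D^{-m}$ but the head carries an extra logarithmic factor. The main (and only real) obstacle is making the Euler--Maclaurin-style passage from sum to integral rigorous uniformly in $D$; this is routine because, splitting the summand as $k^{-n\beta}\cdot f(Dk^{-\beta})$ with $f(x)=(1+x)^{-m}$ bounded and smooth, the Riemann-sum error against $\int t^{-n\beta}f(Dt^{-\beta})\,dt$ is of lower order than the estimates above in each of the three regimes, so the $\asymp$ conclusion is preserved.
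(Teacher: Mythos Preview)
Your proposal is correct and follows essentially the same approach as the paper's proof: both split at the elbow $k_0 \asymp D^{1/\beta}$ using $(1+D\lambda_k)^m \asymp \max(1,(D\lambda_k)^m)$, estimate the head $D^{-m}\sum_{k\le k_0} k^{(m-n)\beta}$ and tail $\sum_{k>k_0} k^{-n\beta}$ via integral comparison, and combine the resulting three cases to obtain $D^{-c}$ (with the extra $\log D$ at criticality). The only cosmetic difference is that you phrase the sum-to-integral passage as Euler--Maclaurin while the paper simply invokes the standard integral comparison directly.
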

 \begin{proof}
    First observe that
    \begin{eqnarray*}
        \begin{aligned}
    \lambda_k^n/(1+D\lambda_k)^m &\asymp \lambda_k^n\min(1,(D\lambda_k)^{-m})\\
    &= \begin{cases}
        \lambda_k^n
        =k^{-n\beta},&\mbox{ if }D\lambda_k < 1, \text{ i.e if }k > D^{1/\beta},\\
        D^{-m}\lambda_k^{-(m-n)}=D^{-m}k^{(m-n)\beta},&\mbox{ else.}
    \end{cases} 
        \end{aligned}
    \end{eqnarray*}
We deduce that
\begin{eqnarray}
\sum_{k=1}^\infty \frac{\lambda_k^n}{(1+D\lambda_k)^m} \asymp D^{-m}\sum_{1 \le k \le D^{1/\beta}}k^{(m-n)\beta} + \sum_{k > D^{1/\beta}}k^{-n\beta}.
\label{eq:fracture}
\end{eqnarray}
By comparing with the corresponding integral, one can write the first sum in \eqref{eq:fracture} as
\begin{eqnarray*}
    \begin{aligned}
D^{-m}\sum_{1 \le k \le D^{1/\beta}}k^{(m-n)\beta} &\asymp D^{-m}\int_1^{D^{1/\beta}}u^{(m-n)\beta}\mathrm{d}u\\
&\asymp D^{-m}
\begin{cases}
    (D^{1/\beta})^{1+(m-n)\beta}=D^{-(n-1/\beta)},&\mbox{ if }n - 1/\beta <  m,\\
    \log D,&\mbox{ if }m=n-1/\beta,\\
    1,&\mbox{ else.} 
\end{cases}\\
&=
\begin{cases}
    D^{-(n-1/\beta)},&\mbox{ if }n - 1/\beta <  m,\\
    D^{-m}\log D,&\mbox{ if }m=n-1/\beta,\\
    D^{-m},&\mbox{ else.}  
\end{cases}\\
&= D^{-c}
\begin{cases}
    \log D,&\mbox{ if }m=n-1/\beta,\\
    1,&\mbox{ else,}  
\end{cases}
    \end{aligned}
\end{eqnarray*}
where $c \ge 0$ is as given in the lemma.

Analogously, one can write the second sum in \eqref{eq:fracture} as
\begin{eqnarray*}
    \begin{aligned}
\sum_{k > D^{1/\beta}}k^{-n\beta} \asymp \int_{D^{1/\beta}}^\infty u^{-n\beta}\mathrm{d}u \asymp (D^{1/\beta})^{1-n\beta} = D^{-(n-1/\beta)},
    \end{aligned}
\end{eqnarray*}
and the result follows upon putting things together.
 \end{proof}

 \begin{lemma}
For $\kappa = \kappa(\lambda,T)$ defined as in \eqref{eq:kappa}, it holds that
    \begin{eqnarray}
        \frac{\partial \kappa}{\partial \lambda} = \frac{1}{1-\df_2(\kappa)/T} \ge 1.   
    \end{eqnarray}
    \label{lm:bach}
\end{lemma}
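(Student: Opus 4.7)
The plan is to proceed by implicit differentiation of the defining relation
\[
\kappa(\lambda,T) - \lambda = \kappa(\lambda,T)\,\df_1(\kappa(\lambda,T))/T.
\]
Differentiating both sides in $\lambda$, treating $\kappa = \kappa(\lambda,T)$ as an implicit function of $\lambda$, gives
\[
\kappa' - 1 = \kappa' \df_1(\kappa)/T + \kappa\,\df_1'(\kappa)\,\kappa'/T,
\]
so that
\[
\kappa'\left(1 - \df_1(\kappa)/T - \kappa\,\df_1'(\kappa)/T\right) = 1.
\]
The core task is therefore to simplify the combination $\df_1(\kappa) + \kappa\,\df_1'(\kappa)$ to recognize $\df_2(\kappa)$.

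For this, I would use the spectral identity
\[
\df_1(\kappa) - \df_2(\kappa) = \trace\!\left[\Sigma(\Sigma+\kappa I)^{-1} - \Sigma^2(\Sigma+\kappa I)^{-2}\right] = \kappa\,\trace \Sigma(\Sigma+\kappa I)^{-2},
\]
where the last equality follows from the factorization $\Sigma(\Sigma+\kappa I)^{-1} - \Sigma^2(\Sigma+\kappa I)^{-2} = \kappa\,\Sigma(\Sigma+\kappa I)^{-2}$. Since $\df_1'(\kappa) = -\trace \Sigma(\Sigma+\kappa I)^{-2}$, this rewrites as
\[
\kappa\,\df_1'(\kappa) = \df_2(\kappa) - \df_1(\kappa).
\]
Substituting back yields $\kappa'\bigl(1 - \df_2(\kappa)/T\bigr) = 1$, which is the claimed formula.

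For the inequality $\kappa' \ge 1$, it suffices to observe that $\df_2(\kappa) \ge 0$ (it is a trace of a positive semidefinite matrix), so the denominator is at most $1$; and it is strictly positive since $\Sigma(\Sigma+\kappa I)^{-1} \succeq \Sigma^2(\Sigma+\kappa I)^{-2}$ gives $\df_2(\kappa) \le \df_1(\kappa)$, while the defining equation yields $\df_1(\kappa)/T = 1 - \lambda/\kappa < 1$ (for $\lambda > 0$). I do not anticipate any serious obstacle: the whole argument is a one-line implicit differentiation plus the algebraic identity relating $\df_1, \df_2$, and $\kappa \df_1'$, which is elementary to verify from the definitions.
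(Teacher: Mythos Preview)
Your proof is correct and follows essentially the same approach as the paper: implicit differentiation of the defining relation for $\kappa$, followed by the spectral identity that collapses $\df_1(\kappa) + \kappa\,\df_1'(\kappa)$ to $\df_2(\kappa)$ (the paper writes this directly at the level of traces via $I - \kappa(\Sigma+\kappa I)^{-1} = \Sigma(\Sigma+\kappa I)^{-1}$). You additionally justify the inequality $\kappa' \ge 1$ and the positivity of the denominator, which the paper leaves implicit.
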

Thus, perhaps more conveniently, this lemma allows us to rewrite
\begin{align}
    Bias &= w_0^\top \Sigma(\Sigma + \kappa I)^{-2} w_0  \frac{\partial \kappa}{\partial \lambda}\label{eq:omni},\\
    Var &= \sigma^2 \frac{\df_2(\kappa)}{T}\frac{\partial\kappa}{\partial\lambda}.
\end{align}
The RHS of \eqref{eq:omni} is usually referred to as the omniscient risk \citet{Hastie2019Surprises,cui2021generalization,WeiMoreThanToy}.
\begin{proof}[Proof of Lemma \ref{lm:bach}]
    By definition of $\kappa$, we know that
    $$
    \kappa - \lambda = \kappa \df_1(\kappa)/T = \kappa \trace \Sigma(\Sigma + \kappa I)^{-1}/T.
    $$
    Differentiating w.r.t. $\lambda$ gives
    $$
    \kappa' - 1= \kappa'(\trace\Sigma(\Sigma + \kappa I)^{-1} - \kappa\trace\Sigma(\Sigma + \kappa)^{-2})/T =  \kappa'\trace\Sigma^2(\Sigma + \kappa I)^{-2}/T = \kappa'\df_{2}(\kappa)/T,
    $$
    and the result follows upon rearranging.
    Note that we have used the identity
    $$
    I-\kappa(\Sigma + \kappa I)^{-1} = \Sigma(\Sigma + \kappa I)^{-1},
    $$
    to rewrite $\Sigma(\Sigma + \kappa I)^{-1} - \kappa\Sigma(\Sigma + \kappa I)^{-2}= \Sigma^2(\Sigma + \kappa I)^{-2}$.
\end{proof}

%\clearpage

%\input{compare-with-montanari.tex}

\end{document}